\newtheorem{thm}{Theorem}
\newtheorem{prop}{Proposition}
\newtheorem{lem}{Lemma}
\newtheorem{cor}[thm]{Corollary}
\newtheorem{defn}[thm]{Definition}
\newtheorem{ass}[thm]{Assumption}
\def \y {\mathbf{y}}
\def \E {\mathbb{E}}
\def \x {\mathbf{x}}
\def \z {\mathbf{z}}
\def \u {\mathbf{u}}
\def \H {\mathcal{H}}
\def \w {\mathbf{w}}
\def \R {\mathbb{R}}
\def \Z {\mathcal{Z}}
\def \W {\mathcal{W}}
\def \N {\mathcal{N}}
\def \v {\mathbf{v}}
\def \b {\mathbf{b}}
\def \B {\mathcal{B}}
\def \wt {\widetilde{\w}}
\def \X {\mathcal{X}}
\def \P {\mathbb{P}}
\def \wh {\widehat{\w}}
\def \B {\mathcal{B}}
\DeclareMathOperator*{\argmin}{arg\,min}
\begin{document}

\title[Fast Rates of ERM and Stochastic Approximation]{Fast Rates of ERM and Stochastic Approximation:\\ Adaptive to Error Bound Conditions}

 \coltauthor{\Name{Mingrui Liu}$^\dagger$\Email{mingrui-liu@uiowa.edu}\\
	\Name{Xiaoxuan Zhang}$^\dagger$ \Email{xiaoxuan-zhang@uiowa.edu}\\
	\Name{Lijun Zhang}$^\ddagger$\Email{zljzju@gmail.com}\\
	\Name{Rong Jin}$^*$\Email{jinrong.jr@alibaba-inc.com}\\
	\Name{Tianbao Yang}$^\dagger$\Email{tianbao-yang@uiowa.edu}\\
	\addr$^\dagger$Department of Computer Science, The University of Iowa, Iowa City, IA 52242, USA \\
	\addr$\ddagger$National Key Laboratory for Novel Software Technology, Nanjing University, Nanjing, China\\
	\addr$^*$Alibaba Group, Seattle, USA\\
}
\maketitle
\vspace*{-0.5in}
\begin{center}{First version: February  10, 2018~\footnote{This is the date this version was circulated.}}\end{center}
\vspace*{0.2in}

\begin{abstract}
	Error bound conditions (EBC) are properties that  characterize the growth of an objective function when a point is moved away from the optimal set. They have  recently received increasing attention in the field  of optimization for developing optimization algorithms with fast convergence.  However,  the studies  of EBC in statistical learning are hitherto still limited.   The main contributions of this paper are two-fold. First,  we develop fast and intermediate rates of  empirical risk minimization (ERM) under EBC for risk minimization with Lipschitz continuous, and  smooth  convex random functions. Second, we establish fast and intermediate rates of an efficient stochastic approximation (SA) algorithm for risk minimization  with Lipschitz continuous random functions, which requires only one pass of $n$ samples and adapts to  EBC. For both approaches, the convergence rates span a full spectrum between $\widetilde O(1/\sqrt{n})$ and $\widetilde O(1/n)$ depending on the power constant in EBC, and could be even faster than $O(1/n)$ in special cases for ERM. Moreover, these  convergence rates are automatically adaptive without using any knowledge of EBC. Overall, this work not only strengthens the understanding of ERM for statistical learning but also brings  new fast stochastic algorithms for solving a broad range of statistical learning problems. 
	
\end{abstract}

\section{Introduction}
In this paper, we focus on the following stochastic convex optimization problems arising in statistical learning and many other fields: 
\begin{align}\label{eqn:opt}
	\min_{\w\in\W} P(\w) \triangleq \E_{\z\sim \P}[f(\w, \z)]
\end{align}
and more generally 
\begin{align}\label{eqn:opt2}
	\min_{\w\in\W} P(\w) \triangleq \E_{\z\sim \P}[f(\w, \z)] + r(\w)
\end{align}
where $f(\cdot, \z): \W\rightarrow\R$ is a random function depending on a random variable $\z\in\Z$ that follows a distribution $\P$, $r(\w)$ is a lower semi-continuous convex function. In statistical learning~\citep{Vapnik1998}, the problem above is also referred to as {\bf risk minimization} where $\z$ is interpreted as data, $\w$  is interpreted as a model (or hypothesis), $f(\cdot, \cdot)$ is interpreted as a loss function, and $r(\cdot)$ is a regularization. For example, in supervised learning one can take $\z=(\x, y)$ - a pair of feature vector $\x\in\X\subseteq\R^d$ and label $\y\in\mathcal Y$, $f(\w, \z) = \ell(\w(\x), y)$ - a loss function measuring the error of the prediction $\w(\x):\mathcal X\rightarrow \mathcal Y$ made by the model $\w$. Nonetheless, we emphasize that the risk minimization problem~(\ref{eqn:opt}) is more general than supervised learning and could be more challenging (c.f.~\citep{COLT:Shalev:2009}).  In this paper, we assume that $\W\subseteq\R^d$ is a compact and convex set. Let $\W_* = \arg\min_{\w\in\W}P(\w)$ denote the optimal set and $P_* = \min_{\w\in\W}P(\w)$ denote the optimal risk. 

There are two popular approaches for solving the risk minimization problem. The first one is by empirical risk minimization that minimizes the empirical risk defined over a set of $n$ i.i.d. samples drawn from the same distribution $\P$ (sometimes with a regularization term on the model). The second approach is called stochastic approximation that iteratively learns the model from random samples $\z_t\sim\P, t=1,\ldots, n$. Both approaches have been studied broadly and extensive results are available about the theoretical guarantee of  the two approaches in the machine learning and optimization community. A central theme in these studies is to bound the excess risk (or optimization error)  of a learned model $\wh$ measured by   $P(\wh) - P_*$, i.e.,   given a set of $n$ samples $(\z_1,\ldots, \z_n)$ how fast the learned model converges to the optimal model in terms of the excess risk.

A classical result about the excess risk bound  for the considered risk minimization problem is in the order of $\widetilde O(\sqrt{d/n})$~\footnote{$\widetilde O$ hides a poly-logarithmic factor of $n$.} and  $O(\sqrt{1/n})$ for ERM and SA, respectively, under appropriate conditions of the loss functions (e.g., Lipschitz continuity,  convexity)~\citep{Nemirovski:2009:RSA:1654243.1654247,COLT:Shalev:2009}. Various studies have attempted to establish faster rates by imposing additional conditions on the loss functions (e.g., strong convexity, smoothness, exponential concavity), or on both the loss functions and the distribution (e.g., Tsybakov condition, Bernstein condition, central condition). In this paper, we will study a different family of conditions called the error bound conditions  (EBC) (see Definition~\ref{def:1}), which has a long history in the community of optimization  and variational analysis~\citep{DBLP:journals/mp/Pang97} and recently revives for developing fast optimization algorithms without strong convexity~\citep{arxiv:1510.08234,Drusvyatskiy16a,DBLP:conf/pkdd/KarimiNS16,DBLP:journals/corr/nesterov16linearnon,DBLP:journals/corr/abs/1606.00269}. However, the exploration of EBC in statistical learning for risk minimization is still under-explored and the connection to other conditions is not fully understood. 
\begin{defn}\label{def:1}
	For any $\w\in\W$, let $\w^*=\arg\min_{\u\in\W_*}\|\u - \w\|_2$ denote an optimal solution closest to $\w$. Let $\theta\in(0, 1]$ and $0<\alpha<\infty$.  The problem~(\ref{eqn:opt}) satisfies an $\text{EBC}(\theta, \alpha)$ if for any $\w\in\W$, the following inequality holds
	\begin{align}\label{eqn:EBC}
		\|\w - \w^*\|_2^2\leq \alpha(P(\w) - P(\w^*))^\theta
	\end{align}
\end{defn}
This condition has been well studied in optimization and variational analysis. Many results are available for understanding the condition for different problems. For example,  it has  been shown 
that when $P(\w)$ is semi-algebraic and continuous, the inequality~(\ref{eqn:EBC}) is known to hold on any compact set with certain $\theta\in(0,1]$ and $\alpha>0$~\citep{arxiv:1510.08234}~\footnote{In related literature, one may also consider $\theta\in(1,2]$, which will yield the same order of excess risk bound as $\theta=1$ in our settings.  }.
We will study both ERM and SA under the above error bound condition. In particular, we show that the benefits of exploiting EBC in statistical learning are noticeable and profound by establishing the following results. 
\begin{itemize}[leftmargin=*]
	\item {\bf Result I.}  First, we show that for Lipchitz continuos loss EBC implies a {\it relaxed}  Bernstein condition, and therefore leads to intermediate  rates of $\widetilde O\left(\left(\frac{d}{n}\right)^{\frac{1}{2-\theta}}\right)$ for Lipschitz continuous loss. 
Although this result does not improve over existing rates based on Bernstein condition, however, we emphasize that it provides an alternative route for establishing fast rates and brings richer results than literature to statistical learning in light of the examples provided  in this paper.

	\item {\bf Result II.}  Second, we develop fast and optimistic rates of ERM for non-negative, Lipschitz continuous and smooth convex loss functions in the order of $\widetilde O\left(\frac{d}{n} + \left(\frac{d P_*}{n}\right)^{\frac{1}{2-\theta}}\right)$,
	and in the order of $\widetilde O\left(\left(\frac{d}{n}\right)^{\frac{2}{2-\theta}} + \left(\frac{d P_*}{n}\right)^{\frac{1}{2-\theta}}\right)$ when the sample size $n$ is sufficiently large, which imply that when the optimal risk $P_*$ is small one can achieve a fast rate of $\widetilde O\left(\frac{d}{n}\right)$ even with $\theta < 1$ and a faster rate of $\widetilde O\left(\left(\frac{d}{n}\right)^{\frac{2}{2-\theta}}\right)$ when $n$ is sufficiently large. 
	\item {\bf Result III.} Third, we develop an efficient SA algorithm with almost the same per-iteration cost as stochastic subgradient methods for Lipschitz continuous loss, which achieves the same order of rate $\widetilde O\left(\left(\frac{1}{n}\right)^{\frac{1}{2-\theta}}\right)$ as ERM without an explicit dependence on $d$. More importantly it is ``parameter''-free with no need of prior knowledge of $\theta$ and $\alpha$ in EBC. 
\end{itemize}

Before ending this section, we would like to point out  that all the results 
are adaptive to the largest possible value of $\theta\in(0,1]$ in hindsight  of the problem, and the dependence on $d$ for ERM is generally unavoidable according to the lower bounds studied in~\citep{NIPS2016_ERM}.

\section{Related Work}
In this section, we review some related work to better understand our established results. We note that there are extensive work about the analysis of generalization ability of ERM and SA, especially on showing the classical $O(1/\sqrt{n})$ rate. Instead of being exhaustive, here we focus on closely related studies about fast rates or intermediate rates of ERM and SA. 

The results for statistical learning  under EBC are limited. A similar one to our {\bf Result I} for ERM was established in~\citep{Shapiro:2014:LSP:2678054}. However, their result requires the convexity condition of random loss functions, making it weaker than our result. \citet{DBLP:conf/icml/RamdasS13} and \citet{ICMLASSG} considered SA under the EBC condition and established similar adaptive rates. Nonetheless, their stochastic algorithms require knowing the values of $\theta$ and possibly the constant $\alpha$ in the EBC. In contrast, the SA algorithm in this paper is ``parameter"-free without the need of knowing $\theta$ and $\alpha$ while still achieving the adaptive rates of $O(1/n^{2-\theta})$. 

Fast rates under strong convexity are  well-known  for ERM,  online optimization and stochastic optimization~\citep{COLT:Shalev:2009,DBLP:conf/nips/SridharanSS08,DBLP:journals/ml/HazanAK07,DBLP:conf/nips/KakadeT08,shalev-shwartz-2007-pegasos,hazan-20110-beyond}. 
A weaker condition than strong convexity, namely exponential concavity (exp-concavity), has also attracted significant attention for developing fast rates in online learning and statistical learning. Studies have explored exp-concavity in online learning 
and have achieved an $O(\log (n))$ regret bound for $n$ rounds~\citep{Vovk:1990:AS:92571.92672,DBLP:journals/ml/HazanAK07}. Several recent works established  the $\widetilde O(d/n)$ fast rate of ERM with exp-concave loss functions over a bounded domain $\W\subseteq\R^d$~\citep{DBLP:journals/corr/abs/1601.04011,DBLP:conf/nips/KorenL15,arXiv:1605.01288}. 

The Bernstein condition (see Definition~\ref{def:2}), itself a generalization of Tsybakov margin condition for classification, was introduced in~\citep{Bartlett2006} and played an important role for developing fast and intermediate excess risk bounds in many works~\citep{Local_RC,Local:Vladimir}. Recently, a different family of conditions (named stochastic mixability condition or the $v$-central condition (see Definition~\ref{def:3})) was introduced for developing fast and intermediate rates of ERM~\citep{DBLP:journals/jmlr/ErvenGMRW15}. The connection between the exp-concavity condition, the Bernstein condition and the $v$-central condition was studied in~\citep{DBLP:journals/jmlr/ErvenGMRW15}. In particular, the exp-concavity implies a $v$-central condition under an appropriate condition of the decision set $\W$ (e.g., well-specificity or convexity). With the bounded loss condition, the Bernstein condition implies the $v$-central condition and the $v$-central condition also implies a Bernstein condition. 

In this work, we also study the connection between the EBC and the Bernstein condition and the $v$-central condition. In particular, we will develop weaker forms of the Bernstein condition and the $v$-central condition from the EBC for Lipschitz continuous loss functions. Building on this connection, we establish our {\bf Result I}, which is on a par with existing results for bounded loss functions relying on the Bernstein condition or the central condition.  
Nevertheless, we emphasize that employing the EBC for developing fast rates has noticeable benefits: (i) it is complementary to the Bernstein condition and the central condition and enjoyed by several interesting problems whose fast rates are not exhibited yet;   (ii) it can be leveraged for developing fast and intermediate optimistic rates  for non-negative and smooth loss functions; (iii) it can be leveraged to develop efficient SA algorithms with intermediate and fast convergence rates.

\citet{DBLP:conf/nips/SrebroST10} established an optimistic rate of $O\left(1/n + \sqrt{P_*/n}\right)$ of both ERM and SA for supervised learning with generalized linear loss functions. However, their SA algorithm requires knowing the value of $P_*$. 
Recently, \citet{DBLP:journals/corr/0005YJ17} considered the general stochastic optimization problem~(\ref{eqn:opt}) with non-negative and smooth loss functions and achieved a series of optimistic results. It is worth  mentioning that their excess risk bounds for both convex problems and strongly convex problems are  special cases of our {\bf Result II} when $\theta=0$ and $\theta=1$, respectively. However, the intermediate optimistic rates for $\theta\in(0,1)$ are first shown in this paper. Importantly, our {\bf Result II} under the EBC with $\theta=1$ is more general than the result in~\citep{DBLP:journals/corr/0005YJ17} under strong convexity assumption. 

Finally, we discuss about stochastic approximation algorithms with fast and intermediate rates to understand the significance of our {\bf Result III}. Different variants of stochastic gradient methods have been analyzed for  stochastic strongly convex optimization~\citep{hazan-20110-beyond,ICML2012Rakhlin,ICML2013Shamir:ICML} with a fast rate of $O(1/n)$. 
But these stochastic algorithms require knowing the strong convexity modulus.
A recent work established  adaptive regret bounds $O(n^{\frac{1-\theta}{2-\theta}})$ for online learning with a total of $n$ rounds under the Bernstein condition~\citep{DBLP:conf/nips/KoolenGE16}. However, their methods are based on the second-order methods and therefore are not as efficient as our stochastic approximation algorithm. For example, for online convex optimization they employed  the MetaGrad algorithm~\citep{DBLP:conf/nips/ErvenK16}, which needs to maintain $\log (n)$ copies of  the online Newton step (ONS)~\citep{DBLP:journals/ml/HazanAK07} with different learning rates. Notice that the per-iteration cost of ONS is usually $O(d^4)$ even for very simple domain $\W$~\citep{DBLP:conf/nips/KorenL15}, while that of our SA algorithm is dominated by the Euclidean projection onto $\W$ that is as fast as $O(d)$ for a simple domain.

\section{Empirical Risk Minimization (ERM)}
We first formally state the minimal assumptions that are made  throughout the paper. Additional assumptions will be made in the sequel  for developing fast rates for different families of the random functions $f(\w,\z)$. 
\begin{ass}\label{ass:1}
	For the stochastic optimization problems~(\ref{eqn:opt}) and~(\ref{eqn:opt2}), we assume: 
		(i) $P(\w)$ is a convex function,  $\W$ is a closed and bounded convex set, i.e., there exists $R>0$ such that $\|\w\|_2\leq R$ for any $\w\in\W$, and $r(\w)$ is a Lipschitz continuous convex function. 
		(ii) the  problem~(\ref{eqn:opt}) and~(\ref{eqn:opt2}) satisfy an EBC$(\theta, \alpha)$, i.e., there exist $\theta\in(0,1]$ and $0<\alpha<\infty$ such that the inequality~(\ref{eqn:EBC}) hold. 
\end{ass}

In this section, we focus on the development of theory of ERM for risk minimization. In particular, we learn a model $\wh$ by solving the following ERM problem corresponding to~(\ref{eqn:opt}): 
\begin{align}
	\wh \in \arg\min_{\w\in\W}P_n(\w)\triangleq  \frac{1}{n}\sum_{i=1}^nf(\w, \z_i)
\end{align}
where $\z_1,\ldots, \z_n$ are i.i.d samples following the distribution $\P$.  A similar ERM problem can be formulated for~(\ref{eqn:opt2}). This section is divided into two subsections. In the first subsection, we establish intermediate rates of ERM under EBC when the random function is Lipschitz continuous. In the second subsection, we develop intermediate rates of ERM under EBC when the random function is smooth. In the sequel and the supplement, we use $\vee$ to denote the max operation and use $\wedge$ to denote the min operation.

\subsection{ERM for Lipschitz continuous random functions}
In this subsection, w.l.o.g we restrict our attention to~(\ref{eqn:opt}) since we  make the following assumption besides {Assumption~\ref{ass:1}}. 
\begin{ass}\label{ass:2}
	For the stochastic optimization problem~(\ref{eqn:opt}), we assume that $f(\w, \z)$ is a Lipschitz continuous function w.r.t $\w$ for any $\z\in\Z$, i.e., there exists $G>0$ such that for any $\w, \u\in\W$,
	\[
	|f(\w, \z) - f(\u, \z)|\leq G\|\w - \u\|_2, \forall \z\in\Z.
	\] 
\end{ass}
If $g(\w)$ is present,  it can be absorbed into $f(\w, \z)$. It is notable that we do not assume $f(\w, \z)$ is convex in terms of $\w$ or any $\z$.

First, we compare EBC with two very important conditions considered in literature for developing fast rates of ERM,  namely the Bernstein condition and the central condition. 
We first give the definitions of these two conditions. 
\begin{defn}(Bernstein Condition)\label{def:2}
	Let $\beta\in(0,1]$ and $B\geq 1$.  Then $(f, \P, \W)$  satisfies the $(\beta, B)$-Bernstein condition if there exists a $\w_*\in \W$ such that for any $\w\in\W$
	\begin{align}\label{eqn:Bern}
		\E_{\z}[(f(\w, \z) - f(\w_*, \z))^2]\leq B(\E_\z[f(\w, \z) - f(\w_*, \z)])^\beta.
	\end{align}
\end{defn}
It is clear that if such an $\w_*$ exists it has to be the minimizer of the risk. 
\begin{defn}($v$-Central Condition)\label{def:3}
	Let $v:[0, \infty)\rightarrow [0, \infty)$  be a bounded, non-decreasing function satisfying $v(x)>0$ for all $x>0$. We say that  $(f, \P, \W)$ satisfies the $v$-central condition if for all $\varepsilon\geq 0$, there exists $\w_*\in \W$ such that for any $\w\in\W$
	\begin{align}\label{eqn:vc}
		\E_{\z\sim \P}\left[e^{\eta(f(\w_*, \z) - f(\w, \z))}\right]\leq e^{\eta\varepsilon}
	\end{align}
	holds with $\eta = v(\varepsilon)$.
\end{defn}
If $v(\varepsilon)$ is a constant for all $\varepsilon\geq 0$, the $v$-central condition reduces to the strong $\eta$-central condition, which implies the $O(1/n)$ fast rate~\citep{DBLP:journals/jmlr/ErvenGMRW15}.  The connection  between the Bernstein condition or $v$-central condition has been studied in~\citep{DBLP:journals/jmlr/ErvenGMRW15}. For example, if the random functions  $f(\w, \z)$ take values in $[0, a]$, then $(\beta, B)$-Bernstein condition implies $v$-central condition with $v(x)\propto x^{1-\beta}$.

The following lemma shows that for Lipchitz continuous function, EBC condition implies a relaxed Bernstein condition and a relaxed $v$-central condition. 
\begin{lem}\label{lem:1}({\bf Relaxed Bernstein condition and $v$-central condition})
	Suppose {Assumptions~\ref{ass:1},~\ref{ass:2}} hold. For any $\w\in\W$,  there exists  $\w^*\in\W_*$ (which is actually the one closest to $\w$), such that 
	\begin{align*}
&\E_{\z}[(f(\w, \z) - f(\w^*, \z))^2] \leq B(\E_\z[f(\w, \z) - f(\w^*, \z)])^\theta,
\end{align*}
where $B=G^2\alpha$, and 
\begin{align*}
	\E_{\z\sim \P}\left[e^{\eta(f(\w^*, \z) - f(\w, \z))}\right]\leq e^{\eta\varepsilon},
\end{align*} 	
	where $\eta = v(\varepsilon):=c\varepsilon^{1-\theta}\wedge b$. Additionally, for any $\varepsilon>0$ if $P(\w) - P(\w^*)\geq \varepsilon$, we have 
	\[
	\E_{\z\sim \P}\left[e^{v(\varepsilon)(f(\w^*, \z) - f(\w, \z))}\right]\leq 1
	\]
	where $b>0$  is any constant and $c= 1/(\alpha G^2\kappa(4GRb))$, where $\kappa(x) = (e^x - x - 1)/x^2$. 
\end{lem}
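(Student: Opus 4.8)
The plan is to establish the three displayed inequalities in turn, using the same closest optimal point $\w^*$ throughout: the first comes directly from combining the two assumptions, and the latter two from a second-order control of the exponential moment of $X := f(\w^*,\z) - f(\w,\z)$. For the \textbf{relaxed Bernstein condition}, I would start from the pointwise Lipschitz bound of Assumption~\ref{ass:2}, $(f(\w,\z)-f(\w^*,\z))^2 \le G^2\|\w-\w^*\|_2^2$, take expectations, and feed in the EBC of Assumption~\ref{ass:1}(ii) to get
\[
\E_\z[(f(\w,\z)-f(\w^*,\z))^2] \le G^2\|\w-\w^*\|_2^2 \le G^2\alpha\,(P(\w)-P(\w^*))^\theta.
\]
Since $P(\w)-P(\w^*) = \E_\z[f(\w,\z)-f(\w^*,\z)]$, this is the claim with $B = G^2\alpha$.

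For the central conditions, the key tool I would invoke is the elementary fact that $\kappa(x)=(e^x-x-1)/x^2$ is nondecreasing, which yields $e^x \le 1 + x + \kappa(M)x^2$ for every $x \le M$. Applying this with $x=\eta X$ is legitimate because Lipschitzness and $\|\w-\w^*\|_2 \le 2R$ give $|X|\le 2GR$, and $\eta = v(\varepsilon) \le b$ forces $\eta X \le 2GRb \le 4GRb =: M$. Taking expectations, writing $\Delta := P(\w)-P(\w^*)\ge 0$, and using $\E_\z[X] = -\Delta$ together with $\E_\z[X^2] \le B\Delta^\theta$ from the first step, I obtain the master estimate
\[
\E_\z[e^{\eta X}] \le 1 - \eta\Delta + \kappa(M)\,\eta^2 B\,\Delta^\theta.
\]
The point of the definition $c = 1/(\alpha G^2\kappa(4GRb)) = 1/(B\kappa(M))$ is exactly that $\kappa(M)\,c\,B = 1$, so that $\kappa(M)\,\eta\,B \le \kappa(M)\,c\,\varepsilon^{1-\theta}B = \varepsilon^{1-\theta}$.

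The two central inequalities then follow by a case split on $\Delta$. For the \emph{additional} claim assume $\Delta \ge \varepsilon$; since $1-\theta \ge 0$ we have $\Delta^{1-\theta} \ge \varepsilon^{1-\theta} \ge \kappa(M)\eta B$, hence $\kappa(M)\eta^2 B\Delta^\theta \le \eta\Delta$ and the master estimate collapses to $\E_\z[e^{\eta X}] \le 1$. For the general $v$-central inequality, the case $\Delta \ge \varepsilon$ is already covered, giving $\E_\z[e^{\eta X}]\le 1 \le e^{\eta\varepsilon}$; and if $\Delta < \varepsilon$ then $\Delta^\theta \le \varepsilon^\theta$, so $\kappa(M)\eta^2 B\Delta^\theta \le \eta\,\varepsilon^{1-\theta}\Delta^\theta \le \eta\varepsilon$, whereupon the master estimate yields $\E_\z[e^{\eta X}] \le 1 - \eta\Delta + \eta\varepsilon \le 1 + \eta\varepsilon \le e^{\eta\varepsilon}$.

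The hard part is the middle step: upgrading from a first-order to a genuinely second-order bound on the exponential moment and matching its variance term against the relaxed Bernstein bound. Everything rests on (i) the monotonicity of $\kappa$ and the boundedness of $X$ delivered by Lipschitzness and compactness of $\W$, and (ii) calibrating $c$ so that $\kappa(M)cB=1$; the boundedness is also what forces the truncation $\wedge\, b$ in $v(\varepsilon)$, ensuring $\eta X$ stays within the range where the quadratic bound is valid. Once these are in hand the remaining case analysis on whether $\Delta$ exceeds $\varepsilon$ is routine.
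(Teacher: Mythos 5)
Your proof is correct and follows essentially the same route as the paper: Lipschitz continuity plus EBC yields the relaxed Bernstein bound, which is then converted into the two central conditions via a second-order bound on the moment generating function calibrated so that $\kappa(4GRb)\,c\,B=1$, followed by the same case split on whether $P(\w)-P(\w^*)$ exceeds $\varepsilon$. The only difference is presentational: you derive the quadratic bound $e^x\le 1+x+\kappa(M)x^2$ inline and apply it to the uncentered second moment, whereas the paper invokes Lemma 5.6 of van Erven et al.\ (the same inequality, stated for the variance of the centered variable).
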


{\bf Remark:} There is a subtle difference between the above relaxed Bernstein condition and $v$-central condition and their original definitions in Definitions~\ref{def:2} and~\ref{def:3}. The difference is that in Definitions~\ref{def:2} and~\ref{def:3}, it requires there exists a universal  $\w_*$ for all $\w\in\W$ such that ~(\ref{eqn:Bern}) and~(\ref{eqn:vc}) hold.  In Lemma~\ref{lem:1} it only requires for every $\w\in\W$ there exists one $\w^*$ that could be different for different $\w$  such that ~(\ref{eqn:Bern}) and~(\ref{eqn:vc}) hold. This relaxation enables us to establish richer results by exploring EBC than the  Bernstein condition and $v$-central condition, which are postponed to Section~\ref{sec:exam}. 


In addition to the difference highlighted above, we would like to point out that EBC is complementary to  the Bernstein or the $v$-central condition. 
In particular, we use two examples given in~\citep{DBLP:journals/jmlr/ErvenGMRW15} to show that EBC holds but the Bernstein condition or the $v$-central condition fails. 

{\bf Example 1.} Consider the square loss $f(w, z)= \frac{1}{2}(w-z)^2$ with $w\in\W=[-1, 1]$. Let $\P$ be a distribution over $z$ such that $\E[z]=0$ and, for some $c_1, c_2>0$, for all $z\in\R$ with $|z|>c_1$, the density $p(z)$ of $\P$ satisfies $p(z)\geq c_2/z^6$. It was shown that the $v$-central conditional fails. But, it is easy to see that EBC$(\theta=1, \alpha)$ is satisfied. 

{\bf Example 2.} Consider the square loss $f(w, z) = \frac{1}{2}(w - z)^2$ with $w\in\W=\R$. Assume $z$ follows a normal distribution with mean $v$ and standard deviation $1$. For all $B\geq 1$, the $(1, B)$-Bernstein condition will fail for $|w|>\sqrt{32B}$. Nevertheless, EBC$(\theta=1, \alpha=2)$ holds.


Next, we present the main result of this subsection. 
\begin{thm}[{\bf Result I}]\label{thm:1}
	Suppose {Assumptions~\ref{ass:1},~\ref{ass:2}} hold. 
	For any $n\geq aC$, with probability at least $1-\delta$ we have
	\begin{align}
		P(\wh) - P_*  \leq O\left(\frac{d\log n  + \log(1/\delta)}{n}\right)^{\frac{1}{2-\theta}}
	\end{align}
	where $a =3 (d\log(32GRn^{1/(2-\theta)}) + \log(1/\delta))/c + 1 $ and  $C>0$ is  some constant.
\end{thm}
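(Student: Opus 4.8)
The plan is to combine the localized exponential-moment control furnished by Lemma~\ref{lem:1} with a single covering-number union bound over $\W$, exploiting the optimality of $\wh$ against the optimal solutions closest to the net points. I would work throughout with the excess-loss function $g_\w(\z) = f(\w,\z) - f(\w^*,\z)$, where $\w^*\in\W_*$ is the optimal point closest to $\w$, so that $\E_\z[g_\w] = P(\w) - P_*$. The two ingredients I would extract from Lemma~\ref{lem:1} are the variance control $\E_\z[g_\w^2]\le G^2\alpha(P(\w)-P_*)^\theta$ and, more importantly, the relaxed $v$-central condition. The key observation is that evaluating the central condition at the threshold $\varepsilon = (P(\w)-P_*)/2$, strictly below the true excess $P(\w)-P_*$, converts the trivial bound $\E_\z[e^{-v(\varepsilon)g_\w}]\le 1$ (obtained at $\varepsilon = P(\w)-P_*$) into a genuinely contractive one: writing $r = P(\w)-P_*$, the tuning $c = 1/(\alpha G^2\kappa(4GRb))$ yields $\E_\z[e^{-v(r/2)g_\w}]\le \exp(-\gamma r^{2-\theta})$ for an explicit $\gamma>0$. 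A Chernoff bound then gives, for each fixed $\w$ with $r\ge\rho$, the per-point estimate $\P(\frac1n\sum_{i=1}^n g_\w(\z_i)\le t)\le \exp(n[v(r/2)\,t - \gamma r^{2-\theta}])$, which decays once $t$ is small relative to $r^{2-\theta}$.

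Next I would pass from a single $\w$ to all of $\W$ by discretization rather than a localized Rademacher argument, since the explicit logarithmic and constant factors in the statement match a covering number. I would fix an $\epsilon$-net $\N$ of $\W$ at scale $\epsilon \asymp 1/(GR\,n^{1/(2-\theta)})$, of cardinality $|\N|\le (3R/\epsilon)^d$, so that $\log|\N|\asymp d\log(GR\,n^{1/(2-\theta)})$, precisely the term appearing in the constant $a$. To avoid the difficulty that $\w^*$ may jump as $\w$ varies, I would not try to control $g_\w$ uniformly; instead, letting $\w_j\in\N$ be the net point closest to $\wh$, Assumption~\ref{ass:2} together with the optimality $P_n(\wh)\le P_n(\w_j^*)$ (valid because $\w_j^*\in\W_*\subseteq\W$) gives $\frac1n\sum_i g_{\w_j}(\z_i) = P_n(\w_j) - P_n(\w_j^*)\le G\epsilon$. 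Thus it suffices to bound the probability that some net point $\w_j$ with $P(\w_j)-P_*\ge\rho$ nonetheless has empirical excess at most $G\epsilon$; applying the per-point Chernoff bound with $t = G\epsilon = o(\rho^{2-\theta})$ and taking a union bound over $\N$ yields a failure probability at most $|\N|\exp(-\gamma'\,n\rho^{2-\theta})$.

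Setting this failure probability to $\delta$ produces the self-consistent inequality $\gamma'\,n\rho^{2-\theta}\ge d\log(3R/\epsilon) + \log(1/\delta)$; solving for $\rho$ and substituting $\epsilon$ gives $\rho\asymp\big((d\log n + \log(1/\delta))/n\big)^{1/(2-\theta)}$, and transferring back via $P(\wh)-P_*\le P(\w_j)-P_* + G\epsilon\le\rho + o(\rho)$ closes the argument. The hypothesis $n\ge aC$ with $a = 3(d\log(32GRn^{1/(2-\theta)}) + \log(1/\delta))/c + 1$ is exactly what guarantees the regime in which this solution is valid: it forces $\rho\le 1$, so that the localized range $G\sqrt{\alpha}\,r^{\theta/2}$ is dominated by the variance scale $G^2\alpha r^\theta$ used in the Bennett step underlying Lemma~\ref{lem:1}, and it makes the $n$-dependent net scale $\epsilon$ consistent with the resulting rate.

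I expect the main obstacle to be this localization bookkeeping rather than any single estimate: one must ensure that the gap trick ($\varepsilon = r/2$), the localized variance and range bounds coming from the EBC, the $n$-dependent covering scale, and the union-bound cost all balance into one clean fixed point. The subtle point is the dependence of $\w^*$ on $\w$, which prevents covering the class $\{g_\w\}$ directly; this is sidestepped by comparing the ERM only against the closest optima of the finitely many net points, so that the optimality inequality is invoked at discrete comparators while the central condition is still applied to each $g_\w$ with its own localized parameters.
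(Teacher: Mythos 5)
Your overall architecture is essentially the paper's: localize to models with excess risk at least $\rho\asymp\left((d\log n+\log(1/\delta))/n\right)^{1/(2-\theta)}$, apply the relaxed central condition of Lemma~\ref{lem:1} pointwise, take a Chernoff bound, cover $\W$ at scale $\asymp n^{-1/(2-\theta)}$, union bound, and transfer to $\wh$ via ERM optimality plus Lipschitzness. Your variant of the transfer step (invoking $P_n(\wh)\le P_n(\w_j^*)$ at the net point $\w_j$ nearest to $\wh$, so that the empirical excess of $\w_j$ is at most $G\epsilon$) is valid. One remark: the obstacle you cite as motivating it is not actually an obstacle --- the paper does cover the class $\{F_\w\}$ directly in sup norm, handling the dependence of $\w^*$ on $\w$ by the non-expansiveness of the projection onto the convex set $\W_*$, which gives $\sup_\z|F_\w(\z)-F_{\widetilde\w}(\z)|\le 2G\|\w-\widetilde\w\|_2$.

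The genuine gap is your key per-point estimate, the claimed contraction $\E_\z[e^{-v(r/2)g_\w}]\le\exp(-\gamma r^{2-\theta})$ with an explicit $\gamma>0$. Tracing through the Bennett-type inequality underlying Lemma~\ref{lem:1} with $\eta=v(r/2)=c(r/2)^{1-\theta}\wedge b$ and $\E[g_\w]=r$, one gets $\E[g_\w]+\frac1\eta\log\E[e^{-\eta g_\w}]\le (r/2)^{1-\theta}r^{\theta}=2^{\theta-1}r$, hence $\log\E[e^{-\eta g_\w}]\le -c\,2^{\theta-1}(1-2^{\theta-1})\,r^{2-\theta}$. The constant $\gamma=c\,2^{\theta-1}(1-2^{\theta-1})$ is positive only when $\theta<1$; it vanishes identically at $\theta=1$, because there $v(\varepsilon)=c\wedge b$ does not depend on $\varepsilon$ at all, so halving the threshold changes nothing and you only recover the trivial bound $\E[e^{-v\,g_\w}]\le 1$, which makes the subsequent Chernoff step vacuous. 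Since $\theta=1$ is inside the theorem's scope (and is precisely the case that yields the $\widetilde O(d/n)$ rate and all the applications in Section~\ref{sec:exam}), and since your $\gamma$ degenerates as $\theta\to 1$ so the constants in the rate blow up, the proof as written fails in the fast-rate regime. The fix is to shrink the exponent rather than the threshold: keep $\varepsilon=\gamma_n$ but evaluate the exponential moment at $\eta=v(\gamma_n)/2$. Either by a direct Bennett computation or by the conversion result the paper quotes (Theorem~13 of Gr\"unwald--Mehta applied to $\E[e^{-v(\gamma_n)F_\w}]\le 1$), this gives $\log\E[e^{-(v(\gamma_n)/2)F_\w}]\le-\frac{0.5\,v(\gamma_n)}{Bv(\gamma_n)+2}\gamma_n$, a contraction of order $v(\gamma_n)\gamma_n\asymp\gamma_n^{2-\theta}$ that holds uniformly for all $\theta\in(0,1]$; with that replacement the rest of your argument goes through.
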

{\bf Remark:} The proof utilizes Lemma~\ref{lem:1} and follows similarly as the proofs in previous studies~\citep{DBLP:journals/jmlr/ErvenGMRW15,arXiv:1605.01288} based on $v$-central condition. Our analysis essentially shows that relaxed Bernstein condition and relaxed $v$-central condition with non-universal $\w^*$ suffice to establish the intermediate rates. Although the rate in Theorem~\ref{thm:1} does not improve that in previous works~\citep{DBLP:journals/jmlr/ErvenGMRW15}, the relaxation brought by EBC allows us to establish fast rates for interesting problems that are unknown before.  More details are postponed into Section~\ref{sec:exam}. 
For example, under the condition that the input data $\x, y$ are bounded, ERM for hinge loss minimization with $\ell_1$,  $\ell_\infty$ norm constraints, and for minimizing a quadratic function and $\ell_1$ norm regularization  enjoys an $\widetilde O(1/n)$ fast rate. To the best of our knowledge, such a fast rate of ERM for these problems has not been shown in literature using other conditions or theories.

\subsection{ERM for non-negative, Lipschitz continuous and smooth convex random functions}
In this subsection, we will present improved optimistic rates of ERM for non-negative smooth loss functions expanding the results in~\citep{DBLP:journals/corr/0005YJ17}. To be general, we consider~(\ref{eqn:opt2}) and the following ERM problem:
\begin{align}
	\wh \in \arg\min_{\w\in\W}P_n(\w)\triangleq  \frac{1}{n}\sum_{i=1}^nf(\w, \z_i) + r(\w)
\end{align}
Besides { Assumptions~\ref{ass:1},~\ref{ass:2}}, we further make the following assumption for developing faster rates. 
\begin{ass}\label{ass:3}
	For the stochastic optimization problem~(\ref{eqn:opt}), we assume 
	$f(\w, \z)$ is a non-negative and smooth convex function w.r.t $\w$ for any $\z\in\Z$, i.e., there exists $L\geq 0$ such that for any $\w, \u\in\W$,
	\begin{align*}
	0\leq& f(\w, \z) - f(\u, \z) - \nabla f(\u, \z)^{\top}(\w - \u)
	\leq \frac{L}{2}\|\w - \u\|_2^2, \quad \forall \z\in\Z.
	\end{align*}
\end{ass}
It is notable that we do not assume that $r(\w)$ is smooth. 

Our main result in this subsection is presented in the following theorem. 
\begin{thm}[{\bf Result II}]\label{thm:2}  Under {Assumptions~\ref{ass:1},~\ref{ass:2}, and~\ref{ass:3}},  with probability at least $1-\delta$ we have
	\begin{align*}
P(\wh) - P_*\leq O\left( \frac{d \log n + \log(1/\delta)}{n} + \left[\frac{(d\log n + \log(1/\delta)) P_*}{ n}\right]^{\frac{1}{2-\theta}} \right).
	\end{align*}
	When $n \geq  \Omega\left(\left( \alpha^{1/\theta}  d \log n\right)^{2-\theta}\right) $,
	with probability at least $1-\delta$,
	\begin{align*}
	P(\wh) - P_*\leq  O\left(\left[\frac{d\log n + \log(1/\delta)}{n}\right]^{\frac{2}{2-\theta}}+\left[ \frac{(d\log n +\log(1/\delta)) P_*}{ n}\right]^{\frac{1}{2-\theta}} \right).
	\end{align*}
\end{thm}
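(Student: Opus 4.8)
Write $F(\w)=\E_\z[f(\w,\z)]$ and $F_n(\w)=\frac1n\sum_{i=1}^n f(\w,\z_i)$, so $P=F+r$ and $P_n=F_n+r$. Let $\wh^*\in\W_*$ be the optimum closest to $\wh$ and set $\widehat\Delta:=P(\wh)-P_*=P(\wh)-P(\wh^*)$. Since $\wh$ minimizes $P_n$, $F_n(\wh)+r(\wh)\le F_n(\wh^*)+r(\wh^*)$, and plugging $r(\wh)-r(\wh^*)\le F_n(\wh^*)-F_n(\wh)$ into $\widehat\Delta=[F(\wh)-F(\wh^*)]+[r(\wh)-r(\wh^*)]$ gives

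\begin{align*}
\widehat\Delta\le [F(\wh)-F(\wh^*)]-[F_n(\wh)-F_n(\wh^*)]=:\xi(\wh);
\end{align*}

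the deterministic $r$ cancels in $\xi$, so everything reduces to a centered empirical process of the stochastic excess loss $\ell(\w,\z):=f(\w,\z)-f(\w^*,\z)$. For non-negative $L$-smooth $f$, Assumption~\ref{ass:3} yields the self-bounding inequality $\|\nabla f(\w,\z)\|_2^2\le 2Lf(\w,\z)$; combined with convexity this gives, pointwise in $\z$, $\ell(\w,\z)^2\le 2L\big(f(\w,\z)+f(\w^*,\z)\big)\|\w-\w^*\|_2^2$. Taking expectations and invoking EBC$(\theta,\alpha)$, $\|\w-\w^*\|_2^2\le\alpha\,\Delta(\w)^\theta$ with $\Delta(\w):=P(\w)-P(\w^*)$, I obtain the key variance bound

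\begin{align*}
V(\w):=\E_\z[\ell(\w,\z)^2]\le 2L\alpha\,(F(\w)+F(\w^*))\,\Delta(\w)^{\theta}\lesssim L\alpha\big(\Delta(\w)^{\theta+1}+P_*\,\Delta(\w)^{\theta}\big),
\end{align*}

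using $F(\w^*)\le P_*$ (valid with a nonnegative regularizer) and $F(\w)-F(\w^*)\le\Delta(\w)$ up to a lower-order $\Delta(\w)^{\theta/2}$ term from the Lipschitz regularizer, again controlled by EBC. Lipschitzness (Assumption~\ref{ass:2}) controls the range: $|\ell(\w,\z)|\le G\|\w-\w^*\|_2\le G\sqrt\alpha\,\Delta(\w)^{\theta/2}$, and trivially $|\ell|\le 2GR$.

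\textbf{Step 2 (uniform Bernstein concentration via covering numbers).} To bound $\xi(\wh)$ I would prove a one-sided uniform deviation bound. Cover $\W$ by an $\epsilon$-net of cardinality $(3R/\epsilon)^d$ with $\epsilon\asymp 1/n$; on each net point apply Bernstein's inequality to $F-F_n$, union bound, and absorb the $O(G\epsilon)=O(1/n)$ discretization error by $G$-Lipschitzness. Writing $\widetilde d:=d\log n+\log(1/\delta)$, this yields, with probability $\ge 1-\delta$ simultaneously over $\w\in\W$,

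\begin{align*}
\xi(\w)\lesssim\sqrt{\frac{V(\w)\,\widetilde d}{n}}+\frac{B(\w)\,\widetilde d}{n},
\end{align*}

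where $B(\w)$ is the range from Step 1. To replace the crude range $2GR$ by the local range $G\sqrt\alpha\,\Delta(\w)^{\theta/2}$ I would peel $\W$ into $O(\log n)$ shells on which $\Delta(\w)$ is dyadically constant, apply the above on each shell, and union bound (the extra $\log n$ is absorbed into $\widetilde d$).

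\textbf{Step 3 (solving the self-bounded inequality).} Substituting the Step-1 estimates of $V$ and $B$ at $\w=\wh$ and using $\widehat\Delta\le\xi(\wh)$ gives

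\begin{align*}
\widehat\Delta\lesssim\sqrt{\frac{L\alpha\widetilde d}{n}}\Big(\widehat\Delta^{\frac{\theta+1}{2}}+\sqrt{P_*}\,\widehat\Delta^{\frac{\theta}{2}}\Big)+\frac{B\,\widetilde d}{n}.
\end{align*}

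By the elementary fact that $x\le\sum_i c_i x^{p_i}$ with each $p_i<1$ implies $x\lesssim\sum_i c_i^{1/(1-p_i)}$, each term contributes a rate: the $\sqrt{P_*}\,\widehat\Delta^{\theta/2}$ term gives $(\widetilde d\,P_*/n)^{1/(2-\theta)}$; the $\widehat\Delta^{(\theta+1)/2}$ term gives $(\widetilde d/n)^{1/(1-\theta)}$, which is dominated for $\theta\in(0,1)$ since $1/(1-\theta)>2/(2-\theta)$; and the range term gives $\widetilde d/n$ when $B=2GR$ (the first, all-$n$ bound) and the sharper $(\widetilde d/n)^{2/(2-\theta)}$ when $B=G\sqrt\alpha\,\widehat\Delta^{\theta/2}$ (the second bound). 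The refined range requires the peeling of Step 2 to hold at the scale of the resulting $\widehat\Delta$, and checking that the fixed point indeed lands in this small regime is what forces the sample-size condition $n\ge\Omega((\alpha^{1/\theta}d\log n)^{2-\theta})$.

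\textbf{Main obstacle.} The crux is the sharp interplay of Steps 2 and 3: producing a genuinely variance-localized (Bernstein-type) uniform deviation bound rather than a worst-case one, and propagating the shell/peeling argument so that the \emph{local} range $\Delta^{\theta/2}$ rather than the crude $2GR$ governs the deterministic term — this is precisely what separates the slow $\widetilde d/n$ regime from the fast $(\widetilde d/n)^{2/(2-\theta)}$ one. A secondary nuisance is that the reference point $\w^*$ varies with $\w$ (it is the closest optimum), so the covering and peeling must keep the pairing $(\w,\w^*)$ under control.
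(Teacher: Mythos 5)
Your route is genuinely different from the paper's. The paper never works with the function-value empirical process: it uses first-order optimality of $\wh$ for $P_n$ to get the product bound $P(\wh)-P(\wh^*)\le\bigl(\left\|\nabla F(\wh)-\nabla F(\wh^*)-[\nabla F_n(\wh)-\nabla F_n(\wh^*)]\right\|_2+\left\|\nabla F(\wh^*)-\nabla F_n(\wh^*)\right\|_2\bigr)\|\wh-\wh^*\|_2$, controls the two gradient deviations uniformly by vector-valued Bernstein (Smale--Zhou) plus covering --- with variance localized by smoothness co-coercivity, $\E\|\nabla f(\w,\z)-\nabla f(\w^*,\z)\|_2^2\le L(P(\w)-P_*)$, and by self-bounding at the optimum, $\E\|\nabla f(\w^*,\z)\|_2^2\le 4LP_*$ --- and then lets EBC enter only once, linearly, through Lemma~\ref{lem:useEBC}, which trades $\|\w-\w^*\|_2\sqrt{A}$ for $A^{1/(2-\theta)}$ plus an absorbable $\epsilon(P(\w)-P_*)$. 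You instead run the classical localization scheme on the scalar excess-loss process: self-bounding plus EBC to localize the variance, scalar Bernstein plus covering for uniformity (the dyadic peeling is actually unnecessary, since each net point gets its own deterministic variance and range), then a fixed-point resolution. Both decompositions can deliver the stated rates; the paper's buys a seamless treatment of the endpoint $\theta=1$, since EBC never appears as a power of $\widehat\Delta$ that can reach one, while yours avoids vector concentration and the gradient bookkeeping.

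There are two concrete gaps. \textbf{(i)} Your Step 3 fails at $\theta=1$, which the theorem includes and which is the case of every example in the paper: there $\widehat\Delta^{(\theta+1)/2}=\widehat\Delta$, the cited elementary fact requires $p_i<1$, and $c_1^{1/(1-p_1)}$ is undefined; moreover, even for $\theta<1$ the fixed point of that term carries a constant of order $(L\alpha)^{1/(1-\theta)}$ that blows up as $\theta\to1$. The repair --- which is exactly what the paper does for its first, all-$n$ bound --- is to \emph{not} apply EBC to this variance piece: bound $\|\wh-\wh^*\|_2^2\le 4R^2$ so that $V\lesssim LR^2\widehat\Delta+L\alpha P_*\widehat\Delta^{\theta}$, whence $\sqrt{LR^2\widehat\Delta\,\widetilde d/n}\le\frac14\widehat\Delta+O(LR^2\widetilde d/n)$ by Young's inequality, valid for every $\theta\in(0,1]$ and every $n$; reserve EBC for the $\sqrt{P_*}$ term, and invoke it on the quadratic piece only under the stated sample-size condition (which is precisely what licenses absorbing $\sqrt{L\alpha\widetilde d/n}\,\widehat\Delta$ when $\theta=1$) to get the second bound. \textbf{(ii)} Your justification of the variance bound routes $F(\wh)-F(\wh^*)$ through the Lipschitz constant of $r$, producing an extra variance term of order $L\alpha^{3/2}G'\widehat\Delta^{3\theta/2}$ whose fixed point is $(\widetilde d/n)^{2/(4-3\theta)}$; this is slower than $(\widetilde d/n)^{2/(2-\theta)}$ for all $\theta<1$ and slower than $\widetilde d/n$ for $\theta<2/3$, so if kept it destroys both claims. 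Since you already assume $r\ge0$ to write $F(\wh^*)\le P_*$, use $F(\wh)\le P(\wh)=P_*+\widehat\Delta$ directly and this term never appears. With these two repairs your outline goes through.
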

{\bf Remark:} The constant in big $O$ and $\Omega$ can be seen from the proof, which is tedious and included in the supplement. Here we focus on the  understanding of the results.  First,  the above results are optimistic rates that are no worse than that in  Theorem~\ref{thm:1}. Second, the first result implies that when the optimal risk $P_*$ is less than $O((\frac{d\log n}{n})^{1-\theta})$, the excess risk bound is in the order of $O(\frac{d\log n}{n})$. Third, when the number of samples $n$ is sufficiently large the second result can imply a faster rate than $O(\frac{d\log n}{n})$. Considering smooth functions  presented in Section~\ref{sec:exam} with $\theta=1$, when $n\geq \Omega(\alpha d\log n)$ and $P_*\leq O(d\log n/n)$ (large-sample and small optimal risk), the excess risk can be bounded by $O\left(\left(\frac{d\log n}{n}\right)^2\right)$. In another word, the sample complexity for achieving an $\epsilon$-excess risk bound is given by $\widetilde O\left(\frac{d}{\sqrt{\epsilon}}\right)$.  To the best of our knowledge, the sample complexity of ERM in the order of $1/\sqrt{\epsilon}$ for these examples is the first result appearing in the literature. 

In terms of analysis, we follow the  framework developed in~\citep{DBLP:journals/corr/0005YJ17}, which converts the excess risk bound of $\wh$  into large deviation of gradients. In particular, if we let $F(\w) = \E[f(\w; \z)]$ and $F_n(\w) = \frac{1}{n}\sum_{i=1}^nf(\w; \z_i)$, we prove the following lemma. 
\begin{lem}If we let $\wh^*$ be an optimal solution to $\min_{\w\in\W}P(\w)$ that is closest to $\wh$, then we have
	\begin{equation*}
	\begin{aligned}
	& P(\wh) - P(\wh^*)\\
	&\leq (\left\| \nabla F(\wh) - \nabla F(\wh^*) - [\nabla F_n(\wh) - \nabla F_n(\wh^*)]\right\|_2 + \left\|\nabla F(\wh^*) - \nabla F_n(\wh^*)\right\|_2)\cdot\|\wh - \wh^*\|_2
	\end{aligned}
	\end{equation*}
\end{lem}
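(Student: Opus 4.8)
The plan is to combine the first-order optimality of $\wh$ for the empirical objective with the convexity of the population objective, so that the excess quantity $P(\wh)-P(\wh^*)$ collapses to a single inner product involving $\nabla F(\wh)-\nabla F_n(\wh)$, which then telescopes into exactly the two gradient norms in the claim. First I would record the structural facts: under Assumption~\ref{ass:3} each $f(\cdot,\z)$ is convex and smooth, so $F$ and $F_n$ are differentiable and convex with $\nabla F(\w)=\E[\nabla f(\w,\z)]$ and $\nabla F_n(\w)=\frac1n\sum_i\nabla f(\w,\z_i)$, while $r$ is convex by Assumption~\ref{ass:1}; thus $P=F+r$ and $P_n=F_n+r$. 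Since $\wh\in\arg\min_{\w\in\W}P_n(\w)$, the first-order optimality condition supplies a subgradient $\widehat{\xi}\in\partial r(\wh)$ with $\langle \nabla F_n(\wh)+\widehat{\xi},\,\w-\wh\rangle\geq 0$ for all $\w\in\W$; taking $\w=\wh^*$ gives $\langle \nabla F_n(\wh)+\widehat{\xi},\,\wh-\wh^*\rangle\leq 0$. I would fix this particular $\widehat{\xi}$ and reuse it below.

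Next I would upper bound the excess term by convexity. Convexity of $F$ gives $F(\wh)-F(\wh^*)\leq\langle\nabla F(\wh),\wh-\wh^*\rangle$, and the subgradient inequality for $r$ at the same $\widehat{\xi}$ gives $r(\wh)-r(\wh^*)\leq\langle\widehat{\xi},\wh-\wh^*\rangle$. Summing yields $P(\wh)-P(\wh^*)\leq\langle\nabla F(\wh)+\widehat{\xi},\,\wh-\wh^*\rangle$. Subtracting the nonpositive optimality inequality from the right-hand side cancels the $\widehat{\xi}$ term exactly and leaves $P(\wh)-P(\wh^*)\leq\langle\nabla F(\wh)-\nabla F_n(\wh),\,\wh-\wh^*\rangle$. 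I would then add and subtract $\nabla F(\wh^*)-\nabla F_n(\wh^*)$ inside the inner product, so the gradient gap splits as $\big[\nabla F(\wh)-\nabla F(\wh^*)\big]-\big[\nabla F_n(\wh)-\nabla F_n(\wh^*)\big]$ plus $\big[\nabla F(\wh^*)-\nabla F_n(\wh^*)\big]$, and close with the triangle inequality and Cauchy--Schwarz to obtain precisely the stated bound.

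The only delicate point — the hard part, such as it is — is the bookkeeping with the subgradient of $r$: the identical element $\widehat{\xi}\in\partial r(\wh)$ must appear both in the optimality inequality and in the subgradient lower bound $r(\wh^*)\geq r(\wh)+\langle\widehat{\xi},\wh^*-\wh\rangle$, so that it cancels without residue. Because the subgradient inequality holds for every element of $\partial r(\wh)$, committing to the one furnished by optimality is free, and in the unregularized case $r\equiv 0$ the argument reduces to pure convexity plus the optimality of $\wh$. I would also note that the derivation never uses that $\wh^*$ minimizes $P$: the bound holds verbatim for any $\wh^*\in\W$, and the specific choice of the closest optimal point becomes relevant only afterward, when this inequality is paired with the EBC to control $\|\wh-\wh^*\|_2$ and thereby convert the large deviation of gradients into the excess-risk rate.
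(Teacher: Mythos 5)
Your proposal is correct and follows essentially the same route as the paper's proof: convexity of $P$ (split as $\nabla F$ plus a subgradient of $r$), the first-order optimality of $\wh$ for $P_n$ with the \emph{same} subgradient $\v\in\partial r(\wh)$ so that the regularizer terms cancel, an add-and-subtract decomposition of $\nabla F(\wh)-\nabla F_n(\wh)$ into the two deviation terms, and Cauchy--Schwarz. The only difference is cosmetic ordering — you invoke optimality to collapse to $\langle \nabla F(\wh)-\nabla F_n(\wh),\wh-\wh^*\rangle$ before decomposing, while the paper decomposes first and then drops the nonpositive $\langle \partial P_n(\wh),\wh-\wh^*\rangle$ term — and your closing remark that the inequality holds for arbitrary $\wh^*\in\W$ is a correct (if unused) observation.
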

Then we use concentration inequalities, covering numbers,  and a refined analysis leveraging the EBC to bound the excess risk, where the refined analysis leveraging the EBC is our main contribution for proving  Theorem~\ref{thm:2}.


\section{Efficient Stochastic Approximation for Lipschitz continuous random functions}
In this section, we will present intermediate rates of an efficient stochastic approximation  algorithm for solving~(\ref{eqn:opt})  adaptive to the EBC under the {Assumption~\ref{ass:1} and~\ref{ass:2}}. Note that (\ref{eqn:opt2}) can be considered as a special case by absorbing $g(\w)$ into $f(\w, \z)$.   


Denote by $\z_1,\ldots \z_k, \ldots$  i.i.d samples drawn sequentially from the distribution $\P$,  by $g_k\in\partial f(\w,\z_k)|_{\w=\w_k}$ a \textit{stochastic subgradient} evaluated at $\w_k$ with sample $\z_k$, and by $\B(\w, R)$  a bounded ball centered at $\w$ with a radius $R$. By the Lipschitz continuity of $f$, we have $\|\partial f(\w, \z)\|_2\leq G$ for $\forall \w\in\mathcal{W},\forall\z\in\Z$.

The proposed adaptive stochastic approximation algorithm is presented in Algorithm~\ref{alg:FSA}, which is referred to as ASA. The updates are divided into $m$ stages, where at each stage a stochastic subgradient method (Algorithm~\ref{alg:sa}) is employed for running $n_0 = \lfloor n/m\rfloor$ iterations with a constant step size $\gamma_k$. The step size $\gamma_k$ will be decreased by half after each stage and the next stage will be warm-started using the solution returned from the last stage as the initial solution. The projection onto the intersection of $\W$ and a shrinking bounded ball at each stage is a commonly used trick for the high probability analysis~\citep{hazan-20110-beyond,juditsky2014,DBLP:journals/corr/abs-1607-01027}.  We  emphasize that the subroutine in ASA can be  replaced by other SA algorithms, e.g., the proximal variant of stochastic subgradient for handling a non-smooth deterministic component such as $\ell_1$ norm regularization~\citep{duchi-2009-efficient},  stochastic mirror descent with with a $p$-norm divergence function~\citep{conf/colt/DuchiSST10}, and etc. Please see an example in the supplement. 

It is worth mentioning that the dividing schema of ASA is due to~\citep{juditsky2014}, which however restricts its analysis to  uniformly convex functions where uniform convexity is a stronger condition than the EBC. ASA is also similar to a recently proposed accelerated stochastic subgradient (ASSG) method under the EBC~\citep{DBLP:journals/corr/abs-1607-01027}. However, the key differences are that (i) ASA is developed for a fixed number of iterations while ASSG is developed for a fixed accuracy level $\epsilon$; (ii) the adaptive iteration complexity of ASSG requires knowing the value of $\theta\in(0,2]$  while ASA does not require the value of $\theta$. As a trade-off, we restrict our attention to $\theta\in(0,1]$. 
		\begin{algorithm}[t]
			\caption{SSG$(\w_1,\gamma,T,\W)$}
			\label{alg:sa}
			\begin{algorithmic}[1]
				\REQUIRE~~$\w_1\in \mathcal{W}$, $\gamma>0$ and  $T$
				\ENSURE~~$\wh_T$
				\FOR {$t=1,\ldots, T$}
				\STATE $\w_{t+1}=\Pi_{\mathcal{W}}(\w_t-\gamma g_t)$
				\ENDFOR
				\STATE $\wh_T=\frac{1}{T+1}\sum_{t=1}^{T+1}{\w_t}$
				\RETURN $\wh_T$
			\end{algorithmic}
		\end{algorithm}
	%
		\begin{algorithm}[t]
			\caption{ASA($\w_1,n,R_0$)}
			\label{alg:FSA}
			\begin{algorithmic}[1]
				\STATE Set $\wh_0=\w_1$, $m=\lfloor \frac{1}{2}\log_2\frac{2n}{\log_2 n}\rfloor-1$, $n_0=\lfloor n/m\rfloor$ 
				\FOR{$k=1,\ldots,m$}
				\STATE Set $\gamma_k=\frac{R_{k-1}}{G\sqrt{n_0+1}}$ and $R_k = R_{k-1}/2$
				\STATE $\wh_{k}=\text{SSG}(\wh_{k-1},\gamma_k,n_0,\W\cap\mathcal{B}(\wh_{k-1},R_{k-1}))$
				\ENDFOR
				\RETURN $\wh_m$
			\end{algorithmic}
		\end{algorithm}

\begin{thm}[{\bf Result III}]
	\label{thm:high}
	Suppose {Assumptions~\ref{ass:1} and~\ref{ass:2}} hold, and $\|\w_1 - \w^*\|_2\leq R_0$, where $\w^*$ is the closest optimal solution to $\w_1$.  
	For  $n\geq 100$ and any $\delta\in(0,1)$, with probability at least $1-\delta$, we have
	\begin{equation*}
		P(\wh_m)-P_*
		\leq O\bigg(\frac{\bar\alpha(\log (n)\log(\log(n)/\delta))}{n}\bigg)^{\frac{1}{2-\theta}}.
	\end{equation*}
	where $\bar\alpha=\max(\alpha G^2, (R_0G)^{2-\theta})$. 
\end{thm}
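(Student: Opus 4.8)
The plan is to analyze the $m$ stages of ASA by a stagewise induction that couples a per-stage high-probability guarantee for SSG with the error bound condition, so that the distance to $\W_*$ contracts geometrically until it reaches a noise floor. Throughout, write $\w^*_k$ for the optimal solution closest to $\wh_k$ and set $\delta_k=\delta/m$. For the \emph{per-stage bound}, I fix a stage $k$ and condition on all randomness before it, so that the warm start $\wh_{k-1}$ and the comparison point $\w^*_{k-1}$ are deterministic. With constant step $\gamma_k=R_{k-1}/(G\sqrt{n_0+1})$ over $\W\cap\B(\wh_{k-1},R_{k-1})$, the projected-subgradient telescoping identity gives, for any feasible $\u$, $\sum_t(P(\w_t)-P(\u))\le \frac{\|\w_1-\u\|_2^2}{2\gamma_k}+\frac{\gamma_k G^2(n_0+1)}{2}+M_k(\u)$ with $M_k(\u)=\sum_t\langle\nabla P(\w_t)-g_t,\w_t-\u\rangle$ a martingale whose increments are at most $4GR_{k-1}$, since all iterates and $\u$ lie in the radius-$R_{k-1}$ ball. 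Azuma's inequality bounds $|M_k(\u)|$ by $O(GR_{k-1}\sqrt{(n_0+1)\log(1/\delta_k)})$ with probability $1-\delta_k$; dividing by $n_0+1$, applying convexity to the average $\wh_k$, and inserting $\gamma_k$ yields, for $\u=\w^*_{k-1}$ (when feasible), the \emph{strong} bound $P(\wh_k)-P_*\le \epsilon_k:=\frac{GR_{k-1}}{\sqrt{n_0+1}}\big(1+c\sqrt{\log(1/\delta_k)}\big)$, and, for $\u=\wh_{k-1}$ (always feasible as the ball center), the \emph{weak} increment bound $P(\wh_k)-P(\wh_{k-1})\le\epsilon_k$. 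Because the comparator is frozen conditionally on the past, no uniform/covering argument over $\W$ is needed, which is precisely why the rate carries no dimension factor.

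Next I would establish \emph{contraction via EBC}, proving $\|\wh_k-\w^*_k\|_2\le R_k$ by induction with base case $\|\wh_0-\w^*\|_2=\|\w_1-\w^*\|_2\le R_0$. If it holds at $k-1$ then $\w^*_{k-1}$ is feasible, so the strong bound applies and EBC gives $\|\wh_k-\w^*_k\|_2^2\le\alpha\epsilon_k^\theta$. Since $\epsilon_k\propto R_{k-1}$, the demand $\alpha\epsilon_k^\theta\le R_k^2=R_{k-1}^2/4$ collapses to $R_{k-1}^{2-\theta}\ge\rho$ for a fixed threshold $\rho\asymp\alpha G^\theta(\log(1/\delta_k))^{\theta/2}/n_0^{\theta/2}$; as $R_{k-1}=R_0/2^{k-1}$ shrinks geometrically this holds for all $k$ up to some $\bar k$, and while it holds the gap halves each stage.

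The main obstacle is \emph{overshooting the floor}. When the error bound is weak ($\alpha$ large, $\theta$ small) or $R_0$ is small, $R_k$ drops below $R_{\mathrm{floor}}:=\rho^{1/(2-\theta)}$ before stage $m$; then the invariant cannot propagate, $\w^*$ may leave the constraint ball, and the strong bound fails. The resolution is that the weak increment bound still holds at \emph{every} stage and the increments $\epsilon_{\bar k+j}\propto R_{\bar k+j-1}$ decay geometrically, so the total post-floor drift telescopes: $\sum_{j\ge1}\epsilon_{\bar k+j}\le\frac{2GR_{\bar k}}{\sqrt{n_0+1}}\big(1+c\sqrt{\log(1/\delta_k)}\big)\asymp\hat\epsilon$, where $\hat\epsilon\asymp(\alpha G^2\log(1/\delta_k)/n_0)^{1/(2-\theta)}$ is the gap achieved at the floor. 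Hence $P(\wh_m)-P_*\le(P(\wh_{\bar k})-P_*)+\sum_{j\ge1}\epsilon_{\bar k+j}=O(\hat\epsilon)$: running past the floor is harmless because the halving step sizes make the extra drift no larger than the floor gap itself. This geometric-drift argument is the crux, since it is what frees the analysis from having to pick $m$ as a function of the unknown $\theta,\alpha$.

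Finally, for \emph{assembling the bound}, in all cases $P(\wh_m)-P_*=O(\max(\epsilon_m,\hat\epsilon))$, where $\epsilon_m\asymp GR_0\frac{\log n}{n}\sqrt{\log(1/\delta_k)}$ is the value when the floor is never reached. A union bound over the $m=\Theta(\log n)$ stages with $\delta_k=\Theta(\delta/m)$ secures confidence $1-\delta$ and turns $\log(1/\delta_k)$ into $\log(\log n/\delta)$, while $n_0=\lfloor n/m\rfloor=\Theta(n/\log n)$ turns $1/n_0$ into $\Theta(\log n/n)$; the specific choice making $2^{2m}\asymp n/\log n$ is exactly what produces these two logarithmic factors. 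Substituting, and bounding $\epsilon_m$ (which for $\theta\le1$ is dominated by the distance term) gives $\max(\epsilon_m,\hat\epsilon)\le O\big((\bar\alpha\,\log n\,\log(\log n/\delta)/n)^{1/(2-\theta)}\big)$ with $\bar\alpha=\max(\alpha G^2,(R_0G)^{2-\theta})$, which is the claim.
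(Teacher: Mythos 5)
Your proposal is correct and follows essentially the same route as the paper's proof: the per-stage strong bound and weak increment bound you derive via Azuma are exactly the paper's Proposition~\ref{prop2} and Lemma~\ref{lemma:new}, and your EBC-driven contraction to a noise floor followed by geometric drift control past the floor is the paper's Case~1/Case~2 argument with $k^*$ playing the role of your $\bar k$. The only cosmetic difference is how $\bar\alpha$ appears: the paper enlarges $\alpha$ to $\max(\alpha, R_0^{2-\theta}/G^{\theta})$ up front so the floor is always reached, whereas you keep a separate ``floor never reached'' case and bound $\epsilon_m$ by the $(R_0G)^{2-\theta}$ term — these are equivalent.
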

{\bf Remark:} The significance of the result is that although Algorithm~\ref{alg:FSA} does not utilize any knowledge about EBC,  it is automatically adaptive to the EBC. 
As a final note, the projection onto the intersection of $\W$ and a bounded ball can be efficiently computed by employing the projection onto $\W$ and a binary search for the Lagrangian multiplier of the ball constraint. Moreover, we can replace the subroutine with a slightly different variant of SSG to get around of the  projection onto the intersection of $\W$ and a bounded ball, which is presented in the supplement.

\section{Applications}\label{sec:exam}
In this section, we will present some applications of the developed theories and algorithms in machine learning and other fields by leveraging existing results of EBC. From the last two sections, we can see that $\theta=1$ is a favorable case, which yields the fastest rate in our results. It is obvious that if $f(\w,\z)$ is strongly convex or  $P(\w)$ is strongly convex, then EBC$(\theta=1, \alpha)$ holds. Below we show some examples of problem~(\ref{eqn:opt}) and~(\ref{eqn:opt2}) with $\theta=1$ without strong convexity, which not only recover some known results of fast rate $\widetilde O(d/n)$, but also induce new results of fast rates that are even faster than $\widetilde O(d/n)$. 

\paragraph{Quadratic Problems (QP):}
\begin{equation}\label{eqn:general}
	\min_{\w\in\W}P(\w)\triangleq  \w^{\top}\E_{\z}[A(\z)]\w  +  \w^{\top}\E_{\z'}[\b(\z')] + c
\end{equation}
where  $c$ is a constant. The random function can be taken as $f(\w, \z, \z')=\w^{\top}A(\z)\w + \w^{\top}\b(\z')+c$. We have the following corollary. 

\begin{cor}\label{cor:1}
If  $\E_{\z}[A(\z)]$ is a positive semi-definite  matrix (not necessarily  positive definite) and $\W$ is a bounded polyhedron, then the problem~(\ref{eqn:general})  satisfies EBC$(\theta=1, \alpha)$. Assume that $\max(\|A(\z)\|_2, \|b(\z')\|_2)\leq \sigma<\infty$, then ERM has a fast rate at least $\widetilde O(d/n)$. If $f(\w, \z, \z')$ is further  non-negative, convex and  smooth, then ERM has a fast rate of $\widetilde O((\frac{d}{n})^2 + \frac{dP_*}{n})$ when $n\geq \Omega(d\log n)$. ASA has a convergence rate of $\widetilde O(1/n)$. 
\end{cor}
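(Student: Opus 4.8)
The corollary bundles four assertions, and the plan is to reduce all of them to a single structural fact: that the quadratic problem~(\ref{eqn:general}) satisfies $\text{EBC}(\theta=1,\alpha)$. Once this is in hand, the three rate claims follow by checking the relevant Lipschitz/smoothness assumptions and substituting $\theta=1$ into Theorems~\ref{thm:1},~\ref{thm:2}, and~\ref{thm:high}. I would therefore establish the error bound first, since it is the only nontrivial ingredient, and treat the three rates as bookkeeping.

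For the error bound, write $H=\E_\z[A(\z)]$ and $\b_0=\E_{\z'}[\b(\z')]$ (taking $A(\z)$ symmetric without loss of generality), so that $P(\w)=\w^\top H\w+\b_0^\top\w+c$ with $H\succeq 0$. Factor $H=M^\top M$ and fix $\w\in\W$ with closest optimal point $\w^*$. A direct expansion gives the exact identity
\begin{align*}
P(\w)-P_*=\|M(\w-\w^*)\|_2^2+\nabla P(\w^*)^\top(\w-\w^*),
\end{align*}
and by first-order optimality of $\w^*$ over $\W$ the linear term is nonnegative. The difficulty is that $H$ is only positive \emph{semi}-definite, so the quadratic term $\|M(\w-\w^*)\|_2^2$ controls only the range-of-$M$ component of $\w-\w^*$ and says nothing about the null-space directions. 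I would resolve this through the polyhedral structure: since $P$ is constant on the convex set $\W_*$, one checks (using $H\succeq 0$) that $M\w$ and $\b_0^\top\w$ are both constant over $\W_*$, so $\W_*=\W\cap\{\w: M\w=\bar{\u},\ \b_0^\top\w=\bar\beta\}$ is itself a polyhedron. Hoffman's error bound applied to this representation bounds $\|\w-\w^*\|_2=\mathrm{dist}(\w,\W_*)$ by a constant multiple of the residuals $\|M\w-\bar{\u}\|_2=\|M(\w-\w^*)\|_2$ and $\b_0^\top(\w-\w^*)$; squaring, and using boundedness of $\W$ to replace the square of the (bounded) linear residual by its first power, both residuals are absorbed into $\|M(\w-\w^*)\|_2^2+\nabla P(\w^*)^\top(\w-\w^*)=P(\w)-P_*$, yielding $\|\w-\w^*\|_2^2\leq\alpha(P(\w)-P_*)$ for a finite $\alpha$ depending on $M$, $\b_0$, and the Hoffman constant of $\W$. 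This step, namely controlling the null-space directions via linear regularity of the polyhedron, is where the real work lies; it can alternatively be invoked from the Luo--Tseng error-bound theory for $g(M\w)+\b_0^\top\w$ with $g$ strongly convex, or from the semi-algebraic result cited in~\citep{arxiv:1510.08234} specialized to degree-two polynomials.

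With $\theta=1$ established, the ERM rates are immediate. Boundedness of $\W$ ($\|\w\|_2\leq R$) together with $\max(\|A(\z)\|_2,\|\b(\z')\|_2)\leq\sigma$ gives $\|\nabla f(\w,\z,\z')\|_2\leq 2\sigma R+\sigma$, so $f$ is Lipschitz and Assumption~\ref{ass:2} holds; Theorem~\ref{thm:1} with $\theta=1$ (exponent $1/(2-\theta)=1$) then yields $P(\wh)-P_*\leq\O(d/n)$. If in addition each $f(\cdot,\z,\z')$ is non-negative, convex, and smooth, then $\nabla^2 f=2A(\z)\preceq 2\sigma I$ supplies the smoothness constant $L=2\sigma$ and Assumption~\ref{ass:3} is met; substituting $\theta=1$ into the large-sample branch of Theorem~\ref{thm:2} turns the exponents $2/(2-\theta)$ and $1/(2-\theta)$ into $2$ and $1$, giving the optimistic rate $\O((d/n)^2+dP_*/n)$, with the threshold $n\geq\OMG((\alpha^{1/\theta}d\log n)^{2-\theta})$ collapsing to $n\geq\OMG(d\log n)$ after absorbing the constant $\alpha$.

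Finally, for the SA claim, Assumptions~\ref{ass:1} and~\ref{ass:2} are exactly what was verified above, so Theorem~\ref{thm:high} applies verbatim; plugging $\theta=1$ into its bound turns the exponent $1/(2-\theta)$ into $1$ and gives $P(\wh_m)-P_*\leq\O(1/n)$, as claimed. The only nonroutine part of the entire argument is the polyhedral error bound of the second paragraph.
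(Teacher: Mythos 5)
Your proposal is correct, and its overall architecture (establish $\text{EBC}(\theta=1,\alpha)$ once, then read off the three rates from Theorems~\ref{thm:1},~\ref{thm:2},~\ref{thm:high} with $\theta=1$, after verifying Assumptions~\ref{ass:2} and~\ref{ass:3} from the bound $\max(\|A(\z)\|_2,\|\b(\z')\|_2)\leq\sigma$) matches the paper exactly; the difference is in how the error bound itself is obtained. The paper's proof (Lemma~\ref{lem:3} in the appendix) performs the same factorization $\E_\z[A(\z)]=A^\top A$, observes that the objective is $g(A\w)+\w^\top\E[\b(\z')]+c$ with $g$ strongly convex over a polyhedral constraint, and then simply cites Lemma 12 of~\citep{DBLP:journals/corr/arXiv:1512.03107} (a Luo--Tseng-type error bound), handling $\E_\z[A(\z)]=0$ as a separate case by reduction to the piecewise-linear lemma. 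You instead re-derive that error bound from first principles: the exact expansion $P(\w)-P_*=\|M(\w-\w^*)\|_2^2+\nabla P(\w^*)^\top(\w-\w^*)$, the observation that $M\w$ and $\b_0^\top\w$ are constant on $\W_*$ (so $\W_*$ is the polyhedron $\W\cap\{M\w=\bar\u,\ \b_0^\top\w=\bar\beta\}$), Hoffman's bound, and absorption of the residuals using boundedness of $\W$. What your route buys is self-containedness and uniformity --- it needs no external lemma, and it covers the degenerate case $\E_\z[A(\z)]=0$ without a case split, since Hoffman's bound applies verbatim with $M=0$; what the paper's route buys is brevity. One detail you should spell out when writing this up: the absorption of the linear residual is not literally ``replace its square by its first power,'' because $|\b_0^\top(\w-\w^*)|$ alone is only bounded by $(P(\w)-P_*)+2\|M\w^*\|_2\sqrt{P(\w)-P_*}$; the clean way is to decompose $\b_0^\top(\w-\w^*)=\nabla P(\w^*)^\top(\w-\w^*)-2(M\w^*)^\top M(\w-\w^*)$, square, and bound each piece by a constant multiple of $P(\w)-P_*$ using the nonnegativity of the first term, the bound $\|M(\w-\w^*)\|_2^2\leq P(\w)-P_*$, and boundedness of $\W$. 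With that step made explicit, your argument is complete and constitutes a valid alternative proof of the key lemma.
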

Next, we present some instances of the quadratic problem~(\ref{eqn:general}).\\
\textit{Instance 1 of QP: minimizing the expected square loss.} Consider the following problem: 
\begin{align}\label{eqn:lps}
	\min_{\w\in\W}P(\w)\triangleq  \E_{\x, y}[(\w^{\top}\x - y)^2]
\end{align}
where $\x\in\X, y\in\mathcal Y$ and $\W$ is a bounded polyhedron (e.g., $\ell_1$-ball or $\ell_\infty$-ball). It is not difficult to show that it is an instance of~(\ref{eqn:general}) and has the property that $f(\w, \z, \z')$ is non-negative, smooth, convex, Lipchitz continuous over $\W$. The convergence results in Corollary~\ref{cor:1} for this instance not only recover some known results of $\widetilde O(d/n)$ rate~\citep{705577,arXiv:1605.01288}, but also imply a faster rate than $\widetilde O(d/n)$ in a large-sample regime and an optimistic case when $n\geq \Omega((P_*\vee 1) d\log n)$, where the latter result is the first such result of its own. 

\textit{Instance 2 of QP.} Let us consider the following problem: 
\begin{align}\label{eqn:eig}
	\min_{\w\in\W} P(\w)\triangleq\E_{\z}[\w^{\top}(S - \z\z^{\top})\w] - \w^{\top}\b
\end{align}
where  $S- \E_{\z}[\z\z^{\top}]\succeq 0$. It is notable that the individual loss functions $f(\w, \z) = \w^{\top}(S - \z\z^{\top})\w - \w^{\top}\b$ might be non-convex. A similar problem as~(\ref{eqn:eig}) could arise in computing the leading  eigen-vector of  $\E[\z\z^{\top}]$ by performing  shifted-and-inverted power method over random samples $\z\sim\P$~\citep{DBLP:conf/icml/GarberHJKMNS16}. 

\paragraph{Piecewise Linear Problems (PLP):}
\begin{align}\label{eqn:plp}
	\min_{\w\in\W}P(\w)\triangleq    \E[f(\w, \z)]
\end{align}
where  $\E[f(\w, \z)]$ is a piecewise linear function and $\W$ is a bounded polyhedron. We have the following corollary. 

\begin{cor}\label{cor:2}
If  $\E[f(\w, \z)]$ is piecewise linear and $\W$ is a bounded polyhedron, then the problem~(\ref{eqn:plp})  satisfies EBC$(\theta=1, \alpha)$. If $f(\w, \z)$ is Lipschitz continuous, then ERM has a fast rate at least $\widetilde O(d/n)$, and  ASA has a convergence rate of $\widetilde O(1/n)$.  If $f(\w, \z)$ is further  non-negative and linear, then ERM has a fast rate of $\widetilde O((\frac{d}{n})^2 + \frac{dP_*}{n})$ when $n\geq \Omega(d\log n)$. 
\end{cor}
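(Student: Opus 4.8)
The corollary bundles three assertions, but only the first---that problem~(\ref{eqn:plp}) satisfies $\text{EBC}(\theta=1,\alpha)$---carries the mathematical content; the two rate statements are then obtained by specializing the general theorems to $\theta=1$. The plan is therefore to first establish the error bound, and only afterwards to check that the remaining hypotheses of Theorems~\ref{thm:1},~\ref{thm:high}, and~\ref{thm:2} hold so that their conclusions can be read off directly.

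For the error bound I would exploit the polyhedral structure. Since $P(\w)=\E[f(\w,\z)]$ is convex (Assumption~\ref{ass:1}) and piecewise linear on the bounded polyhedron $\W$, it admits a max-of-affine representation $P(\w)=\max_{1\le i\le K}(\a_i^\top\w+b_i)$, and the optimal set $\W_*=\{\w\in\W: \a_i^\top\w+b_i\le P_*,\ i=1,\dots,K\}$ is itself a polyhedron. The crucial step is a linear-growth (weak sharp minima) estimate: by Hoffman's error bound applied to the polyhedral system defining $\W_*$, there is a constant $\tau>0$ with $\mathrm{dist}(\w,\W_*)\le\tau\,\|(\a_i^\top\w+b_i-P_*)_+\|$ for all $\w\in\W$; since each violation satisfies $(\a_i^\top\w+b_i-P_*)_+\le P(\w)-P_*$, this yields $\|\w-\w^*\|_2=\mathrm{dist}(\w,\W_*)\le \tau\sqrt{K}\,(P(\w)-P_*)$. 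Finally I would use boundedness of $\W$ to downgrade the exponent: because $D:=\sup_{\w\in\W}(P(\w)-P_*)<\infty$, we have $(P(\w)-P_*)^2\le D(P(\w)-P_*)$, and hence $\|\w-\w^*\|_2^2\le \tau^2K\,(P(\w)-P_*)^2\le \tau^2 K D\,(P(\w)-P_*)$, which is exactly $\text{EBC}(\theta=1,\alpha)$ with $\alpha=\tau^2KD$.

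Given the error bound, the rate claims follow by direct substitution. If $f$ is Lipschitz continuous, Assumptions~\ref{ass:1} and~\ref{ass:2} hold with $\theta=1$, so Theorem~\ref{thm:1} gives $P(\wh)-P_*\le\widetilde O((d/n)^{1/(2-\theta)})=\widetilde O(d/n)$, while Theorem~\ref{thm:high} gives the ASA rate $\widetilde O((1/n)^{1/(2-\theta)})=\widetilde O(1/n)$; the phrase ``at least'' reflects that $\theta=1$ is the most favorable exponent among the EBC-based bounds. For the last claim, observe that a linear random function is convex and satisfies the smoothness inequality of Assumption~\ref{ass:3} trivially with $L=0$ (its second-order remainder vanishes), while non-negativity is assumed; thus all of Assumptions~\ref{ass:1}--\ref{ass:3} hold with $\theta=1$, and the large-sample branch of Theorem~\ref{thm:2}, whose threshold $n\ge\Omega((\alpha^{1/\theta}d\log n)^{2-\theta})$ reduces to $n\ge\Omega(\alpha d\log n)=\Omega(d\log n)$, yields $P(\wh)-P_*\le\widetilde O((d/n)^{2}+dP_*/n)$.

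The hard part will be the first step: making the linear-growth estimate rigorous. One must confirm that the piecewise-linear-plus-convex hypothesis really produces a max-of-affine representation on all of $\W$, correctly fold the inequalities defining $\W$ into the Hoffman system (so that the distance is measured to $\W_*$ rather than to an unconstrained sublevel set, noting that for $\w\in\W$ only the $\a_i$-constraints can be violated), and control the resulting Hoffman constant $\tau$. A secondary but conceptually important point is the boundedness trick that converts the natural polyhedral bound---which is of order $(P-P_*)^2$ and would read as $\theta=2$ in Definition~\ref{def:1}---into the $\theta=1$ form used throughout the paper; this conversion is precisely where compactness of $\W$ is indispensable.
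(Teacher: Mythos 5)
Your proposal is correct and takes essentially the same route as the paper: the paper's proof (Lemma~\ref{lem:4}) likewise first invokes the weak sharp minima (linear growth) property of polyhedral problems and then uses boundedness of $P(\w)-P_*$ over the compact $\W$ to downgrade the squared gap to EBC$(\theta=1,\alpha)$, after which the rate claims are read off Theorems~\ref{thm:1},~\ref{thm:high} and~\ref{thm:2} at $\theta=1$ exactly as you do. The only difference is that the paper cites the weak sharp minima result for polyhedral functions, while you re-derive it from Hoffman's error bound, which merely makes the same argument self-contained.
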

\textit{Instance 1 of PLP: minimizing the expected hinge loss for bounded data.}
Consider the following problem: 
\begin{align}\label{eqn:hinge}
	\min_{\|\w\|_p\leq B} P(\w)\triangleq \E_{\x, y}[(1 - y\w^{\top}\x)_+]
\end{align}
where $p=1,  \infty$ and $y\in\{1, -1\}$.   Suppose that $\x\in\X$ is bounded and scaled such that  $|\w^{\top}\x|\leq 1$.  \citet{DBLP:conf/nips/KoolenGE16} has considered this instance with $p=2$ and proved that the Bernstein condition (Definition~\ref{def:2}) holds with $\beta=1$ for the problem~(\ref{eqn:hinge}) when $\E[y\x]\neq 0$ and $|\w^{\top}\x|\leq 1$.  In contrast, we can show that the problem~(\ref{eqn:hinge}) with any $p=1, 2, \infty$ norm constraint~\footnote{The case of $p=2$ is showed later. },  the EBC$(\theta=1, \alpha)$ holds since the objective  $P(\w)=1 - \w^{\top}\E[y\x]$ is essentially a linear function of $\w$. Then all results in Corollary~\ref{cor:2} hold. To the best of our knowledge, the fast rates of ERM and SA for this instance with $\ell_1$ and $\ell_\infty$ norm constraint are the new results. In comparison,   \citet{DBLP:conf/nips/KoolenGE16}'s fast rate of $\widetilde O(1/n)$ only applies to SA and $\ell_2$ norm constraint, and their SA algorithm is not as efficient as our SA algorithm.  

\textit{Instance 2 of PLP: multi-dimensional newsvendor problem.}
Consider a firm that manufactures $p$ products from $q$ resources.  Suppose that a manager must decide on a resource vector $\x\in\R^q_+$ before the product demand vector $\z\in\R^p$ is observed. After the
demand becomes known, the manager chooses a production vector $\y \in\R^p$
so as to maximize the operating profit. Assuming  that  the demand $\z$ is a random vector with discrete probability distribution, the problem is equivalent to 
\begin{align*}
\min_{\x\in\R^q_+, \x\leq \b}\mathbf c^{\top}\x - \E[\Pi(\x; \z)]
\end{align*}
where both $\Pi(\x; \z)$ and $\E[\Pi(\x; \z)]$ are piecewise linear functions~\citep{Kim2015}. Then the problem fits to the setting in Corollary~\ref{cor:2}.

\paragraph{Risk Minimization Problems over an $\ell_2$ ball.} Consider the following problem 
\begin{align}\label{eqn:l2}
	\min_{\|\w\|_2\leq B}P(\w)\triangleq\E_{\z}[f(\w, \z)]
\end{align}
Assuming that $P(\w)$ is convex and $\min_{\w\in\R^d}P(\w)< \min_{\|\w\|_2\leq B}P(\w)$, 
we can show that EBC$(\theta=1, \alpha)$ holds  (see supplement). Using this result, we can easily show that the considered problem~(\ref{eqn:hinge}) with $p=2$ satisfies EBC$(\theta=1, \alpha)$. As another corollary, we have the following result.  
\begin{cor}\label{cor:3}
If  $f(\w,\z) = (\w^{\top}\x - y)^2$ is the square loss and $\x, y$ are bounded, then there exists $\theta\in(0,1]$ such that the problem~(\ref{eqn:l2}) with square loss satisfies EBC$(\theta, \alpha)$. 
As a result, the proposed ASA has a convergence rate ranging from  $\widetilde O(1/n^{1/(2-\theta)})$ to $\widetilde O(1/n)$ depending on the data.
\end{cor}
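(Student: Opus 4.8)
The plan is to reduce the entire statement to a single structural fact---that the risk $P(\w)=\E_{\x,y}[(\w^\top\x-y)^2]$ obeys an $\text{EBC}(\theta,\alpha)$ for some $\theta\in(0,1]$ on the $\ell_2$ ball---after which the convergence rate is obtained by directly invoking Theorem~\ref{thm:high}. First I would check the structural hypotheses so that Result III applies: since $\x,y$ are bounded and $\|\w\|_2\le B$, the loss $f(\w,\z)=(\w^\top\x-y)^2$ is Lipschitz continuous over the ball, so Assumption~\ref{ass:2} holds with an explicit $G$, and $P(\w)=\w^\top H\w-2\w^\top\b+c$ with $H=\E[\x\x^\top]\succeq 0$, $\b=\E[y\x]$, $c=\E[y^2]$ is a finite convex quadratic, so Assumption~\ref{ass:1}(i) holds and the ball is a closed bounded convex set. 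Thus the only remaining ingredient is Assumption~\ref{ass:1}(ii), the EBC.

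For the EBC I would give the clean argument first. The map $P$ is a polynomial, hence semi-algebraic and continuous, and the feasible set $\{\|\w\|_2\le B\}$ is compact; by the semi-algebraic error-bound result recalled just after Definition~\ref{def:1}~\citep{arxiv:1510.08234}, inequality~(\ref{eqn:EBC}) then holds on this compact set with some $\theta\in(0,1]$ and $\alpha>0$. Feeding this $\theta$ into Theorem~\ref{thm:high} yields $P(\wh_m)-P_*\le \widetilde{O}(n^{-1/(2-\theta)})$, and since $1/(2-\theta)$ ranges over $[1/2,1]$ as $\theta$ ranges over $(0,1]$, this interpolates between $\widetilde{O}(n^{-1/2})$ and $\widetilde{O}(n^{-1})$, exactly as claimed.

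To make the data-dependence of $\theta$ transparent I would also record the explicit geometry. Writing $\w^*$ for the closest minimizer to $\w$ and $\delta=\w-\w^*$, the two generic regimes in fact yield $\theta=1$: (i) when the constrained minimizer lies on the sphere with a strictly positive KKT multiplier $\mu>0$ (which occurs whenever the unconstrained quadratic is minimized outside the ball, or is unbounded below), the optimality condition $(H+\mu I)\w^*=\b$ together with the feasibility inequality $\|\w\|_2^2\le\|\w^*\|_2^2$ gives $P(\w)-P_*=\delta^\top H\delta-2\mu\,\delta^\top\w^*\ge\mu\|\delta\|_2^2$; (ii) when the minimizer is interior we have $H\w^*=\b$ and $\delta$ lies in the range of $H$, so $P(\w)-P_*=\delta^\top H\delta\ge\lambda_+\|\delta\|_2^2$ with $\lambda_+$ the smallest positive eigenvalue of $H$. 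Both give $\theta=1$ with $\alpha=1/\mu$ or $\alpha=1/\lambda_+$.

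The main obstacle is the borderline configuration where the unconstrained minimizer lies \emph{exactly} on the sphere, so the constraint is only weakly active and $\mu=0$. There the curvature of the quadratic no longer dominates the curvature of the ball: moving along the sphere away from the optimum changes $\|\delta\|_2$ to first order but changes $P$ only to second order, so $\theta=1$ is impossible and one is forced to $\theta<1$ (for instance $P(\w)=w_2^2-2w_2$ on the unit ball gives $\theta=1/2$). This is precisely why the statement asserts only existence of some $\theta\in(0,1]$ and why the rate is ``data dependent.'' The cleanest way to absorb this case uniformly---and the reason I would lead with it---is the semi-algebraic route of the second paragraph, which certifies a valid $\theta$ in every configuration without a case analysis; the explicit computation above is retained only to exhibit the favorable $\theta=1$ regimes and to identify $\alpha$.
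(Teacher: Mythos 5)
Your lead argument is the paper's own proof: the paper likewise observes that the objective is a polynomial, hence semi-algebraic, invokes \citep{arxiv:1510.08234} to obtain inequality~(\ref{eqn:EBC}) on the compact ball for some exponent (reducing any exponent in $(1,2]$ to $\theta=1$ via boundedness of the objective), and then applies Theorem~\ref{thm:high}; so your proposal is correct and essentially identical in approach. One inaccuracy in your supplementary geometric discussion: in case (ii), $\delta=\w-\w^*$ need not lie in the range of $H$, because the projection of $\w$ onto $\W_*$ can land on the sphere---e.g.\ for $H=\mathrm{diag}(1,0)$, $\b=(1/2,0)$, $B=1$ and $\w=(0,1)$, the closest optimum is $(1/2,\sqrt{3}/2)$, so $\delta$ has a nonzero component in $\ker H$ and $P(\w)-P_*=1/4<\lambda_+\|\delta\|_2^2$. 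The conclusion $\theta=1$ still holds in that regime (with a worse constant $\alpha$), and since you present that computation as purely illustrative, this does not affect the validity of your main argument.
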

{\bf Remark:} In this  corollary, we focus on the result for SA, since fast rate of ERM for minimizing expected square loss has been established in literature~(e.g., \citep{705577,arXiv:1605.01288}) by using other techniques and conditions. Efficient SA for minimizing expected square loss under an $\ell_2$-norm constraint with a convergence rate faster than $O(1/\sqrt{n})$ remains rare. For comparison, we compare with two works~\citep{DBLP:conf/nips/BachM13,DBLP:journals/corr/MahdaviJ14}.  \citet{DBLP:journals/corr/MahdaviJ14} proposed a SA algorithm  based on online Newton method for exp-concave loss, which could enjoy  a fast rate of  $\widetilde O(d/n)$ under certain conditions of the data. However, their  algorithm is not as efficient as the proposed ASA due to the online Newton step.  \citet{DBLP:conf/nips/BachM13} analyzed averaged stochastic gradient descent for minimizing expected square loss {\it without any constraint} and established a fast rate of $O(d/n)$ in expectation. However, their convergence result is not a high probability result. 

\paragraph{ $\ell_1$ Regularized Risk Minimization Problems.} For $\ell_1$ regularized risk minimization: 
\begin{align}\label{eqn:l1r}
	\min_{\|\w\|_1\leq B}  P(\w)\triangleq \E[f(\w; \z)]+ \lambda \|\w\|_1,
\end{align} 
we have the following corollary. 
\begin{cor}\label{cor:4}
If the first component is quadratic as in~(\ref{eqn:general}) or is piecewise linear, then the problem~(\ref{eqn:l1r})  satisfies EBC$(\theta=1, \alpha)$. If the random function is Lipschitz continuous, then ERM has a fast rate at least $\widetilde O(d/n)$, and  ASA has a convergence rate of $\widetilde O(1/n)$.  If $f(\w, \z)$ is further  non-negative, convex and smooth, then ERM has a fast rate of $\widetilde O((\frac{d}{n})^2 + \frac{dP_*}{n})$ when $n\geq \Omega(d\log n)$. 
\end{cor}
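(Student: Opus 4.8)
All three rate claims follow from the general theorems once we establish that the objective $P(\w)=\E[f(\w;\z)]+\lambda\|\w\|_1$ over the bounded polyhedron $\{\|\w\|_1\le B\}$ satisfies $\mathrm{EBC}(\theta=1,\alpha)$. The regularizer $r(\w)=\lambda\|\w\|_1$ is Lipschitz continuous and convex on the bounded domain, so Assumption~\ref{ass:1} is met; for the Lipschitz ERM rate (Theorem~\ref{thm:1}) and the ASA rate (Theorem~\ref{thm:high}) it is absorbed into the random function exactly as in those sections, whereas for the smooth ERM rate it is kept as the nonsmooth regularizer of~(\ref{eqn:opt2}), which Theorem~\ref{thm:2} explicitly permits. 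Granting $\mathrm{EBC}(\theta=1)$, Theorem~\ref{thm:1} gives $P(\wh)-P_*\le\O(d/n)$, Theorem~\ref{thm:high} gives $\O(1/n)$, and when $f$ is additionally non-negative, convex and smooth the large-sample bound of Theorem~\ref{thm:2} gives $\O((d/n)^2+dP_*/n)$ once $n\ge\OMG(\alpha d\log n)$. Hence the entire corollary reduces to verifying~(\ref{eqn:EBC}) with $\theta=1$.

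\emph{Piecewise-linear component.} If $\E[f(\w;\z)]$ is piecewise linear, then $P(\w)=\E[f(\w;\z)]+\lambda\|\w\|_1$ is a sum of two polyhedral convex functions and is therefore itself piecewise linear and convex, while $\{\|\w\|_1\le B\}$ is a bounded polyhedron (the cross-polytope). Corollary~\ref{cor:2} then applies verbatim and yields $\mathrm{EBC}(\theta=1,\alpha)$ with no further work.

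\emph{Quadratic component.} This is the one genuinely new step. Here $\E[f(\w;\z)]=\w^\top M\w+\b^\top\w+c$ with $M=\E_\z[A(\z)]\succeq0$, so $P$ is a positive-semidefinite quadratic plus the polyhedral term $\lambda\|\w\|_1$. I would remove the nonsmooth term by the standard variable splitting $\w=\w^+-\w^-$: introduce $\v=(\w^+,\w^-)\in\R^{2d}$ and the lifted polyhedron $K=\{\v:\w^+\ge0,\ \w^-\ge0,\ \mathbf{1}^\top(\w^++\w^-)\le B\}$, again a bounded polyhedron, and set $\widetilde P(\v)=(\w^+-\w^-)^\top M(\w^+-\w^-)+\b^\top(\w^+-\w^-)+c+\lambda\mathbf{1}^\top(\w^++\w^-)$. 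The Hessian of $\widetilde P$ equals $2[I,-I]^\top M[I,-I]\succeq0$, so $\widetilde P$ is a positive-semidefinite quadratic over a bounded polyhedron and Corollary~\ref{cor:1} gives $\mathrm{EBC}(\theta=1,\widetilde\alpha)$ for $\widetilde P$ on $K$. It then remains to transfer this bound back to $\w$-space: for feasible $\w$ take $\w^+=\max(\w,0)$ and $\w^-=\max(-\w,0)$, so that $\v\in K$ and $\widetilde P(\v)=P(\w)$, while the lifted optimal set projects onto the original optimal set. Since $\|\w-\w^*\|_2\le\sqrt{2}\,\|\v-\v^*\|_2$, the lifted inequality $\|\v-\v^*\|_2^2\le\widetilde\alpha(\widetilde P(\v)-\widetilde P_*)$ implies $\|\w-\w^*\|_2^2\le2\widetilde\alpha(P(\w)-P_*)$, i.e.\ $\mathrm{EBC}(\theta=1,\alpha=2\widetilde\alpha)$.

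\emph{Main obstacle.} The crux is the quadratic case, and the real content sits inside the appeal to Corollary~\ref{cor:1}: because $M$ is only positive \emph{semi}definite, quadratic growth cannot be furnished by strong convexity and must instead be extracted from polyhedral structure (a Hoffman/Luo--Tseng-type error bound for degenerate convex quadratics over a polyhedron). The lifting itself is elementary, but two points must be checked carefully so that the exponent-$1$ bound survives the return to the original coordinates: that the correspondence $\w\leftrightarrow\v$ preserves objective values and optimal sets, and that the contraction $\|\w-\w^*\|_2\le\sqrt{2}\,\|\v-\v^*\|_2$ holds with $\w^*$ the point of $\W_*$ closest to $\w$. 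Once $\mathrm{EBC}(\theta=1)$ is secured, the three rates are pure citations of Theorems~\ref{thm:1},~\ref{thm:2} and~\ref{thm:high}.
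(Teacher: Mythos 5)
Your proposal is correct, but for the quadratic case it takes a genuinely different route from the paper's. The paper's own argument (Lemma~\ref{lem:5} in the appendix) handles both cases in one stroke: it observes that the full objective $P(\w)+\lambda\|\w\|_1$ is itself piecewise linear (resp.\ piecewise convex quadratic) and invokes Lemma~3.3 of \citet{DBLP:journals/mp/Li13}, an external error-bound result for piecewise convex quadratic functions, squaring down to exponent one in the piecewise-linear case via boundedness exactly as you do. Your piecewise-linear argument is thus essentially the paper's. For the quadratic component, however, you bypass the piecewise-quadratic machinery entirely: the splitting $\w=\w^+-\w^-$ turns the $\ell_1$ term into a linear functional over the bounded polyhedron $K\subset\R^{2d}$, so the paper's own Corollary~\ref{cor:1} (PSD quadratic over a polyhedron, resting on the Luo--Tseng-type bound behind Lemma~\ref{lem:3}) applies to the lifted problem, and the bound is pulled back through the linear map $T(\w^+,\w^-)=\w^+-\w^-$. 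Your transfer step does hold: the canonical splitting preserves objective values, the lifted and original optimal values coincide, $T$ maps lifted optima into $\W_*$ (replacing $\mathbf{1}^\top(\w^++\w^-)$ by $\|\w^+-\w^-\|_1$ can only decrease the objective while preserving feasibility), and since $TT^\top=2I$ one gets
\begin{equation*}
\|\w-\w^*\|_2^2\leq\|\w-T\v^*\|_2^2\leq 2\|\v-\v^*\|_2^2\leq 2\widetilde{\alpha}\left(P(\w)-P_*\right),
\end{equation*}
i.e.\ EBC$(\theta=1,\,2\widetilde{\alpha})$. As for what each approach buys: the paper's citation is shorter and extends immediately to other polyhedral regularizers added to a convex quadratic; yours is self-contained within the paper's already-proved lemmas and makes the provenance of the constant explicit, at the cost of the lifting bookkeeping. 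Once EBC$(\theta=1)$ is in hand, your handling of the three rate claims---absorbing $r$ into $f$ for Theorems~\ref{thm:1} and~\ref{thm:high}, keeping it as the nonsmooth component for Theorem~\ref{thm:2}---matches the paper.
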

To the best of our knowledge, this above general result is the first of its kind. 

Next, we show some instances satisfying EBC$(\theta, \alpha)$ with $\theta<1$.  Consider the problem below: 
\begin{align}
	\min_{\w\in\W}F(\w)\triangleq  P(\w)+ \lambda\|\w\|_p^p
\end{align}
where $P(\w)$  is quadratic as in~(\ref{eqn:general}), and  $\W$ is a bounded polyhedron.   
In the supplement, we prove that EBC$(\theta=2/p, \alpha)$ holds.  


\begin{figure}[ttt!]
	\begin{minipage}[t]{0.5\textwidth}
		\centering
		\includegraphics[scale=0.3]{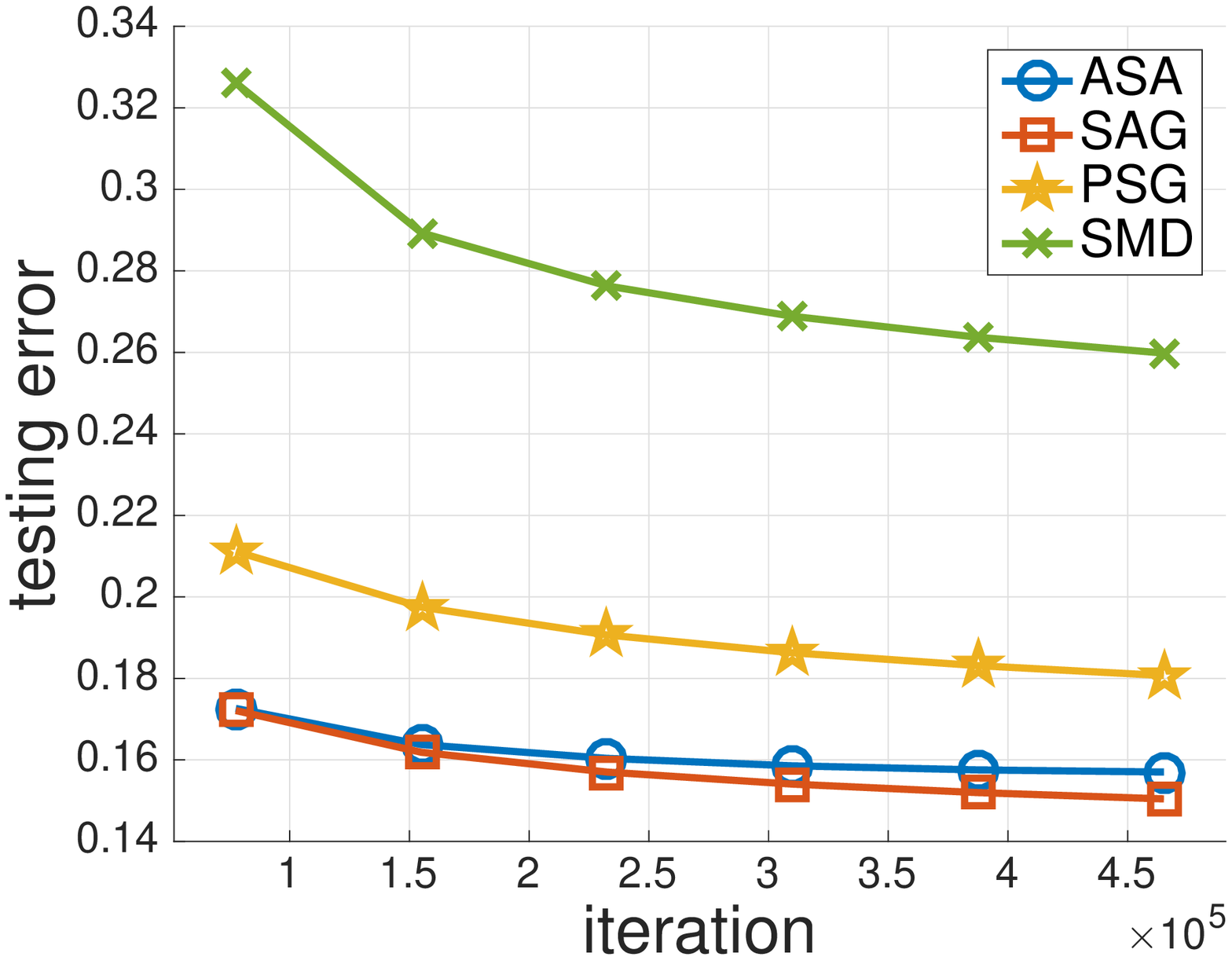}
		\caption*{(a) rcv1\_binary}
		\label{fig:e}
	\end{minipage}
	\begin{minipage}[t]{0.5\textwidth}
		\centering
		\includegraphics[scale=0.3]{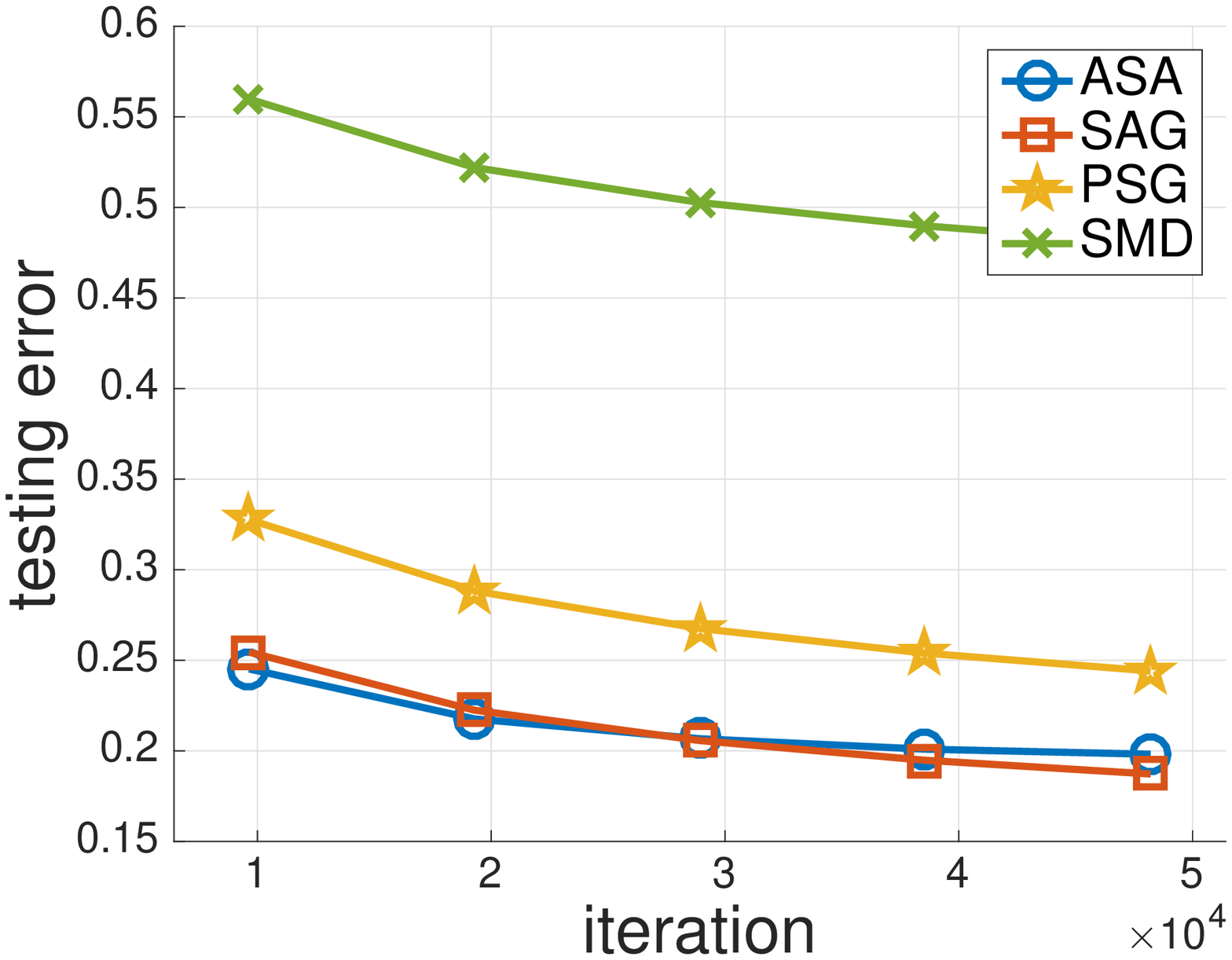}
		\caption*{(b) real-sim}
		\label{fig:f}
	\end{minipage}
	
	\vspace*{0.2in}
	\begin{minipage}[t]{0.5\textwidth}
		\centering
		\includegraphics[scale=0.3]{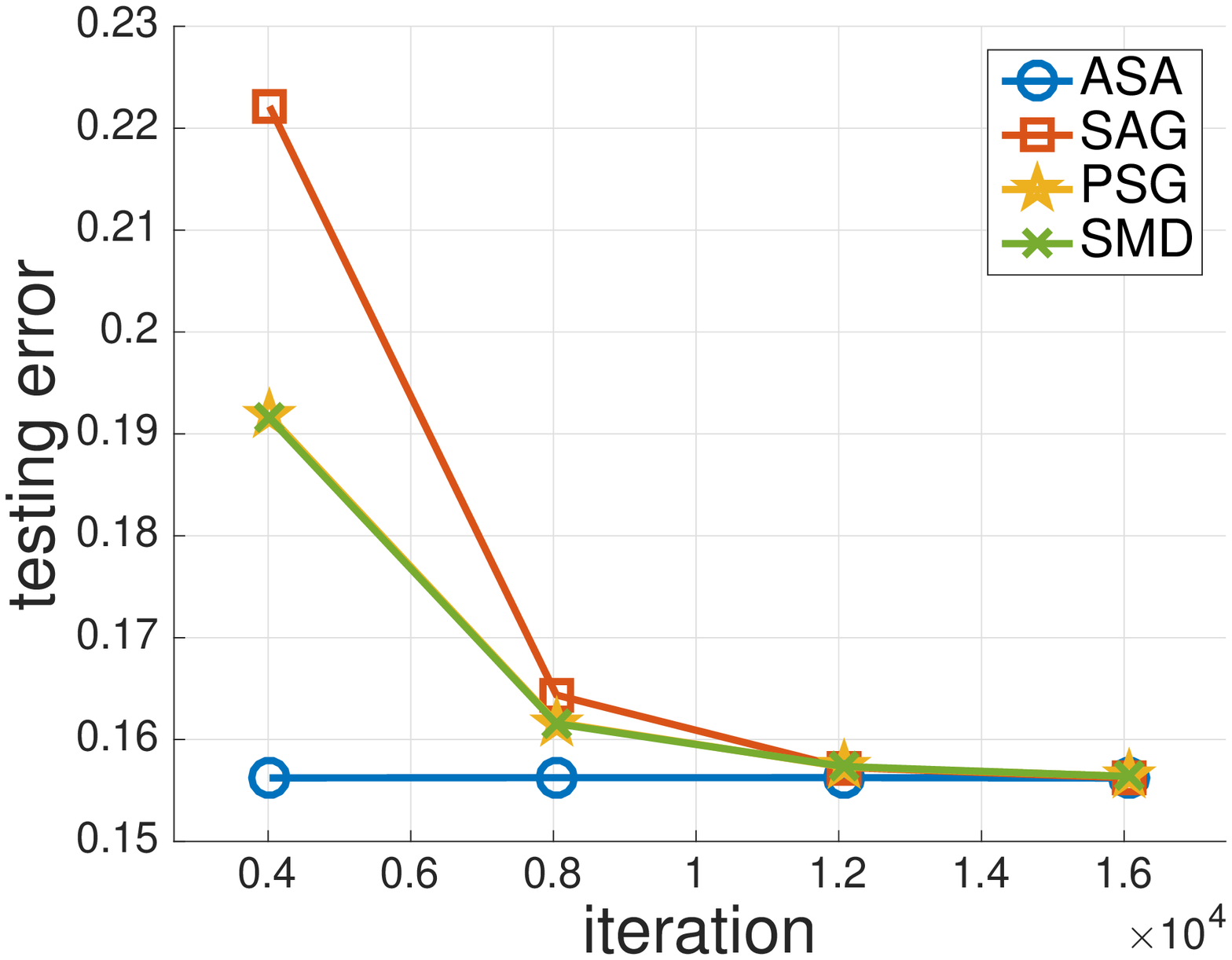}
		\caption*{(c) E2006-tfidf}
		\label{fig:a}
	\end{minipage}
	\begin{minipage}[t]{0.5\textwidth}
		\centering
		\includegraphics[scale=0.3]{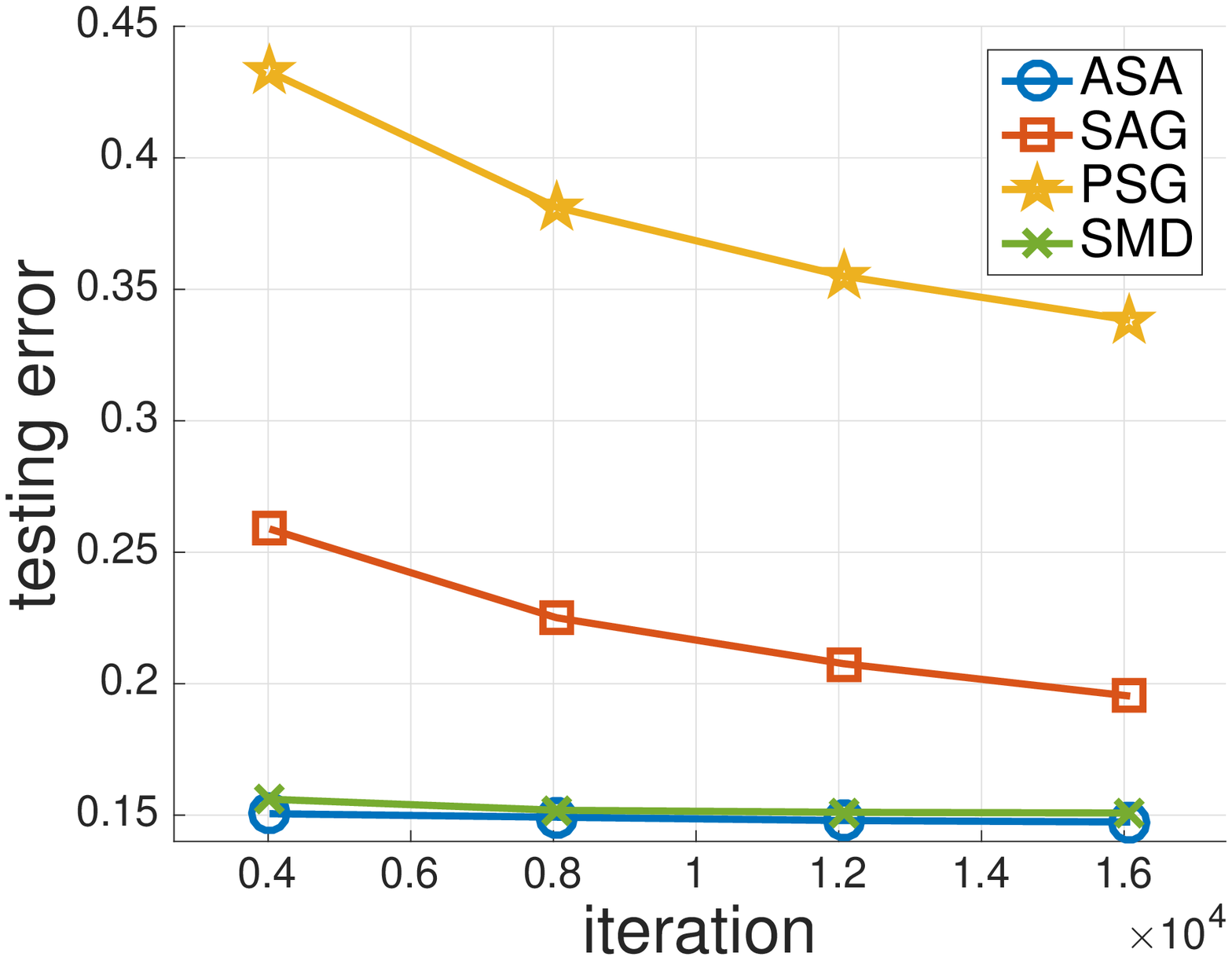}
		\caption*{(d) E2006-log1p}
		\label{fig:a1}
	\end{minipage}
	\caption{Testing Error vs Iteration of ASA and other baselines for SA}
	\label{fig:p2}
\end{figure}

\section{A Case Study for ASA}
In this section, we provide some empirical evidence to support the effectiveness of the proposed ASA algorithm. In particular, we will consider solving an $\ell_1$ regularized expected square loss minimization problem~(\ref{eqn:l1r}) for learning a predictive model. We compare with two baselines whose convergence rate are known as $O(1/\sqrt{n})$, namely proximal stochastic gradient (PSG) method~\citep{duchi-2009-efficient}, and stochastic mirror descent (SMD) method using a $p$-norm divergence function ($p=2\log d$) other than the Euclidean function. For SMD, we implement the algorithm proposed in~\citep{DBLP:journals/jmlr/Shalev-ShwartzT11}, which was proposed for solving~(\ref{eqn:l1r}) and could be effective for very high-dimensional data. For ASA, we implement two versions that use PSG and SMD as the subroutine and report the one that gives the best performance. The two versions differ in using the Euclidean norm or the $p$-norm for measuring distance. 
Since the comparison is focused on the testing error, we also include another strong baseline, i.e, stochastic average gradient (SAG) with a constant step size, which simply minimizes the expected square loss without any constraints or regularizations~\citep{DBLP:conf/nips/BachM13}.


We use four benchmark datasets from libsvm website\footnote{http://www.csie.ntu.edu.tw/~cjlin/libsvmtools/datasets/}, namely, real-sim, rcv1\_binary,  E2006-tfidf, E2006-log1p, whose dimensionality is  20958, 47236, 150360, 4272227, respectively.    
We  divide each dataset into three sets, respectively  training, validation, and testing. For E2006-tfidf and E2006-log1p dataset, we randomly split the given testing set into half validation and half testing. For the dataset real-sim  which do not explicitly provides a testing set, we randomly split the entire data into 4:1:1 for training, validation, and testing. For rcv1\_binary, despite that the test set is given, the size of the training set is relatively small. Thus we first combine the training and the testing sets and then follow the above procedure to split it.

The involved  parameters of each algorithm are tuned based on the validation data.  
With the selected parameters, we run each algorithm by passing through  training examples once and evaluate intermediate models on the testing data to compute the testing error measured by square loss.  
The results on different data sets averaged over 5 random runs over shuffled training examples are shown in  Figure \ref{fig:p2}. From the testing curves, we can see that the proposed ASA has similar convergence rate to SAG on  two relatively low-dimensional  data sets. This is not surprise since both algorithms enjoy an $\widetilde O(1/n)$ convergence rate indicated by their theories. For the data set E2006-tfidf and E2006-log1p, we observe that ASA converges much faster than SAG, which is due to the presence  of $\ell_1$ regularization.   In addition, ASA converges much faster than SGD and SMD with one exception on E2006-log1p, on which ASA performs slightly better than SMD. 



\section{Conclusion}
We have comprehensively studied statistical learning under the error bound condition for both empirical risk minimization and stochastic approximation. We established the connection between the error bound condition and previous conditions for developing fast rates of empirical risk minimization for Lipschitz continuous loss functions. We also developed improved rates for non-negative and smooth convex loss functions, which induce faster rates that were not achieved before. Finally, we analyzed an efficient ``parameter"-free stochastic approximation algorithm under the error bound condition and showed that it is automatically adaptive to the  error bound condition. Applications in machine learning and other fields are considered and empirical studies corroborate the fast rate of the developed algorithms.  

{
	\bibliography{ref,all}
	\bibliographystyle{plain}}

\newpage
\appendix

\section{Proof of Lemma~\ref{lem:1}}
\begin{proof}
	The proof follows similarly as the proof of Theorem 5.4 in~\citep{DBLP:journals/jmlr/ErvenGMRW15}. 
	Let us fix an arbitrary $\w\in\W$ and its closest optimal solution $\w^*\in\W_*$. Let $X = f(\w, \z) - f(\w^*, \z)$ be a random variable due to $\z$. Then $|X|\leq 2GR\triangleq a$.   Let $b>0$ be any finite constant, $\kappa(x)= (e^x - x - 1)/x^2$ for $x\neq 0$ and $\kappa(0) = 1/2$, $c_1^b = 1/\kappa(2ba)$. Let $B=\alpha G^2$ and $v(x) = \frac{c_1^b}{B}x^{1-\theta}\wedge b$. Let $\varepsilon\geq 0$ and set $\eta = v(\varepsilon)\leq \frac{c^b_1}{B}\varepsilon^{1-\theta}$. 
	
	According to our analysis in the paper, we have established a similar  condition to the Bernstein condition under our conditions, i.e., 
	\begin{align*}
		\E_{\z}[(f(\w, \z) - f(\w^*, \z))^2] \leq  B(\E_\z[f(\w, \z) - f(\w^*, \z)])^\theta
	\end{align*}
	where $B= \alpha G^2$. Then 
	\begin{align*}
		\text{Var}[(f(\w, \z) - f(\w^*, \z))] \leq  B(\E_\z[f(\w, \z) - f(\w^*, \z)])^\theta
	\end{align*}
	
	First, when $\varepsilon=0$ we have $\eta = 0$, the  $ \E[e^{-\eta X}]\leq e^{\eta\varepsilon}$ hold trivially.  Thus we focus on the case $\varepsilon>0$, which implies that $\eta>0$. Then Lemma 5.6 in~\citep{DBLP:journals/jmlr/ErvenGMRW15} applied to the random variable $\eta$ gives 
	\begin{align*}
	\E[X] + \frac{1}{\eta}\log \E[e^{-\eta X}]\leq \kappa(2ba)\eta \text{Var}(X)
	\leq \kappa(2ba)\eta B(\E[X])^\theta
	\leq \varepsilon^{1-\theta}(\E[X])^{\theta}.
	\end{align*}
	If $\varepsilon\leq \E[X]$, then $\varepsilon^{1-\theta}(\E[X])^{\theta}\leq \E[X]$, which implies $\frac{1}{\eta}\log \E[e^{-\eta X}]\leq 0\leq \varepsilon$.  This establishes the second part and the first part for $\varepsilon\leq \E[X]$. For $\varepsilon\geq \E[X]$, we have $\varepsilon^{1-\theta}(\E[X])^{\theta}\leq \varepsilon$. Then due to $\E[X]\geq 0$, we have $\frac{1}{\eta}\log \E[e^{-\eta X}]\leq \varepsilon$.
\end{proof}
\section{Proof of Theorem~\ref{thm:1}}
\begin{proof}
	Let $F_\w(\z) = f(\w, \z) - f(\w^*, \z)$, where $\w^*$ is the closest optimal solution to $\w$. Denote by $B= 2GR$. It is clear that $F_\w(\z)\leq B$. The goal is to show that with high probability, ERM does not select any $\w\in\W$ whose excess risk $P(\w) - P_* = \E_\z[F_{\w}(\z)]$ is large than $\left(\frac{a}{n}\right)^{\frac{1}{2-\theta}}$ for some constant $a$. Clearly, with probability $1$ ERM will never select any $\w$ for which both $F_\w(\z)>0$ almost surely and with some positive probability $F_\w(\z)>0$. These predictors are called the empirically inadmissible models. For any $\gamma_n>0$, let $\W_{\geq \gamma_n}$ denote the subclass of models by starting with $\W$, retaining only models whose excess risk is at least $\gamma_n$, and further removing the empirically inadmissible models. 
	
	The goal now can be expressed equivalently as showing that, with high probability, ERM does not select any model $\w\in\W_{\geq\gamma_n}$, where $\gamma_n = \left(\frac{a}{n}\right)^{\frac{1}{2-\theta}}$. Let $\mathcal W_{\geq\gamma_n, \varepsilon}$ be the  optimal proper $(\varepsilon/(2G))$-cover of $\W_{\geq\gamma_n}$.  Note that this cover induces an $\varepsilon$-cover in sup norm over the function class $\{F_\w: \w\in\W_{\geq\gamma_n}\}$.  To see this, for any $\w\in\W_{\geq \gamma_n}$, there exists $\wt\in\W_{\geq\gamma_n, \varepsilon}$ such that $\|\w - \wt\|_2\leq \varepsilon/(2G)$. As a result, 
	\begin{align*}
		\sup_\z|F_{\w}(\z) - F_{\wt}(\z)|
		&= \sup_{\z}|f(\w, \z) - f(\wt, \z)|  + \sup_{\z}|f(\w^*, \z) - f(\wt^*, \z)|\\
		&\leq G\|\w - \wt\|_2 +G\|\w^* - \wt^*\|_2\leq 2G\|\w - \wt\|_2\leq \varepsilon,
	\end{align*}
	where $\w^*, \wt^*$ are projections of $\w$ and $\wt$ onto $\W_*$ and the last inequality uses the non-expansiveness of the projection onto $\W_*$, which is convex due to the convexity of $P(\w)$ and $\W$. Observe that the $\epsilon$-cover  of $\W_{\geq \gamma_n}\subseteq\mathcal B^d(R)$ has cardinality at most $\left(\frac{4R}{\varepsilon}\right)^d$, and the cardinality of an optimal proper $\varepsilon$-cover is at most the cardinality of an optimal $\varepsilon/2$-cover.  It hence follows that $|\W_{\geq\gamma_n, \varepsilon}|\leq \left(\frac{16GR}{\varepsilon}\right)^d$. 
	
	Let us consider a fixed $\w\in\W_{\geq \gamma_n, \varepsilon}$ and its closest optimal solution $\w^*\in\W_*$. According to Lemma~\ref{lem:1}, we have 
	\[
	\E_\z[e^{-v(\gamma_n) F_\w(\z)}] \leq 1
	\]
	Then using Theorem 13 in~\citep{DBLP:journals/corr/GrunwaldM16}, where we set $u = B$ and $c=1$, for all $\eta\in(0, v(\gamma_n))$ we have
	\begin{align*}
		\gamma_n\leq \E_{\z}[F_\w(\z)]   \leq -\frac{\eta B + 1}{1 - \eta/v(\gamma_n)}\frac{1}{\eta}\log\E_{\z}[e^{-\eta F_\w(\z)}]
	\end{align*}
	Let $\eta = v(\gamma_n)/2$, we have 
	\[
	\log \E_{\z}[e^{-(v(\gamma_n) /2)F_\w(\z) }]\leq -\frac{0.5v(\gamma_n)}{Bv(\gamma_n) +  2}\gamma_n
	\]
	Applying Theorem 1 in~\citep{DBLP:conf/nips/MehtaW14} with $t = \frac{\gamma_n}{2}$, we have 
	\begin{align*}
	\Pr\left(\frac{1}{n}\sum_{i=1}^nF_\w(\z_i)\leq \frac{\gamma_n}{2}\right) \leq\exp\left(-\frac{0.5v(\gamma_n)}{Bv(\gamma_n) +  2}n\gamma_n +\frac{v(\gamma_n)\gamma_n}{4}\right).
	\end{align*}
	Assume that $\left(\frac{a}{n}\right)^{\frac{1-\theta}{2-\theta}}\leq \alpha bG^2\kappa(4GRb)$, i.e., $n\geq a\left(\alpha b G^2\kappa(4GRb)\right)^{(2-\theta)/(1-\theta)}$, which implies that $v(\gamma_n) = c\left(\frac{a}{n}\right)^{\frac{1-\theta}{2-\theta}}\wedge b = c\left(\frac{a}{n}\right)^{\frac{1-\theta}{2-\theta}}$ by noting the value of $c = 1/(\alpha G^2\kappa(4GRb))$ in Lemma~\ref{lem:1}.  Further we assume $n\geq a(0.5Bc)^{\frac{2-\theta}{1-\theta}}$. Hence $Bv(\gamma_n)\leq 2$. 
	
	\begin{align*}
	&\Pr\left(\frac{1}{n}\sum_{i=1}^nF_\w(\z_i)\leq \frac{\gamma_n}{2}\right)
	\leq \exp\left(-\frac{0.5v(\gamma_n)}{Bv(\gamma_n) +  2}n\gamma_n +\frac{v(\gamma_n)\gamma_n}{4}\right)\\
	&\leq  \exp\left(-0.125v(\gamma_n)n\gamma_n +\frac{v(\gamma_n)\gamma_n}{4}\right) 
	= \exp\left(-0.125c a +\frac{ca}{4n}\right) \\
	&\leq \exp\left(-0.375c a \right),
	\end{align*}
	where we use $n\geq 1$. 
	
	As a result, we have 
	\[
	\Pr\left(\frac{1}{n}\sum_{i=1}^nF_\w(\z_i)\leq \frac{\gamma_n}{2}\right)\leq  \exp\left(-0.375ca \right)
	\]
	Taking a union bound over $\W_{\geq\gamma_n, \varepsilon}$ 
	we have that 
	\begin{align*}
	&\Pr\left(\exists \w\in\W_{\geq \gamma_n, \varepsilon}, \frac{1}{n}\sum_{i=1}^nF_\w(\z_i)\leq \frac{\gamma_n}{2}\right)\leq \left(\frac{16GR}{\varepsilon}\right)^d\exp\left(-0.375ca\right)
	\end{align*}
	
	Taking $\varepsilon = \frac{1}{2n^{1/(2-\theta)}}$  and $a  = \frac{3}{c} (d\log(32GRn^{1/(2-\theta)}) + \log(1/\delta)) $,  with probability  $1-\delta$ for all $\w\in\W_{\geq\gamma_n, \varepsilon}$, we have $\frac{1}{n}\sum_{i=1}^nF_\w(\z_i)\geq \frac{a^{1/(2-\theta)}}{2n^{1/(2-\theta)}}$. 
	
	Now, since $\sup_{\w\in\W_{\geq\gamma_n}}\min_{\wt\in\W_{\geq\gamma_n, \varepsilon}}\|F_\w - F_\w\|_\infty\leq \varepsilon = \frac{1}{2n^{1/(2-\theta)}}$, and by increasing $a$ by 1 to guarantee that $a>1$, with probability $1-\delta$, for all $\w\in\W_{\geq \gamma_n}$,  $\frac{1}{n}\sum_{i=1}^nF_\w(\z_i)>0$.
\end{proof}

\section{Proof of Theorem~\ref{thm:2}}
\begin{proof}
	We first prove the following theorem. Theorem~\ref{thm:2} is a corollary of the following theorem by setting $\varepsilon = 1/n$.  To be more general, we consider the stochastic composite optimization, 
	\begin{align}\label{eqn:sco2}
		\min_{\w\in\W}P(\w) \triangleq  \E_{\z}[f(\w, \z)] + r(\w)
	\end{align}
	We abuse the notation $F(\w) = \E_{\z}[f(\w,\z)]$ and $F_n(\w) = \frac{1}{n}\sum_{i=1}^nf(\w, \z_i)$ in the following proof. In the following Theorem, we assume the problem~(\ref{eqn:sco2}) satisfies the EBC$(\theta, \alpha)$. 
	\begin{thm} \label{thm:smooth:convex}
		Let $\varepsilon>0$ be any constant and $C(\varepsilon)=2(\log(2/\delta) + d\log(6R/\varepsilon)$. Under {\bf Assumptions~\ref{ass:1},~\ref{ass:2},~\ref{ass:3}}, and that $r(\w)$ is convex and $G'$-Lipschitz continuous over $\W$, with probability at least $1 - 2\delta$, we have
		\begin{equation*}
			\begin{split}
				P(\wh) - P_* &\leq  \frac{4(6LR^2+\bar GR)C(\varepsilon)}{n}
				+  2\left(1\vee \alpha^{1/\theta}\right) \left(\frac{4LC(\epsilon)P_*}{n}\right)^{\frac{1}{2 - \theta}} 
			+ 2\left(12RL + \frac{\bar G}{4} + \frac{4LRC(\varepsilon)}{n}\right)\varepsilon,
			\end{split}
		\end{equation*}
		where $\bar G= G + G'$. 
		Furthermore, if
		$n \geq \left(256LC(\varepsilon)\alpha^{\frac{1}{\theta}}\right)^{(2-\theta)}$,
		we also have
		\begin{align*}
			&P(\wh) - P_*\leq 34LC(\varepsilon)\left(\frac{1}{n}\right)^{\frac{2}{2-\theta}}
			+2 \left(1 \vee 4\alpha^{1/\theta}\right)\left(\frac{\bar GC(\varepsilon)}{n}\right)^{\frac{2}{2-\theta}} + 2\left(1\vee64\alpha^{1/\theta}\right) \left(\frac{4LC(\varepsilon)P_*}{n}\right)^{\frac{1}{2 - \theta}} \\
			&+ 4LC(\varepsilon)\left(1 \vee 64\alpha^{1/\theta}\right)\left(\frac{\varepsilon}{n}\right)^{\frac{2}{2-\theta}} +  12L\left(1 \vee 64\alpha^{1/\theta}\right) \varepsilon^{\frac{2}{2-\theta}} +  2\left(1\vee 64\alpha^{1/\theta}\right) \left(\frac{4L\bar GC(\varepsilon)\varepsilon}{n}\right)^{\frac{1}{2 - \theta}}.\notag
		\end{align*}
	\end{thm}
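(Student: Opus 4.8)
The plan is to reduce the excess risk to a large-deviation bound on gradients, following the framework of \citep{DBLP:journals/corr/0005YJ17}, and then to close the resulting inequality using the EBC. Writing $\Delta = P(\wh) - P_*$, $\wh^*$ for the optimal solution closest to $\wh$, and $D = \|\wh - \wh^*\|_2$, the gradient lemma preceding Theorem~\ref{thm:2} gives $\Delta \le (A + B)\,D$, where $A = \|[\nabla F(\wh) - \nabla F_n(\wh)] - [\nabla F(\wh^*) - \nabla F_n(\wh^*)]\|_2$ is a localized empirical-process term and $B = \|\nabla F(\wh^*) - \nabla F_n(\wh^*)\|_2$ is the gradient deviation at the optimum. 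The whole game is to bound $A$ and $B$ with high probability, uniformly in the data-dependent points $\wh,\wh^*$, and then to use EBC in the form $D \le \sqrt{\alpha}\,\Delta^{\theta/2}$ to turn the lemma into a self-referential inequality for $\Delta$.

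First I would bound $B$ using the self-bounding property of non-negative smooth functions: Assumption~\ref{ass:3} gives $\|\nabla f(\w,\z)\|_2^2 \le 2L f(\w,\z)$, so the per-sample variance of $\nabla f(\wh^*,\cdot)$ is controlled by $2L F(\wh^*) \le 2L P_*$. A Bernstein inequality, made uniform over $\W_*$ (equivalently over an $\varepsilon$-cover of $\W$, which is where $C(\varepsilon) = 2(\log(2/\delta) + d\log(6R/\varepsilon))$ and the discretization terms $\propto\varepsilon$ enter), then yields an \emph{optimistic} bound $B \lesssim \sqrt{L P_* C(\varepsilon)/n} + (LR + \bar{G})\,C(\varepsilon)/n$, whose leading part vanishes as $P_*\to 0$. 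Next I would bound $A$ by localization: smoothness makes each summand $\nabla f(\wh,\z_i) - \nabla f(\wh^*,\z_i)$ have norm at most $LD$, so a Bernstein bound over the same cover gives $A \lesssim LD\sqrt{C(\varepsilon)/n} + LD\,C(\varepsilon)/n$ plus discretization error. The Lipschitz constants $G,G'$ (combined as $\bar{G}$) enter here and in $B$ as the almost-sure range used in the Bernstein bounds and in the covering error.

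Substituting these into $\Delta \le (A+B)D$ and invoking EBC produces an inequality of the schematic form $\Delta \lesssim L\alpha\,\Delta^{\theta}(\sqrt{C(\varepsilon)/n} + C(\varepsilon)/n) + \sqrt{\alpha L P_* C(\varepsilon)/n}\,\Delta^{\theta/2} + \sqrt{\alpha}(LR+\bar{G})\frac{C(\varepsilon)}{n}\,\Delta^{\theta/2} + (\text{terms}\propto\varepsilon)$. I would then resolve this for $\Delta$ by Young's inequality, absorbing a constant fraction of $\Delta$ from each fractional-power term into the left-hand side. The $\Delta^{\theta/2}$ term carrying $\sqrt{\alpha L P_* C/n}$ yields, after solving $\Delta^{(2-\theta)/2} \lesssim \sqrt{\alpha L P_* C/n}$, the optimistic rate $(1\vee\alpha^{1/\theta})(L C P_*/n)^{1/(2-\theta)}$ (using $1/(2-\theta)\le 1/\theta$ to replace $\alpha^{1/(2-\theta)}$ by the stated $\alpha^{1/\theta}$). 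For the first bound I would treat the remaining range/slow terms crudely, bounding the residual $D$ by the diameter $2R$, which collapses them into the clean $1/n$ term $LR^2 C(\varepsilon)/n$; this avoids any sample-size condition. For the second, faster bound I instead keep $D \le \sqrt{\alpha}\,\Delta^{\theta/2}$ in the slow terms and solve $\Delta^{(2-\theta)/2}\lesssim \sqrt{\alpha}(LR+\bar{G})C/n$, which produces the $(1/n)^{2/(2-\theta)}$ and $(\bar{G} C/n)^{2/(2-\theta)}$ terms.

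The main obstacle is the term coming from $A$, namely $L\alpha\,\Delta^{\theta}\sqrt{C(\varepsilon)/n}$: its coefficient is not automatically small, so it cannot in general be absorbed into $\frac{1}{2}\Delta$. For the first bound it is controlled by Young's inequality together with the crude range bound; for the second bound this is exactly why the large-sample hypothesis $n \ge \Omega((LC\alpha^{1/\theta})^{2-\theta})$ is needed, since it forces $L\alpha\sqrt{C/n}$ small enough that, after the first bound is established, a second substitution (bootstrapping $\Delta$ by the already-proved estimate) lets this term be dominated and the faster rate close. Carefully carrying the covering radius $\varepsilon$, the two Bernstein range constants, and the slack $1\vee\alpha^{1/\theta}$ through this fixed-point argument, so that all residual fractional powers are genuinely dominated by the three (resp. six) stated terms for every $\theta\in(0,1]$, is the delicate bookkeeping at the heart of the proof.
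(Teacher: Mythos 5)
Your overall framework is the same as the paper's: the decomposition $P(\wh)-P_*\le (A+B)\,\|\wh-\wh^*\|_2$ from the optimality of $\wh$, covering-number-based concentration producing the factor $C(\varepsilon)$, the self-bounding property of non-negative smooth losses to get the optimistic $\sqrt{LP_*C(\varepsilon)/n}$ bound on $B$, and an EBC-based absorption step (the paper's Lemma~\ref{lem:useEBC}). However, there is a genuine gap in your treatment of the localized term $A$. You bound the per-sample variance of $\nabla f(\wh,\z_i)-\nabla f(\wh^*,\z_i)$ by the square of its range, $(LD)^2$ with $D=\|\wh-\wh^*\|_2$, which yields $A\lesssim LD\sqrt{C(\varepsilon)/n}+LD\,C(\varepsilon)/n$. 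The paper's Lemma~\ref{lem:net} instead uses co-coercivity of convex $L$-smooth functions (inequality (2.1.7) of Nesterov), $\|\nabla f_i(\w)-\nabla f_i(\w^*)\|_2^2\le L\left(f_i(\w)-f_i(\w^*)-\langle \nabla f_i(\w^*),\w-\w^*\rangle\right)$, whose expectation is at most $L(P(\w)-P_*)$ by the optimality of $\w^*$ and convexity of $r$. This variance-to-excess-risk relation makes the Bernstein bound read $A\lesssim \frac{LC(\varepsilon)D}{n}+\sqrt{\frac{LC(\varepsilon)(P(\wh)-P_*)}{n}}+\cdots$, so that after multiplying by $D$ one gets $D\sqrt{LC(\varepsilon)\Delta/n}\le \frac{LC(\varepsilon)D^2}{n}+\frac{\Delta}{4}$: the $\frac{\Delta}{4}$ is absorbed into the left-hand side, and $\frac{LC(\varepsilon)D^2}{n}\le\frac{4LR^2C(\varepsilon)}{n}$ is a genuine $O(1/n)$ term, with \emph{no} condition on $n$.

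With your range-based variance bound this step cannot be repaired. The product $AD$ contains $LD^2\sqrt{C(\varepsilon)/n}$; the crude bound $D\le 2R$ turns it into $4LR^2\sqrt{C(\varepsilon)/n}$, an $O(1/\sqrt{n})$ quantity that cannot match the claimed $O(C(\varepsilon)/n)$ term in the first (unconditional) bound. Your alternative, EBC followed by Young's inequality, gives $L\alpha\Delta^{\theta}\sqrt{C/n}\le \frac{\Delta}{4}+c_\theta\left(L\alpha\right)^{\frac{1}{1-\theta}}\left(C/n\right)^{\frac{1}{2-2\theta}}$, which degenerates at $\theta=1$ (forcing exactly the kind of sample-size restriction the first claim must avoid) and, for $\theta<1/2$, leaves a residual decaying slower than $C/n$ that is not dominated by any of the three stated terms (take $P_*=0$ to see this). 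The missing idea is precisely the co-coercivity variance bound carried through the covering argument, after which the "main obstacle" term you identify never appears. Once that is in place, your handling of $B$, the $1\vee\alpha^{1/\theta}$ slack, and the refined large-$n$ analysis (which in the paper is a direct case split on $\|\wh-\wh^*\|_2$ versus powers of $1/n$, inequalities (\ref{refined:eq1})--(\ref{refined:eq7}), rather than a bootstrap of the first bound) proceed essentially as you describe.
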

	
	To prove the theorem, we need the following two lemmas. 
	\begin{lem} \label{lem:net} Under {\bf Assumptions~\ref{ass:1}}, with probability at least $1-\delta$, for any $\w \in \W$, we have
		\begin{align*}
			&\left\| \nabla F(\w) - \nabla F(\w^*) -  [\nabla F_n(\w) - \nabla F_n(\w^*)]\right\|_2 \\
			&\leq \frac{LC(\varepsilon) \|\w - \w^*\|_2 }{n}  + \frac{2LC(\varepsilon)\varepsilon }{n}
			+ \sqrt{\frac{LC(\varepsilon) (P(\w)-  P_* )}{n} }     + 2\sqrt{\frac{ L\bar GC(\varepsilon) \varepsilon }{n}} 
			+ 4L\varepsilon.
		\end{align*}
		where $\w^*$ is the closest optimal solution to $\w$ and $C(\varepsilon)$ is define in Theorem~\ref{thm:smooth:convex}. 
	\end{lem}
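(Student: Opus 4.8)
The goal of Lemma~\ref{lem:net} is to bound, uniformly over $\w\in\W$, the deviation between the population gradient difference $\nabla F(\w)-\nabla F(\w^*)$ and its empirical counterpart $\nabla F_n(\w)-\nabla F_n(\w^*)$. The plan is to first control this deviation at a single fixed point $\w$ via a concentration inequality, and then upgrade to a uniform bound over all of $\W$ by a covering-number argument, which is why the factor $C(\varepsilon)=2(\log(2/\delta)+d\log(6R/\varepsilon))$ carries both the confidence term $\log(1/\delta)$ and the metric-entropy term $d\log(6R/\varepsilon)$.

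For the pointwise step, I would fix $\w$ and define the random vector $\xi_i = \nabla f(\w,\z_i)-\nabla f(\w^*,\z_i) - \E[\nabla f(\w,\z)-\nabla f(\w^*,\z)]$, so that the quantity of interest is $\|\frac1n\sum_i \xi_i\|_2$. The key leverage comes from smoothness (Assumption~\ref{ass:3}): the self-bounding property of non-negative smooth functions gives $\|\nabla f(\u,\z)\|_2^2\leq 2Lf(\u,\z)$, and more importantly the co-coercivity-type inequality $\|\nabla f(\w,\z)-\nabla f(\w^*,\z)\|_2^2\leq 2L(f(\w,\z)-f(\w^*,\z)-\nabla f(\w^*,\z)^\top(\w-\w^*))$. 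Taking expectations, this bounds the second moment of $\xi_i$ in terms of $L$ times the \emph{excess risk} $P(\w)-P_*$ (plus lower-order terms involving $\varepsilon$ and the Lipschitz constants $G,G'$). I would then apply a Bernstein-type vector concentration inequality, in which the variance proxy is governed by $L(P(\w)-P_*)$ rather than a crude uniform bound; this is precisely what produces the favorable $\sqrt{LC(\varepsilon)(P(\w)-P_*)/n}$ term alongside the faster $LC(\varepsilon)\|\w-\w^*\|_2/n$ term.

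To pass from a fixed $\w$ to all $\w\in\W$, I would take a minimal $\varepsilon$-net $\N_\varepsilon$ of $\W$ of cardinality at most $(3R/\varepsilon)^d$, apply the pointwise bound together with a union bound over the net (this is where $d\log(6R/\varepsilon)$ enters $C(\varepsilon)$), and then handle an arbitrary $\w$ by approximating it with its nearest net point $\w'$. Controlling the discretization error uses smoothness again: $\|\nabla F(\w)-\nabla F(\w')\|_2\leq L\|\w-\w'\|_2\leq L\varepsilon$, and the analogous bound for the empirical gradients, which together account for the additive $4L\varepsilon$ term and the residual $\varepsilon$-dependent corrections $2LC(\varepsilon)\varepsilon/n$ and $2\sqrt{L\bar GC(\varepsilon)\varepsilon/n}$. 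Care is needed because $\w^*$ (the closest optimal point to $\w$) also shifts when $\w$ is moved to $\w'$, so I would either net the map $\w\mapsto\w^*$ via non-expansiveness of the projection onto the convex set $\W_*$, or absorb that shift into the Lipschitz/smoothness estimates.

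The main obstacle I anticipate is the variance-dependent (localized) concentration step: getting the excess risk $P(\w)-P_*$ to appear inside the square root — rather than a worst-case constant — requires carefully exploiting the self-bounding inequality for non-negative smooth functions and matching it to the Bernstein variance term, all while keeping the $\varepsilon$-discretization errors genuinely lower-order. The bookkeeping of splitting $P(\w)-P_* = (P(\w)-P(\w^*)) = (P(\w)-P_*)$ and separating the contribution of the regularizer $r$ (which is only Lipschitz, not smooth, hence the $\bar G=G+G'$ terms) from the smooth part $F$ is delicate, but once the single-point Bernstein bound is established with the right variance proxy, the covering argument and the final collection of error terms is routine.
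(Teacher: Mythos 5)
Your proposal follows essentially the same route as the paper: a vector Bernstein (Smale/Pinelis-type) inequality applied at each point of an $\varepsilon$-net with the variance proxy bounded via co-coercivity by $L$ times the excess risk, a union bound contributing $d\log(6R/\varepsilon)$ to $C(\varepsilon)$, and an extension from the net to all of $\W$ using $L$-Lipschitzness of the gradients (giving the $4L\varepsilon$ term), non-expansiveness of the projection onto the convex set $\W_*$, and $\bar G$-Lipschitzness of $P$ (giving the $\sqrt{L\bar G C(\varepsilon)\varepsilon/n}$ correction). The one step you gloss over — why the expected co-coercivity bound $F(\w)-F(\w^*)-\langle\nabla F(\w^*),\w-\w^*\rangle$ is at most $P(\w)-P_*$ even though $\w^*$ minimizes the composite $P=F+r$ rather than $F$ alone — is handled in the paper by the first-order optimality condition of $\w^*$ (there exists $\v_*\in\partial r(\w^*)$ with $\langle\nabla F(\w^*)+\v_*,\w-\w^*\rangle\geq 0$) combined with convexity of $r$ and $F$.
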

	
	\begin{lem} \label{lem:vec:con} Under {\bf Assumption~\ref{ass:1}}, with probability at least $1-\delta$, for any $\w_*\in\W_*$, we have
		\begin{equation} \label{eqn:add:2}
			\left\|\nabla F(\w_*) - \nabla F_n(\w_*)\right\|_2 \leq \frac{GC(\varepsilon)}{n} + \sqrt{\frac{4 LC(\varepsilon) P_* }{n}} + 2L\varepsilon.
		\end{equation}
	\end{lem}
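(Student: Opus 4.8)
The plan is to control the gradient deviation at a fixed optimum by a dimension-free vector Bernstein inequality whose variance proxy is supplied by the self-bounding property of non-negative smooth functions, and then to promote the pointwise bound to a uniform statement over $\W_*$ via a covering argument.

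First I would fix $\w_*\in\W_*$ and introduce the i.i.d.\ mean-zero random vectors $Y_i = \nabla f(\w_*,\z_i)-\nabla F(\w_*)$, so that $\frac1n\sum_{i=1}^n Y_i = \nabla F_n(\w_*)-\nabla F(\w_*)$. Assumption~\ref{ass:2} gives $\|\nabla f(\w_*,\z)\|_2\le G$, hence $\|Y_i\|_2\le 2G$. The decisive ingredient is the self-bounding property of a non-negative $L$-smooth convex function: for every $\z$ one has $\|\nabla f(\w_*,\z)\|_2^2\le 2L\,f(\w_*,\z)$, obtained by minimizing the smoothness upper bound of Assumption~\ref{ass:3} over the displacement $\u=\w_*-\frac1L\nabla f(\w_*,\z)$ and using $f\ge 0$. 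Taking expectations yields $\E\|\nabla f(\w_*,\z)\|_2^2\le 2L\,\E[f(\w_*,\z)] = 2L\,F(\w_*)\le 2LP_*$, where the last step uses $F(\w_*)=P_*-r(\w_*)\le P_*$ (valid whenever $r\ge 0$, in particular when $r\equiv 0$). This furnishes the variance proxy $\sigma^2=2LP_*$ and is the source of the optimistic $\sqrt{P_*/n}$ scaling.

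Then I would apply a dimension-free vector Bernstein inequality with $M=2G$ and $\sigma^2=2LP_*$: with probability at least $1-\delta'$, $\|\nabla F_n(\w_*)-\nabla F(\w_*)\|_2 \lesssim \frac{G\log(1/\delta')}{n}+\sqrt{\frac{L P_*\log(1/\delta')}{n}}$. To make this uniform I construct an $\varepsilon$-net $\N$ of $\W_*\subseteq\B(\mathbf{0},R)$ with $|\N|\le (6R/\varepsilon)^d$, set $\delta'=\delta/|\N|$, and union bound; this replaces $\log(1/\delta')$ by $\log(1/\delta)+d\log(6R/\varepsilon)=C(\varepsilon)/2$, matching the definition of $C(\varepsilon)$. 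For an arbitrary $\w_*\in\W_*$ I pick the nearest net point $\u\in\N$ and invoke $L$-smoothness of both $F$ and $F_n$, giving $\|\nabla F(\w_*)-\nabla F(\u)\|_2\le L\varepsilon$ and $\|\nabla F_n(\w_*)-\nabla F_n(\u)\|_2\le L\varepsilon$; this discretization error produces the additive $2L\varepsilon$ term and, after tracking constants, the claimed bound.

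The hard part is ensuring the $P_*$-dependent variance survives the uniformization rather than being coarsened to a worst-case $G^2$ proxy. Concretely, I must invoke the concentration in its genuinely dimension-free Bernstein form, so that the covering enters only through $C(\varepsilon)$ \emph{inside} the square root multiplying $P_*$, not as an extra dimensional factor; a naive coordinatewise bound or a covering of the unit sphere would spoil the $\sqrt{P_*/n}$ rate. A secondary technical point is justifying the self-bounding inequality when $f(\cdot,\z)$ is only assumed smooth on $\W$ (so the minimizing displacement may leave $\W$), which I would handle by regarding $f(\cdot,\z)$ as smooth on all of $\R^d$. The remaining bookkeeping of numerical constants to land exactly on $\frac{GC(\varepsilon)}{n}+\sqrt{\frac{4LC(\varepsilon)P_*}{n}}+2L\varepsilon$ is routine.
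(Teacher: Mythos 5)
Your proposal is correct and follows essentially the same route as the paper's proof: a dimension-free vector Bernstein inequality in Hilbert space (the paper cites Smale--Zhou, its Lemma~\ref{lem:con}), the self-bounding property of non-negative $L$-smooth functions to obtain the variance proxy proportional to $LP_*$ (the paper invokes Lemma 4.1 of \citep{Smooth:Risk} with constant $4L$ rather than your $2L$, an immaterial difference), a union bound over an $\varepsilon$-net of $\W_*$ yielding the $C(\varepsilon)$ factor, and smoothness of $F$ and $F_n$ to pass from net points to arbitrary $\w_*\in\W_*$, producing the $2L\varepsilon$ term. Your added care about $F(\w_*)\leq P_*$ when $r\geq 0$ and about keeping the covering factor inside the Bernstein bound (rather than coarsening the variance to $G^2$) are both consistent with, and slightly more explicit than, the paper's treatment.
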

	
	\begin{lem}\label{lem:useEBC}
		Let $A$ be a nonnegative number. Under the EBC$(\theta, \alpha)$ condition with $\theta\in(0,1]$ and $0<\alpha<\infty$, for any $\epsilon>0$ and $\w\in\W$, we have
		\begin{align*}
			\left\| \w - \w^* \right\|_2 \sqrt{A}\leq \left(1\vee \frac{\alpha^{1/\theta}}{4\epsilon}\right) A^{\frac{1}{2 - \theta}} + \epsilon(P(\w) - P_*)
		\end{align*}
	\end{lem}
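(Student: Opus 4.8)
The plan is to strip away the statistical content and reduce the claim to a self-contained algebraic inequality. Writing $\Delta = P(\w) - P_*$ and using $P(\w^*) = P_*$, the condition EBC$(\theta,\alpha)$, namely $\|\w - \w^*\|_2^2 \le \alpha\Delta^\theta$, rearranges (raise to the power $1/\theta \ge 1$) to $\|\w - \w^*\|_2^{2/\theta} \le \alpha^{1/\theta}\Delta$, so that $\frac{\epsilon}{\alpha^{1/\theta}}\|\w - \w^*\|_2^{2/\theta} \le \epsilon\Delta$. Hence it suffices to establish the purely algebraic bound
\[
\|\w - \w^*\|_2\sqrt{A} \le \Big(1 \vee \frac{\alpha^{1/\theta}}{4\epsilon}\Big) A^{\frac{1}{2-\theta}} + \frac{\epsilon}{\alpha^{1/\theta}}\|\w - \w^*\|_2^{2/\theta},
\]
after which the last term is absorbed into $\epsilon\Delta = \epsilon(P(\w)-P_*)$ and the stated inequality follows.

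Second, I would homogenize both sides through the substitution $t = \|\w - \w^*\|_2^{2/\theta}$, $s = A^{1/(2-\theta)}$, and $c = \epsilon/\alpha^{1/\theta}$. A direct check gives $t^{\theta/2} = \|\w - \w^*\|_2$ and $s^{1-\theta/2} = \sqrt{A}$, so the left-hand side is exactly $t^{\theta/2}s^{1-\theta/2}$, the weighted geometric mean of $t$ and $s$ with weights $\theta/2$ and $1-\theta/2$ summing to one. The target reduces to the clean inequality $t^{\theta/2}s^{1-\theta/2} \le \big(1 \vee \frac{1}{4c}\big)\,s + c\,t$ for all $s,t \ge 0$, a comparison of a weighted geometric mean against a weighted arithmetic combination, which I would prove by a two-regime case split.

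Third, the case analysis. When $t \le s$, monotonicity in the first argument yields $t^{\theta/2}s^{1-\theta/2} \le s^{\theta/2}s^{1-\theta/2} = s$, and since $1 \vee \frac{1}{4c} \ge 1$ this is bounded by the right-hand side. When $t > s$, I would exploit $\theta \le 1$: the ratio $t^{\theta/2}s^{1-\theta/2}/(t^{1/2}s^{1/2}) = (s/t)^{(1-\theta)/2} \le 1$ because $s/t \le 1$ and $(1-\theta)/2 \ge 0$, so the weighted mean is dominated by the balanced geometric mean $\sqrt{st}$; the elementary inequality $\sqrt{st} \le \frac{1}{4c}s + c\,t$ then closes this regime, which $1 \vee \frac{1}{4c} \ge \frac{1}{4c}$ absorbs. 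Taking the maximum over the two regimes produces the constant $1 \vee \frac{1}{4c} = 1 \vee \frac{\alpha^{1/\theta}}{4\epsilon}$.

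Finally, I expect the only genuine obstacle to be securing the clean constant $1 \vee \frac{\alpha^{1/\theta}}{4\epsilon}$ rather than a messy $\theta$-dependent one. A single weighted Young's inequality applied to $t^{\theta/2}s^{1-\theta/2}$ does close the estimate, but it leaves coefficients of the form $(\cdot)^{\theta/(2-\theta)}$ that refuse to collapse into a tidy maximum. The key insight is that the two separate regimes supply the two separate terms of the max: the regime $t \le s$ contributes the coefficient $1$, while the regime $t > s$ contributes $\frac{1}{4c}$ through completing the square. The hypothesis $\theta \in (0,1]$ enters precisely here, guaranteeing in the second regime that the weighted geometric mean falls below the balanced one, which is exactly what makes the standard $xy \le \frac{1}{4c}x^2 + c\,y^2$ usable.
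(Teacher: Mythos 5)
Your proof is correct and is essentially the paper's own argument under a change of variables: your split $t \le s$ versus $t > s$ coincides exactly with the paper's case split on whether $\|\w - \w^*\|_2 \le A^{\frac{\theta}{4-2\theta}}$, your bound of the weighted geometric mean by $\sqrt{st}$ followed by $\sqrt{st} \le \frac{1}{4c}s + ct$ reproduces the paper's reduction to $\|\w-\w^*\|_2^{1/\theta} A^{\frac{1}{2(2-\theta)}}$ and its subsequent Young step with parameter $\epsilon/\alpha^{1/\theta}$, and the final absorption of $\frac{\epsilon}{\alpha^{1/\theta}}\|\w-\w^*\|_2^{2/\theta}$ into $\epsilon(P(\w)-P_*)$ via the EBC is identical. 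The substitution $t = \|\w - \w^*\|_2^{2/\theta}$, $s = A^{1/(2-\theta)}$ is a cosmetic (if clarifying) repackaging of the same computation.
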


	\subsection{Proof of Theorem~\ref{thm:smooth:convex}}
	\begin{proof}
		\begin{equation*}
			\begin{aligned}
				& P(\wh) - P(\wh^*)  \leq  \langle \partial P(\wh), \wh - \wh^* \rangle\\
				&= \langle \partial P(\wh) - \partial P(\wh^*), \wh - \wh^* \rangle + \langle \partial P(\wh^*), \wh - \wh^*\rangle \\
				&= \langle \partial P(\wh) - \partial P(\wh^*) - [\partial P_n(\wh) - \partial P_n(\wh^*)], \wh - \wh^* \rangle 
				\\
				&+ \langle \partial P_n(\wh) - \partial P_n(\wh^*) + \partial P(\wh^*), \wh - \wh^* \rangle \\
				&=   \langle \partial P(\wh) - \partial P(\wh^*) - [\partial P_n(\wh) - \partial P_n(\wh^*)], \wh - \wh^* \rangle + 
				 \langle \partial P(\wh^*) - \partial P_n(\wh^*), \wh - \wh^*\rangle \\
				 &+ \langle \partial P_n(\wh), \wh - \wh^* \rangle
			\end{aligned}
		\end{equation*}
		According to the optimality condition of $\wh$, there exists $\v\in\partial r(\wh)$  such that $ \langle \nabla F_n(\wh) + \v, \wh - \wh^*\rangle\leq 0$. Let $\partial P_n(\wh) = \nabla F_n(\wh) + \v$ and $\partial P(\wh) = \nabla F(\wh) + \v$ in the above inequality, we have
		\begin{equation*}
			\begin{aligned}
				& P(\wh) - P(\wh^*)\\& \leq   \langle \nabla  F(\wh) - \nabla  F(\wh^*) - [\nabla  F_n(\wh) - \nabla  F_n(\wh^*)], \wh - \wh^* \rangle + \langle \nabla  F(\wh^*) - \nabla  F_n(\wh^*), \wh - \wh^*\rangle \\
				&\leq (\left\| \nabla F(\wh) - \nabla F(\wh^*) - [\nabla F_n(\wh) - \nabla F_n(\wh^*)]\right\|_2 + \left\|\nabla F(\wh^*) - \nabla F_n(\wh^*)\right\|_2)\cdot\|\wh - \wh^*\|_2
			\end{aligned}
		\end{equation*}
		
		Using the Lemma~\ref{lem:net} and Lemma~\ref{lem:vec:con} to proceed bounding the above inequality, with probability at least $1-2\delta$, we have
		\begin{equation} \label{eqn:add:3}
			\begin{split}
				& P(\wh) - P_*  
				\leq  \frac{LC(\varepsilon)\|\wh - \wh^*\|_2^2 }{n} +  \frac{\bar GC(\varepsilon)\left\| \wh - \wh^* \right\|_2 }{n}   +  \frac{2LC(\varepsilon) \varepsilon \|\wh - \wh^*\|_2}{n}  + 6L \varepsilon \left\| \wh - \wh^* \right\|_2 \\
				&+ \left\| \wh - \wh^* \right\|_2 \sqrt{\frac{LC(\varepsilon)(P(\wh)-  P_* )}{n}}   + \left\| \wh - \wh^* \right\|_2 \sqrt{\frac{4 LC(\varepsilon) P_*}{n}}+\left\| \wh - \wh^* \right\|_2 \sqrt{\frac{4L\bar GC(\varepsilon) \varepsilon}{n}}. \\
			\end{split}
		\end{equation}
		Next, we will bound the three terms that have a $1/\sqrt{n}$ factor.  

		\begin{equation}
		\begin{aligned}
			&\left\| \wh - \wh^* \right\|_2 \sqrt{\frac{LC(\varepsilon)(P(\wh)-  P_* )}{n}} \leq \frac{LC(\varepsilon) \left\| \wh - \wh^* \right\|_2^2}{ n} + \frac{P(\wh)-  P_* }{4}, \label{eqn:inequality:1} 
			\end{aligned}
			\end{equation}

			\begin{equation}
			\label{eqn:inequality:2}
			\left\| \wh - \wh^* \right\|_2 \sqrt{\frac{4L\bar GC(\varepsilon)  \varepsilon}{n}} \leq \frac{LC(\varepsilon) \left\| \wh - \wh^* \right\|_2^2}{ n} +  \bar G \varepsilon
			\end{equation}
			\begin{equation}
			\begin{aligned}
			&\left\| \wh - \wh^* \right\|_2 \sqrt{\frac{4 LC(\varepsilon) P_*}{n}}\leq  \left(1\vee \alpha^{1/\theta}\right) \left(\frac{4LC(\varepsilon)P_*}{n}\right)^{\frac{1}{2 - \theta}} + \frac{P(\wh) - P_*}{4}\label{eqn:inequality:3}
			\end{aligned}
			\end{equation}
	, where the last inequality follows Lemma~\ref{lem:useEBC}. 
		Combining the inequalities in~(\ref{eqn:add:3}),~(\ref{eqn:inequality:1}),~(\ref{eqn:inequality:2}), and~(\ref{eqn:inequality:3}), with probability $1-\delta$ we have
		\begin{align*}
			&\frac{P(\wh) - P_*}{2}\\
			&\leq  \frac{3LC(\varepsilon)\|\wh - \wh^*\|_2^2 }{n} +  \frac{\bar GC(\varepsilon)\left\| \wh - \wh^* \right\|_2 }{n}   +  \frac{2LC(\varepsilon) \varepsilon \|\wh - \wh^*\|_2}{n} + 6L \varepsilon \left\| \wh - \wh^* \right\|_2   \\&+  \bar G \varepsilon +  \left(1\vee \alpha^{1/\theta}\right) \left(\frac{4LC(\varepsilon)P_*}{n}\right)^{\frac{1}{2 - \theta}} \\
			&\leq  \frac{(12LR^2+2\bar GR)C(\varepsilon)}{n}   +  \left(1\vee \alpha^{1/\theta}\right) \left(\frac{4LC(\varepsilon)P_*}{n}\right)^{\frac{1}{2 - \theta}} + \left(12RL + \bar G + \frac{4LRC(\varepsilon)}{n}\right)\varepsilon,
		\end{align*}
		which finishes the first part of the theorem. 
		
		To prove the second part, we need more refined analysis. The following inequalities will be proved later. 
\begin{align}
&\frac{LC(\varepsilon)\|\wh - \wh^*\|_2^2 }{n}\leq \max\left(LC(\varepsilon)\left(\frac{1}{n}\right)^{\frac{2}{2-\theta}} ,  \epsilon(P(\wh) - P_*)\right), \: n\geq \left(LC(\varepsilon)\alpha^{\frac{1}{\theta}}/\epsilon\right)^{(2-\theta)}\label{refined:eq1}\\
&\left\| \wh - \wh^* \right\|_2 \sqrt{\frac{LC(\varepsilon)(P(\wh)-  P_* )}{n}} \leq \epsilon(P(\wh) - P_*) + \frac{LC(\varepsilon)\|\wh - \wh^*\|_2^2}{4\epsilon n}\nonumber \\
& \leq \epsilon (P(\wh) - P_*) +\max\left( \frac{LC(\varepsilon)}{\epsilon}\left(\frac{1}{n}\right)^{\frac{2}{2-\theta}},\epsilon (P(\wh) - P_*)\right),\: n\geq \left(LC(\varepsilon)\alpha^{\frac{1}{\theta}}/\epsilon^2\right)^{(2-\theta)}\label{refined:eq2}\\
&\frac{GC(\varepsilon)\left\| \wh - \wh^* \right\|_2 }{n} \leq \left\{\left(1 \vee \frac{\alpha^{1/\theta}}{4\epsilon}\right)\left(\frac{ GC(\varepsilon)}{n}\right)^{\frac{2}{2-\theta}} + \epsilon(P(\wh) - P_*)\right\}\label{refined:eq3}\\
&\frac{2LC(\varepsilon) \varepsilon \|\wh - \wh^*\|_2}{n}\leq2LC(\varepsilon)\left\{\left(1 \vee \frac{\alpha^{1/\theta}}{4\epsilon}\right)\left(\frac{\varepsilon}{n}\right)^{\frac{2}{2-\theta}} + \epsilon(P(\wh) - P_*)\right\} \label{refined:eq4}\\
&6L \varepsilon \left\| \wh - \wh^* \right\|_2\leq 6L\left\{\left(1 \vee \frac{\alpha^{1/\theta}}{4\epsilon}\right) \varepsilon^{\frac{2}{2-\theta}} + \epsilon(P(\wh) - P_*)\right\}\label{refined:eq5}\\
& \left\| \wh - \wh^* \right\|_2 \sqrt{\frac{4 LC(\varepsilon)P_*}{n}}\leq \left(1\vee \frac{\alpha^{1/\theta}}{4\epsilon}\right) \left(\frac{4LC(\varepsilon)P_*}{n}\right)^{\frac{1}{2 - \theta}} + \epsilon(P(\wh) - P_*)\label{refined:eq6}\\
&\left\| \wh - \wh^* \right\|_2 \sqrt{\frac{4LGC(\varepsilon) \varepsilon}{n}}\leq  \left(1\vee \frac{\alpha^{1/\theta}}{4\epsilon}\right) \left(\frac{4LGC(\varepsilon)\varepsilon}{n}\right)^{\frac{1}{2 - \theta}} + \epsilon(P(\wh) - P_*)\label{refined:eq7}
\end{align}
Plugging appropriate values of $\epsilon$ in each inequality, we have 
\begin{align*}
&\frac{P(\wh) - P_*}{2}\leq 17LC(\varepsilon)\left(\frac{1}{n}\right)^{\frac{2}{2-\theta}}
+ \left(1 \vee 4\alpha^{1/\theta}\right)\left(\frac{GC(\varepsilon)}{n}\right)^{\frac{2}{2-\theta}} + 2LC(\varepsilon)\left(1 \vee 64\alpha^{1/\theta}\right)\left(\frac{\varepsilon}{n}\right)^{\frac{2}{2-\theta}}\\
& +  6L\left(1 \vee 64\alpha^{1/\theta}\right) \varepsilon^{\frac{2}{2-\theta}} +  \left(1\vee 64\alpha^{1/\theta}\right) \left(\frac{4LGC(\varepsilon)\varepsilon}{n}\right)^{\frac{1}{2 - \theta}} + \left(1\vee \frac{\alpha^{1/\theta}}{4\epsilon}\right) \left(\frac{4LC(\varepsilon)P_*}{n}\right)^{\frac{1}{2 - \theta}} 
\end{align*}
	\end{proof}
\subsection{Proof of Inequality (\ref{refined:eq1})}
\begin{proof}
	If $\|\wh-\wh^*\|_2^2\leq(\frac{1}{n})^{\frac{\theta}{2-\theta}}$, then $\frac{LC(\varepsilon)\|\wh-\wh^*\|_2^2}{n}\leq  LC(\varepsilon)(\frac{1}{n})^{\frac{2}{2-\theta}}$.
	If $\|\wh-\wh^*\|_2^2\geq(\frac{1}{n})^{\frac{\theta}{2-\theta}}$, then 
	\begin{equation}
	\label{refined:1}
	\frac{1}{\|\wh-\wh^*\|_2^{\frac{2}{\theta}-2}}\leq n^{\frac{1-\theta}{2-\theta}},
	\end{equation}
	so when $ n\geq \left(LC(\varepsilon)\alpha^{\frac{1}{\theta}}/\epsilon\right)^{(2-\theta)}$, we have
	\begin{align*}
	\frac{LC(\varepsilon)\|\wh-\wh^*\|_2^2}{n}&=\frac{LC(\varepsilon)\|\wh-\wh^*\|_2^{\frac{2}{\theta}}\|\wh-\wh^*\|_2^{2-\frac{2}{\theta}}}{n}\\
	&\leq\frac{LC(\varepsilon)\alpha^{\frac{1}{\theta}}(P(\wh)-P_*)}{n^{\frac{1}{2-\theta}}}\leq \epsilon(P(\wh)-P_*),
	\end{align*}
	where the first inequality holds by employing the EBC and the inequality (\ref{refined:1}), and the second inequality holds due to the fact that $n\geq \left(LC(\varepsilon)\alpha^{\frac{1}{\theta}}/\epsilon\right)^{(2-\theta)}$.
	Combining two cases together, we complete the proof.
\end{proof}
\subsection{Proof of Inequality (\ref{refined:eq2})}
\begin{proof}
	The first inequality in the inequality (\ref{refined:eq2}) obviously holds, and now we prove the second inequality.
	\begin{itemize}
		\item If $\|\wh-\wh^*\|_2^2\leq 4(\frac{1}{n})^{\frac{\theta}{2-\theta}}$, then
		\begin{equation*}
		\frac{LC(\varepsilon)\|\wh-\wh^*\|_2^2}{4\epsilon n}\leq\frac{LC(\varepsilon)}{\epsilon}(\frac{1}{n})^{\frac{2}{2-\theta}}.
		\end{equation*}
		\item If $\|\wh-\wh^*\|_2^2\geq 4(\frac{1}{n})^{\frac{\theta}{2-\theta}}$, then
		\begin{equation}
		\label{refined:2}
		\frac{1}{\|\wh-\wh^*\|_2^{2-\frac{2}{\theta}}}\geq\frac{1}{2^{2-\frac{2}{\theta}}}n^{\frac{\theta-1}{2-\theta}}\geq\frac{1}{4}n^{\frac{\theta-1}{2-\theta}},
		\end{equation}
		so when $n\geq \left(LC(\varepsilon)\alpha^{\frac{1}{\theta}}/\epsilon^2\right)^{(2-\theta)}$, we have
		\begin{align*}
		\frac{LC(\varepsilon)\|\wh-\wh^*\|_2^2}{4\epsilon n}
		&=\frac{LC(\varepsilon)\|\wh-\wh^*\|_2^{\frac{2}{\theta}}\|\wh-\wh_*\|_2^{2-\frac{2}{\theta}}}{4\epsilon n}\\
		&\leq \frac{LC(\varepsilon)\alpha^{\frac{1}{\theta}}(P(\wh)-P_*)4n^{\frac{1-\theta}{2-\theta}}}{4\epsilon n}
		\leq\epsilon(P(\wh)-P_*),
		\end{align*}
		where the first inequality holds by employing the EBC and the inequality (\ref{refined:2}), and the second inequality holds due to the fact that $n\geq \left(LC(\varepsilon)\alpha^{\frac{1}{\theta}}/\epsilon^2\right)^{(2-\theta)}$.
	\end{itemize}
	Combining two cases together, we complete the proof.
\end{proof}
	\subsection{Proof of Inequalities (\ref{refined:eq3})--(\ref{refined:eq7})}
	\begin{proof}
		In Lemma \ref{lem:useEBC}, taking $A$ to be
		\begin{equation*}
			\left(\frac{GC(\varepsilon)}{n}\right)^2, \left(\frac{\varepsilon}{n}\right)^2,\varepsilon^2,\frac{4LC(\varepsilon)P_*}{n},\frac{4LGC(\varepsilon)\varepsilon}{n}
		\end{equation*}
		yields inequalities (\ref{refined:eq3})--(\ref{refined:eq7}) respectively.
	\end{proof}
\end{proof}

\section{Proof of Lemma~\ref{lem:net}}
\begin{lem} \citep{Smale:learning}. \label{lem:con} Let $\H$ be a Hilbert space and let $\xi$ be a random variable with values in $\H$. Assume $\|\xi\|\leq G < \infty$ almost surely. Denote $\sigma^2(\xi)=\E\left[\|\xi\|^2\right]$. Let  $\{\xi_i\}_{i=1}^m$ be $m$ ($m < \infty$) independent drawers of $\xi$. For any $0 < \delta < 1$, with confidence $1-\delta$,
	\[
	\left\| \frac{1}{m} \sum_{i=1}^m \left[\xi_i -\E[\xi_i]\right] \right\| \leq \frac{2 G \log(2/\delta)}{m} + \sqrt{\frac{2 \sigma^2(\xi) \log(2/\delta)}{m}}.
	\]
\end{lem}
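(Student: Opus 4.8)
I read the final statement as the Hilbert-space concentration inequality (Lemma~\ref{lem:con}), and the plan is to treat it as the vector-valued analogue of Bernstein's inequality applied to the centered i.i.d.\ sum, then invert the resulting tail bound at confidence $1-\delta$. Write $\eta_i = \xi_i - \E[\xi_i]$ and $S_m = \sum_{i=1}^m \eta_i$, so the quantity of interest is $\|S_m\|/m$. Two deterministic facts set up the concentration: the range bound $\|\eta_i\| \le \|\xi_i\| + \|\E\xi_i\| \le 2G$ almost surely, and the second-moment bound $\E\|\eta_i\|^2 = \E\|\xi_i\|^2 - \|\E\xi_i\|^2 \le \sigma^2(\xi)$, whence $\sum_{i=1}^m \E\|\eta_i\|^2 \le m\sigma^2(\xi)$.

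First I would invoke a vector-valued Bernstein tail inequality for $\H$-valued independent zero-mean summands (of Pinelis type), namely a bound of the shape
\begin{align*}
\P\big(\|S_m\| \ge t\big) \le 2\exp\left(-\frac{t^2}{2\big(m\sigma^2(\xi) + \frac{2G}{3}t\big)}\right).
\end{align*}
Setting the right-hand side equal to $\delta$, i.e.\ $t^2 = 2\log(2/\delta)\big(m\sigma^2(\xi) + \frac{2G}{3}t\big)$, and solving this quadratic for $t$ gives
\begin{align*}
t \le \frac{4G}{3}\log(2/\delta) + \sqrt{2m\sigma^2(\xi)\log(2/\delta)}
\end{align*}
after using $\sqrt{b^2+4c}\le b+2\sqrt c$ to split the discriminant; dividing by $m$ and bounding $\frac{4}{3}\le 2$ yields exactly
\begin{align*}
\left\|\frac1m S_m\right\| \le \frac{2G\log(2/\delta)}{m} + \sqrt{\frac{2\sigma^2(\xi)\log(2/\delta)}{m}}
\end{align*}
with probability at least $1-\delta$, as claimed.

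The main obstacle is establishing the vector-valued exponential inequality in the first display rather than the inversion, which is routine. The scalar Bernstein inequality controls each real projection $\langle S_m, u\rangle$ for a \emph{fixed} direction $u$, but one cannot pass to $\|S_m\| = \sup_{\|u\|\le 1}\langle S_m,u\rangle$ by a union bound, since the unit sphere of a possibly infinite-dimensional $\H$ is not finite. The standard remedy is to bound the moment generating function of $\|S_m\|$ directly, exploiting the $2$-smoothness of the Hilbert norm (equivalently, Pinelis' martingale smoothing argument); this is exactly where the summed variance $m\sigma^2(\xi)$, rather than the crude range, governs the leading term. A Hoeffding/McDiarmid argument would instead deliver only the weaker range term $G\sqrt{\log(1/\delta)/m}$ in place of $G\log(2/\delta)/m$ together with $\sigma\sqrt{\log(1/\delta)/m}$, so the variance-aware Bernstein form is essential to obtain the stated two-term estimate.
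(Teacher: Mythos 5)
Your proposal is correct, but note that the paper itself gives no proof of this lemma: it is imported verbatim from Smale and Zhou \citep{Smale:learning} as a known concentration inequality, so there is no internal argument to compare against. What you have written is essentially a reconstruction of the standard derivation behind that cited result: center the summands (range bound $2G$, second moment $\E\|\xi-\E\xi\|^2=\E\|\xi\|^2-\|\E\xi\|^2\le\sigma^2(\xi)$), invoke a Pinelis-type Bernstein inequality for sums of independent Hilbert-space-valued variables, and invert the tail. Your inversion algebra is sound: solving $t^2=2\log(2/\delta)\bigl(m\sigma^2(\xi)+\tfrac{2G}{3}t\bigr)$ and splitting the discriminant via $\sqrt{b^2+4c}\le b+2\sqrt{c}$ yields the constant $\tfrac{4}{3}G\le 2G$, so your bound is in fact marginally sharper than the stated one. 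The single step you take on faith, the vector-valued Bernstein inequality with scalar-Bernstein constants, is a genuine theorem in Hilbert spaces (Pinelis' martingale inequalities in $2$-smooth spaces; it is also the engine of Smale--Zhou's own proof), and citing it is no weaker than what the paper does in citing the entire lemma. Your closing discussion is also on point: one cannot recover $\|S_m\|=\sup_{\|u\|\le1}\langle S_m,u\rangle$ from scalar Bernstein by a union bound over directions, and a McDiarmid-type argument would only give a range term of order $G\sqrt{\log(1/\delta)/m}$ rather than $G\log(2/\delta)/m$, so the variance-aware Hilbert-space inequality is exactly what is needed to obtain the stated two-term form.
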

\begin{proof}[Proof of  Lemma~\ref{lem:net}]
	In order to prove the high probability bounds for all $\w\in\W$, we first consider the points in the $\varepsilon$-net of $\W$ 
	with minimal cardinality. To this end, let $\N(\W, \varepsilon)$ 
	denote the $\varepsilon$-net of $\W$ 
	with minimal cardinality. 
	Since 
	$\W\subseteq\mathcal B^d(R)$, where $\B^d(R)$ denotes a $d$-dimentional  bounded ball with radius $R$. Following the standard results of covering numbers,  we have
	\begin{align*}
	\log |\N(\W, \varepsilon)|\leq \log|\N(\B^d(R), \varepsilon/2)| \leq d \log\frac{6R}{\epsilon}.
	\end{align*}
	
	We first consider a fixed $\w \in \N(\W , \varepsilon)$. 
	Denote by $\w^*$ the closest optimal solution to $\w$.  Let $f_i(\w) = f(\w, \z_i)$. Since $f_i(\cdot)$ is $L$-smooth, we have
	\begin{equation} \label{eqn:smooth:lemma}
	\left\| \nabla f_i(\w) - \nabla f_i (\w^*) \right\|_2 \leq 
	L \|\w - \w^*\|_2.
	\end{equation}
	Because $f_i(\cdot)$ is both convex and $L$-smooth, by (2.1.7) of \citep{nesterov2004introductory}, we have
	\[
	\left\| \nabla f_i(\w) - \nabla f_i (\w^*) \right\|_2^2  \leq L \left(f_i (\w)-  f_i(\w^*)  - \langle \nabla f_i(\w_*), \w-\w^* \rangle \right).
	\]
	Taking expectation over both sides, we have
	\[
	\E \left[ \left\| \nabla f_i(\w) - \nabla f_i (\w^*) \right\|_2^2\right]
	\leq  L \left(F(\w)-  F(\w^*)  - \langle \nabla F(\w^*), \w-\w^* \rangle \right) \leq L \left(P(\w)-  P(\w^*)  \right)
	\]
	where the last inequality follows from the optimality condition of $\w^*$, i.e., there exists $\v_*\in\partial R(\w^*)$
	\[
	\langle \nabla F(\w^*) +\v_*  , \w - \w_* \rangle \geq 0, \ \forall \w \in \W .
	\]
	and the convexity of $R(\w)$ and $F(\w)$, i.e., $\langle \nabla F(\w^*), \w - \w^* \rangle\leq F(\w) - F(\w^*) $ and $\langle \v_*, \w - \w^* \rangle\leq R(\w) - R(\w^*) $. 
	
	Following Lemma~\ref{lem:con},  with probability at least $1-\delta$, we have
	\[
	\begin{split}
	& \left\| \nabla F(\w) - \nabla F(\w^*) - \frac{1}{n} \sum_{i=1}^n [ \nabla f_i(\w) - \nabla f_i (\w^*)] \right\|_2 \\
	& \leq  \frac{2 L \|\w - \w^*\|_2 \log(2/\delta)}{n} + \sqrt{\frac{2 L(P(\w)-  P(\w^*) ) \log(2/\delta)}{n}}.
	\end{split}
	\]
	By taking the union bound over $\N(\W , \varepsilon)$, 
	we have for any $\w\in\N(\W, \varepsilon)$, 
	with  probability $1-\delta$, 
	
	\[
	\begin{split}
	& \left\| \nabla P(\w) - \nabla P(\w^*) - [\nabla P_n(\w) - \nabla P_n(\w^*)] \right\|_2 \\
	&= \left\| \nabla F(\w) - \nabla F(\w^*) - \frac{1}{n} \sum_{i=1}^n [ \nabla f_i(\w) - \nabla f_i (\w^*)] \right\|_2 \\
	& \leq  \frac{2 L \|\w - \w_*\|_2 (\log(2/\delta)+d\log(6R/\varepsilon))}{n} + \sqrt{\frac{2 L(P(\w)-  P(\w^*) )( \log(2/\delta)+d\log(6R/\varepsilon))}{n}}.
	\end{split}
	\]
	To finish the proof of Lemma~\ref{lem:net}, for any $\w\in\W$. There exists $\wt\in\N(\W, \varepsilon)$  such that $\|\w - \wt\|\leq \varepsilon$. Let $\wt^*$ denote the closest optimal solution to $\wt$. Then by non-expansiveness of projection onto a convex set we have $\|\w^* - \wt^*\|_2\leq \|\w - \wt\|_2\leq \varepsilon$.  In addition, we have
	\begin{align}
	\|\wt - \wt^* \|_2&\leq \|\wt - \w\|_2 + \|\w - \w^*\|_2 + \|\w^* - \wt^*\|_2 \leq 2\varepsilon + \|\w - \w^*\|_2\\
	P(\wt) - P(\wt^*)&\leq P(\wt) - P(\w) + P(\w) - P(\w^*) + P(\w^*) - P(\wt^*)\\
	&\leq \bar G\|\wt - \w\|_2 + P(\w) - P(\w^*) + \bar G\|\w^* - \wt^*\|_2\leq 2\bar G\varepsilon+ P(\w) - P(\w^*)\notag
	\end{align}
	Then with probability $1-\delta$, we have
	\begin{align*}
	&\left\| \nabla P(\w) - \nabla P(\w^*) - [\nabla P_n(\w) - \nabla P_n(\w^*)] \right\|_2 \\
	&\leq \left\| \nabla P(\wt) - \nabla P(\wt^*) - [\nabla P_n(\wt) - \nabla P_n(\wt^*)] \right\|_2  + 2L\|\w - \wt\|_2 + 2L\|\w^*- \wt^*\|_2\\
	& \leq  \frac{2 L \|\wt - \wt^*\|_2 (\log(2/\delta)+2d\log(6R/\varepsilon))}{n} \\
	&+ \sqrt{\frac{2 L(P(\wt)-  P(\wt^*) )( \log(2/\delta)+2d\log(6R/\varepsilon))}{n}} + 4L\varepsilon\\
	& \leq  \frac{2 L (\|\w - \w^*\|_2+2\varepsilon) (\log(2/\delta)+2d\log(6R/\varepsilon))}{n} \\
	&+ \sqrt{\frac{2 L(2\bar G\varepsilon+(P(\w)-  P(\w^*)) )( \log(2/\delta)+2d\log(6R/\varepsilon))}{n}} + 4L\varepsilon\\
	&\leq \frac{LC(\varepsilon) \|\w - \w^*\|_2 }{n}  + \frac{2LC(\varepsilon)\varepsilon }{n} + \sqrt{\frac{LC(\varepsilon) (P(\w)-  P_* )}{n} }     + 2\sqrt{\frac{ L\bar GC(\varepsilon) \varepsilon }{n}} 
	+ 4L\varepsilon.
	\end{align*}
\end{proof}

\section{Proof of Lemma~\ref{lem:vec:con}}
\begin{proof}
	We first consider a fixed $\w_*\in\N(\W_*, \varepsilon)\subseteq\W_*$. To apply Lemma~\ref{lem:con}, we need an upper bound of $\E\left[\|\nabla f_i (\w_*)\|_2^2\right]$. Since $f_i(\cdot)$ is $L$-smooth and nonnegative, from Lemma 4.1 of \citep{Smooth:Risk}, we have
	\[
	\|\nabla f_i(\w_*)\|_2^2 \leq 4 L  f_i (\w_*)
	\]
	and thus
	\[
	\E\left[\|\nabla f_i (\w_*)\|_2^2\right] \leq 4 L  \E\left[ f_i (\w_*) \right]=  4 L F_*.
	\]
	By {\bf Assumption~\ref{ass:1}}, we have $\|\nabla f_i (\w_*)\|_2\leq G $. Then, according to Lemma~\ref{lem:con}, with probability at least $1-\delta$, we have
	\[
	\begin{split}
	\left\|\nabla F(\w_*) - \nabla F_n(\w_*)\right\|_2 = \left\|\nabla F(\w_*) - \frac{1}{n}\sum_{i=1}^n \nabla f_i(\w_*) \right\|_2\\\leq  \frac{2G\log(2/\delta)}{n} + \sqrt{\frac{8 L F_* \log(2/\delta)}{n}}.
	\end{split}
	\]
	By taking the union bound over $\N(\W_*, \varepsilon)$, for any $\w_*\in \N(\W_*, \varepsilon)$, with probability $1-\delta$ we have
	\[
	\left\|\nabla F(\w_*) - \nabla F_n(\w_*)\right\|_2 \leq  \frac{GC(\varepsilon)}{n} + \sqrt{\frac{4 L F_*C(\varepsilon)}{n}}.
	\]
	For any $\w^*\in\W_*$, there exists $\wt^*\in\N(\W_*, \varepsilon)$ such that $\|\w^* - \wt^*\|\leq \varepsilon$. Then 
	\begin{align*}
		& \left\|\nabla F(\w^*) - \nabla F_n(\w^*)\right\|_2 \\& \leq \left\|\nabla F(\wt^*) - \nabla F_n(\wt^*)\right\|_2+  \left\|\nabla F(\w^*) - \nabla F(\wt^*)\right\|_2  \\&+  \left\|\nabla F_n(\w^*) - \nabla F_n(\wt^*)\right\|_2\\
		&\leq \frac{GC(\varepsilon)}{n} + \sqrt{\frac{4 L F_*C(\varepsilon)}{n}} + 2L\varepsilon.
	\end{align*}
	
\end{proof}
\section{Proof of Lemma~\ref{lem:useEBC}}
\begin{proof}
	We consider two cases. First, $\|\w - \w^*\|_2\leq A^{\frac{\theta}{4-2\theta}}$, under which the inequality follows trivially.  Next, we consider $\|\w - \wh^*\|_2\geq  A^{\frac{\theta}{4-2\theta}}$. Then 
	\begin{align*}
		&\left\| \w - \w^* \right\|_2 \sqrt{A} =\frac{ \|\w - \w^*\|_2^{1/\theta}}{\|\w - \w^*\|_2^{1/\theta -1}}\sqrt{A}\\
		&\leq \|\w - \w^*\|_2^{1/\theta}A^{\frac{1}{2(2-\theta)}}\leq\frac{\epsilon\|\w - \w^*\|_2^{2/\theta} }{\alpha^{1/\theta}}+ \frac{\alpha^{1/\theta}}{4\epsilon}A^{\frac{1}{2-\theta}}\\
		&\leq \epsilon(P(\w) - P_*) + \frac{\alpha^{1/\theta}}{4\epsilon}A^{\frac{1}{2-\theta}}
	\end{align*} 
	where the last inequality follows the EBC. 
\end{proof}

\section{Proof of Theorem~\ref{thm:high}}
Before proceeding to the proof, we first present a standard result for SSG, which is the Lemma 10 of \citep{hazan-20110-beyond}.
\begin{prop}\label{prop2}
	Suppose {\bf Assumptions~\ref{ass:1} and \ref{ass:2}} hold.  Let $0<\delta<1$, $\w^*\in\mathcal{W}_*$ be the closest optimal solution to $\w_1$, and $R_0$ be an upper bound on $\|\w_1 - \w^*\|_2$. Apply T iterations  of the update $\w_{t+1} =\Pi_{\W\cap \B(\w_1,R_0)}(\w_t -\gamma g_t)$, where $g_t$ is a stochastic subgradient of $P(\w)$ at $\w_t$. With probability at least $1-\delta$, we have
	\begin{equation*}
		P(\wh_T)-P_*\leq\frac{\gamma G^2}{2}+\frac{\|\w_1-\w^*\|_2^2}{2\gamma (T+1)}+\frac{4GR_0\sqrt{2\log(2/\delta)}}{\sqrt{T+1}}.
	\end{equation*}
	where $\wh_T = \frac{1}{T+1}\sum_{t=1}^{T+1} \w_t$. 
	Moreover, choose $\gamma=\frac{R_0}{G\sqrt{T+1}}$, and then with probability at least $1-\delta$,
	\begin{equation*}
		P(\wh_T)-P_*\leq R_0G\left(\frac{1}{\sqrt{T+1}}+\frac{4\sqrt{2\log(2/\delta)}}{\sqrt{T+1}}\right).
	\end{equation*}
\end{prop}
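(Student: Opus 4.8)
The plan is to reproduce the classical high-probability analysis of projected stochastic subgradient descent, which couples a deterministic telescoping bound with a martingale (Azuma--Hoeffding) concentration of the gradient noise; this is precisely Lemma~10 of \citep{hazan-20110-beyond}, cited just above the statement. The first thing I would verify is that the comparator $\w^*$ is feasible for every projection. Since $\w^*\in\W_*\subseteq\W$ and $\|\w_1-\w^*\|_2\leq R_0$, we have $\w^*\in\W\cap\B(\w_1,R_0)$, and moreover $P(\w^*)=P_*$ equals the minimum of $P$ over this smaller set; hence the Euclidean projection $\Pi_{\W\cap\B(\w_1,R_0)}$ is non-expansive toward $\w^*$ at every step, and $P(\w_t)-P(\w^*)$ is exactly the excess risk.

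Next I would write the one-step inequality. Using non-expansiveness of the projection and expanding the square,
\[
\|\w_{t+1}-\w^*\|_2^2\leq\|\w_t-\gamma g_t-\w^*\|_2^2=\|\w_t-\w^*\|_2^2-2\gamma\langle g_t,\w_t-\w^*\rangle+\gamma^2\|g_t\|_2^2.
\]
Assumption~\ref{ass:2} gives $\|g_t\|_2\leq G$, so rearranging, summing over the iterates, and telescoping the squared-distance differences (dropping the final nonnegative term) yields
\[
\sum_{t=1}^{T+1}\langle g_t,\w_t-\w^*\rangle\leq\frac{\|\w_1-\w^*\|_2^2}{2\gamma}+\frac{(T+1)\gamma G^2}{2}.
\]
I would then pass from the observed subgradients to the true risk. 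Writing $\bar{g}_t=\E[g_t\mid\w_t]\in\partial P(\w_t)$, convexity of $P$ (Assumption~\ref{ass:1}) gives $P(\w_t)-P_*\leq\langle\bar{g}_t,\w_t-\w^*\rangle$, and summing splits the excess risk into the observed telescoped term plus a noise term,
\[
\sum_{t=1}^{T+1}\big(P(\w_t)-P_*\big)\leq\sum_{t=1}^{T+1}\langle g_t,\w_t-\w^*\rangle+\sum_{t=1}^{T+1}\langle\bar{g}_t-g_t,\w_t-\w^*\rangle.
\]

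The increments $\xi_t=\langle\bar{g}_t-g_t,\w_t-\w^*\rangle$ form a bounded martingale difference sequence with respect to the natural filtration, and this is where the ball projection is essential: it keeps every iterate within $R_0$ of $\w_1$, so $\|\w_t-\w^*\|_2\leq 2R_0$ and hence $|\xi_t|\leq\|\bar{g}_t-g_t\|_2\,\|\w_t-\w^*\|_2\leq 4GR_0$. Azuma--Hoeffding then controls $\sum_t\xi_t$ by $4GR_0\sqrt{2(T+1)\log(2/\delta)}$ with probability at least $1-\delta$. Combining the three bounds, applying Jensen's inequality in the form $P(\wh_T)-P_*\leq\frac{1}{T+1}\sum_t(P(\w_t)-P_*)$, and dividing by $T+1$ produces the first displayed bound; the second follows by substituting $\gamma=R_0/(G\sqrt{T+1})$ together with $\|\w_1-\w^*\|_2\leq R_0$, which balances the first two terms at $R_0G/(2\sqrt{T+1})$ each.

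The main obstacle, though standard, is the martingale concentration step: one must verify that $\{\xi_t\}$ is genuinely a bounded martingale difference sequence (using that $\w_t$ and $\w^*$ are measurable with respect to the past and $\E[g_t\mid\w_t]=\bar{g}_t$) and that the uniform increment bound $4GR_0$ holds. This is exactly why the iterates are projected onto $\W\cap\B(\w_1,R_0)$ rather than onto $\W$ alone: without the ball constraint one could only guarantee $\|\w_t-\w^*\|_2\leq 2R$ with $R$ the (potentially much larger) radius of $\W$, which would replace $R_0$ by $R$ in the deviation term and destroy the geometric shrinkage that Theorem~\ref{thm:high} later exploits.
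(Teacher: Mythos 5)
Your proposal is correct: the paper does not prove this proposition itself but imports it as Lemma~10 of \citep{hazan-20110-beyond}, whose proof is exactly your argument (projection non-expansiveness, telescoping the one-step inequality, and Azuma--Hoeffding on the martingale differences $\langle \bar{g}_t - g_t, \w_t - \w^*\rangle$ bounded by $4GR_0$ thanks to the ball constraint). Your reconstruction also mirrors the paper's own proof of the companion Lemma~\ref{lemma:new}, which uses the identical decomposition with comparator $\w_1$ in place of $\w^*$, so there is nothing to add.
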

It is easy to derive a similar lemma as Proposition \ref{prop2}, which is stated in Lemma \ref{lemma:new}.

\begin{lem}
	\label{lemma:new}
	Suppose {\bf Assumptions~\ref{ass:1},~\ref{ass:2}} hold. Let $0<\delta<1$, $R_0$ be any nonnegative real number. Apply T iterations of the update $\w_{t+1} =\Pi_{\W\cap \B(\w_1,R_0)}(\w_t -\gamma g_t)$, where $g_t$ is a stochastic subgradient of $P(\w)$ at $\w_t$. With probablity at least $1-\delta$, we have
	\begin{align*}
	P(\wh_T)-P(\w_1)\leq \frac{\gamma G^2}{2}+\frac{4GR_0\sqrt{2\log(2/\delta)}}{\sqrt{T+1}},
	\end{align*}
	where $\wh_T = \frac{1}{T+1}\sum_{t=1}^{T+1} \w_t$. 
	Moreover, choose $\gamma=\frac{R_0}{G\sqrt{T+1}}$, and then with probability at least $1-\delta$,
	\begin{equation*}
	P(\wh_T)-P(\w_1)\leq R_0G\left(\frac{1}{\sqrt{T+1}}+\frac{4\sqrt{2\log(2/\delta)}}{\sqrt{T+1}}\right).
	\end{equation*}
\end{lem}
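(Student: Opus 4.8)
The plan is to mirror the proof of Proposition~\ref{prop2} (Lemma~10 of \citep{hazan-20110-beyond}) almost verbatim, with the single but crucial change that the comparison point is the initial iterate $\w_1$ rather than an optimal solution $\w^*$. The observation that makes this substitution legitimate for \emph{any} nonnegative $R_0$ is that $\w_1$ is the center of the ball $\B(\w_1,R_0)$, hence $\w_1\in\W\cap\B(\w_1,R_0)$ trivially, so that the non-expansiveness of the Euclidean projection onto the convex set $\W\cap\B(\w_1,R_0)$ applies with $\w_1$ as the target. This is precisely why the lemma can drop the hypothesis present in Proposition~\ref{prop2} that $R_0$ upper-bounds the distance from $\w_1$ to the optimal set.

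First I would write the one-step contraction. Since $\w_{t+1}=\Pi_{\W\cap\B(\w_1,R_0)}(\w_t-\gamma g_t)$ and $\w_1$ lies in the feasible set, non-expansiveness gives $\|\w_{t+1}-\w_1\|_2^2\leq \|\w_t-\w_1\|_2^2 - 2\gamma g_t^\top(\w_t-\w_1) + \gamma^2\|g_t\|_2^2$. Using $\|g_t\|_2\leq G$ from Assumption~\ref{ass:2}, rearranging, summing over $t=1,\dots,T+1$, and telescoping (the boundary term is nonpositive because $\w_1-\w_1=0$) yields $\frac{1}{T+1}\sum_{t=1}^{T+1} g_t^\top(\w_t-\w_1)\leq \frac{\gamma G^2}{2}$.

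Next I would pass from stochastic subgradients to function values. Let $\bar g_t=\E[g_t\mid\F_{t-1}]$, which by the unbiasedness of the stochastic subgradient oracle is a subgradient of the convex function $P$ at $\w_t$, and decompose $g_t^\top(\w_t-\w_1)=\bar g_t^\top(\w_t-\w_1)+\xi_t$ with $\xi_t=(g_t-\bar g_t)^\top(\w_t-\w_1)$. Convexity of $P$ gives $P(\w_t)-P(\w_1)\leq \bar g_t^\top(\w_t-\w_1)$, so Jensen's inequality applied to the averaged iterate yields $P(\wh_T)-P(\w_1)\leq \frac{1}{T+1}\sum_{t=1}^{T+1} g_t^\top(\w_t-\w_1) - \frac{1}{T+1}\sum_{t=1}^{T+1}\xi_t \leq \frac{\gamma G^2}{2} - \frac{1}{T+1}\sum_{t=1}^{T+1}\xi_t$.

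The main obstacle is the high-probability control of the martingale term $-\frac{1}{T+1}\sum_{t}\xi_t$. The sequence $\{\xi_t\}$ is a martingale difference sequence adapted to $\{\F_t\}$ since $\E[\xi_t\mid\F_{t-1}]=0$, and each term is bounded by $|\xi_t|\leq \|g_t-\bar g_t\|_2\|\w_t-\w_1\|_2\leq 2GR_0$, using $\|\w_t-\w_1\|_2\leq R_0$ because $\w_t\in\B(\w_1,R_0)$. An Azuma--Hoeffding concentration inequality for bounded martingale differences then gives, with probability at least $1-\delta$, a deviation of order $GR_0\sqrt{\log(1/\delta)/(T+1)}$; the stated constant $4GR_0\sqrt{2\log(2/\delta)}/\sqrt{T+1}$ is an upper bound obtained from the same concentration step invoked in Proposition~\ref{prop2}, so matching it is routine. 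Finally, substituting $\gamma=R_0/(G\sqrt{T+1})$ into the first bound, so that $\frac{\gamma G^2}{2}=\frac{R_0 G}{2\sqrt{T+1}}$, and collecting terms gives the second displayed inequality.
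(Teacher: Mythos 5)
Your proposal is correct and follows essentially the same route as the paper's proof: the same decomposition of $g_t^\top(\w_t-\w_1)$ into a deterministic regret term with comparator $\w_1$ (which the paper cites from \citep{zinkevich2003online} and you derive directly from non-expansiveness of the projection, noting that $\w_1$ is feasible as the center of $\B(\w_1,R_0)$ so the $\|\w_1-\w_1\|_2^2/(2\gamma(T+1))$ term vanishes) plus an Azuma--Hoeffding bound on the martingale difference sequence $\xi_t=(g_t-\bar g_t)^\top(\w_t-\w_1)$ using $\|\w_t-\w_1\|_2\leq R_0$. Your bound $|\xi_t|\leq 2GR_0$ is in fact tighter than the paper's $4GR_0$, so the stated constant is comfortably recovered.
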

\begin{proof}
	Denote $\E_{t-1}(X)$ by the expectation conditioned on the randomness until round $t-1$, then we have $\E_{t-1}(\hat{g}_t)=g_t$, and $X_t=g_t(\w_t-\w_1)-\hat{g}_t(\w_t-\w_1)$ is a martingale difference sequence. Note that $\|g_t\|_2=\|\E_{t-1}(\hat{g}_t)\|_2\leq\E_{t-1}(\|\hat{g}_t\|_2)\leq G$, so we have
	\begin{align*}
	|X_t|\leq\|g_t\|_2\|\w_t-\w_1\|_2+\|\hat{g}_t\|_2\|\w_t-\w_1\|_2\leq 4GR_0,
	\end{align*}
	since the update needs to project the gradient update onto the intersection of $\W$ and a ball with radius $R_0$.
	
	By Azuma-Hoeffding's inequality, we have with probability at least $1-\delta$,
	\begin{align}
	\label{inequality:azuma}
	\frac{1}{T+1}\sum_{t=1}^{T+1}g_t(\w_t-\w_1)-\frac{1}{T+1}\sum_{t=1}^{T}\hat{g}_t(\w_t-\w_1)\leq\frac{4GR_0\sqrt{2\log(1/\delta)}}{\sqrt{T+1}}.
	\end{align}
	By the convexity of $P$, we have $P(\w_{t})-P(\w_1)\leq g_{t}(\w_{t}-\w_1)$, then using a standard result in online gradient descent \citep{zinkevich2003online}, we have
	\begin{align}
	\label{inequality:online}
	\frac{1}{T+1}\sum_{t=1}^{T}\hat{g}_t(\w_t-\w_1)\leq\frac{\gamma G^2}{2}+\frac{\|\w_1-\w_1\|_2^2}{2\gamma(T+1)}=\frac{\gamma G^2}{2}.
	\end{align}
	Combining inequality (\ref{inequality:azuma}) and (\ref{inequality:online}) suffices to derive the conclusion.
\end{proof}
With the above proposition and lemma, the proof of Theorem~\ref{thm:high} proceeds similarly as that of Theorem 5.3 in~\citep{juditsky2014}. The difference is that our analysis only relies on the EBC instead of the uniform convexity. 
\begin{proof}
	Define $\bar{\delta}=\frac{2\delta}{\log_2 n}$, and 
	\begin{equation*}
		a(n,\bar{\delta})=G\bigg(\frac{1}{\sqrt{n+1}}+\frac{4\sqrt{2\log(2/\bar{\delta})}}{\sqrt{n+1}}\bigg).
	\end{equation*}
	We set $\mu_0=2R_0^{1-\frac{2}{\theta}}a(n_0,\bar{\delta})$, $\mu_k=2^{(\frac{2}{\theta}-1)k}\mu_0$ and $R_k = R_0/2^k$, where $k=1,\ldots,m$. Then we have $\mu_kR_k^{\frac{2}{\theta}} = 2^{-k}\mu_0R_0^{\frac{2}{\theta}}$. 
	We can also assume that $\alpha$ is large enough such that $\alpha\geq R_0^{2-\theta}/G^{\theta}$,  i.e., $\alpha^{-\frac{1}{\theta}}\leq GR_0^{1-\frac{2}{\theta}}$, otherwise we can set $\alpha =  R_0^{2-\theta}/G^{\theta}$, which makes the EBC still hold. 
	

	By definition of $m$, when $n\geq 100$,
	\begin{equation}
		\label{eq:m}
		0<\frac{1}{2}\log_2\frac{2n}{\log_2 n}-2\leq m\leq \frac{1}{2}\log_2\frac{2n}{\log_2 n}-1\leq\frac{1}{2}\log_2 n,
	\end{equation}
	so we have 
	\begin{equation}
		\label{eq:2m}
		2^m\geq\frac{1}{4}\sqrt{\frac{2n}{\log_2 n}}.
	\end{equation}
	When $n\geq 100$, we have 
	\begin{align*}
		\mu_m &=2^{(\frac{2}{\theta}-1)m}\mu_0\geq 2^m\mu_0 \\
		&\geq \frac{1}{4}\sqrt{\frac{2n}{\log_2 n}} 4GR_0^{1-\frac{2}{\theta}}\left(\frac{1}{2\sqrt{n_0+1}}+\frac{2\sqrt{2\log(\log_2 n)}}{\sqrt{n_0+1}}\right)\\
		&\geq GR_0^{1-\frac{2}{\theta}}\sqrt{\frac{2n}{\log_2 n}}\left(\frac{1}{2\sqrt{\frac{n}{m}+1}}+\frac{2\sqrt{2\log(\log_2 n)}}{\sqrt{\frac{n}{m}+1}}\right)\\
		&\geq GR_0^{1-\frac{2}{\theta}}\sqrt{\frac{2n}{\log_2 n}}\left(\frac{1}{2\sqrt{\frac{2n}{\log_2 2n-\log_2\log_2 n-4}+1}}+\frac{2\sqrt{2\log(\log_2 n)}}{\sqrt{\frac{2n}{\log_2 2n-\log_2\log_2 n-4}+1}}\right)\\
		&\geq GR_0^{1-\frac{2}{\theta}}\sqrt{\frac{2n}{\log_2 n}}\frac{2\sqrt{\sqrt{2\log(\log_2 n)}}}{\sqrt{\frac{2n}{\log_2 2n-\log_2\log_2 n-4}+1}}\\
		&= GR_0^{1-\frac{2}{\theta}}\frac{2\sqrt{\sqrt{2\log(\log_2 n)}}}{\sqrt{\frac{1}{1-\frac{\log_2\log_2 n + 3}{\log_2 n}}+\frac{\log_2 n}{2n}}}\geq GR_0^{1-\frac{2}{\theta}},
	\end{align*}
	where the first inequality holds because $\theta\in(0,1]$, the second inequality comes from (\ref{eq:2m}) and the fact that $0<\delta<1$, the third and fourth inequalities hold because of the definition of $n_0$ and inequality (\ref{eq:m}), the fifth inequality holds by utilizing $a+b\geq 2\sqrt{ab}$, and the sixth inequality holds since $n\geq 100$ and the function is monotonically increasing with respect to $n$.  So $\alpha^{-\frac{1}{\theta}}\leq\mu_m$.

	Below, given $\wh_{k}$ we denote by $\wh^*_k$ the closest optimal solution to $\wh_k$. Next, we consider two cases.
	\paragraph{Case 1.}	If $\alpha^{-\frac{1}{\theta}}\geq \mu_0$, then $\mu_0\leq \alpha^{-\frac{1}{\theta}}\leq\mu_m$. We have the following lemma.
	\begin{lem}
		Let $k^*$ satisfy $\mu_{k^*}\leq \alpha^{-\frac{1}{\theta}}\leq 2^{\frac{2}{\theta}-1}\mu_{k^*}$. Then for any $1\leq k\leq k^*$, there exists a Borel set $\mathcal{A}_k\subset\Omega$ of probability at least $1-k\bar{\delta}$, such that for $\omega\in\mathcal{A}_k$, the points $\{\wh_k\}_{k=1}^{m}$ generated by the Algorithm \ref{alg:FSA} satisfy
		\begin{align}
			\label{high:eq1}
			\|\wh_{k-1}-\wh^*_{k-1}\|_2\leq R_{k-1}=2^{-k+1}R_0,\\
			\label{high:eq2}
			P(\wh_k)-P_*\leq\mu_kR_k^{\frac{2}{\theta}}=2^{-k}\mu_0R_0^{\frac{2}{\theta}}.
		\end{align}
		Moreover, for $k>k^*$ there is a Borel set $\mathcal{C}_k\subset \Omega$ of probability at least $1-(k-k^*)\bar{\delta}$ such that on $\mathcal{C}_k$, we have
		\begin{equation}
			\label{high:eq3}
			P(\wh_k)- P(\wh_{k^*})\leq \mu_{k^*}R_{k^*}^{\frac{2}{\theta}}.
		\end{equation}
	\end{lem}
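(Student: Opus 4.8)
The plan is to prove the two assertions by splitting the $m$ stages into a first phase $1\le k\le k^*$, where the EBC is strong enough to certify that each iterate lies inside the shrinking ball used to warm-start the next stage, and a second phase $k>k^*$, where this certification fails and one can only control the \emph{increase} of the objective across stages. Throughout, the good events are built stage by stage from the high-probability guarantees of Proposition~\ref{prop2} and Lemma~\ref{lemma:new}, each invoked with confidence $1-\bar{\delta}$; since the samples drawn at distinct stages are independent, conditioning on the event that the previous stages succeeded (and in particular that the warm-start hypothesis holds) keeps the per-stage failure probability at $\bar{\delta}$, so a union bound over the $k$ (resp.\ $k-k^*$) stages yields the stated probabilities $1-k\bar{\delta}$ and $1-(k-k^*)\bar{\delta}$ for $\mathcal{A}_k$ and $\mathcal{C}_k$.

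For the first phase I would argue by induction on $k$. The base case, inequality~(\ref{high:eq1}) at $k=1$, i.e.\ $\|\wh_0-\wh_0^*\|_2\le R_0$, is exactly the initialization hypothesis $\|\w_1-\w^*\|_2\le R_0$. For the inductive step, assume (\ref{high:eq1}) holds at index $k$, i.e.\ $\|\wh_{k-1}-\wh_{k-1}^*\|_2\le R_{k-1}$. Then the stage-$k$ call $\text{SSG}(\wh_{k-1},\gamma_k,n_0,\W\cap\B(\wh_{k-1},R_{k-1}))$ with $\gamma_k=R_{k-1}/(G\sqrt{n_0+1})$ satisfies the hypothesis of Proposition~\ref{prop2}, so with probability at least $1-\bar{\delta}$ one gets $P(\wh_k)-P_*\le R_{k-1}\,a(n_0,\bar{\delta})$; a direct check from $\mu_0=2R_0^{1-2/\theta}a(n_0,\bar{\delta})$ and $\mu_kR_k^{2/\theta}=2^{-k}\mu_0R_0^{2/\theta}$ shows $R_{k-1}a(n_0,\bar{\delta})=\mu_kR_k^{2/\theta}$, which is (\ref{high:eq2}) at $k$. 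Feeding (\ref{high:eq2}) into the EBC yields
\begin{align*}
\|\wh_k-\wh_k^*\|_2\le\sqrt{\alpha}\,(P(\wh_k)-P_*)^{\theta/2}\le\sqrt{\alpha}\,\mu_k^{\theta/2}R_k\le R_k,
\end{align*}
where the last step uses $\mu_k\le\mu_{k^*}\le\alpha^{-1/\theta}$ (valid since $\mu_k$ is increasing in $k$ and $k\le k^*$); this is exactly (\ref{high:eq1}) at $k+1$. Intersecting the stage-$k$ success event with $\mathcal{A}_{k-1}$ defines $\mathcal{A}_k$ and gives $\Pr(\mathcal{A}_k)\ge 1-k\bar{\delta}$.

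For the second phase the crucial observation is that once $k>k^*$ we have $\mu_k>\alpha^{-1/\theta}$, so the EBC no longer guarantees $\|\wh_k-\wh_k^*\|_2\le R_k$ and Proposition~\ref{prop2} is unavailable. Instead I would invoke Lemma~\ref{lemma:new}, which requires no hypothesis relating the starting point to the optimum and only bounds the objective increase across a stage. Applied at each stage $j=k^*+1,\dots,k$, it gives with probability $1-\bar{\delta}$ that $P(\wh_j)-P(\wh_{j-1})\le R_{j-1}a(n_0,\bar{\delta})$. Telescoping and summing the geometric series $\sum_{j=k^*+1}^{k}R_{j-1}=R_0\sum_{i=k^*}^{k-1}2^{-i}\le 2R_{k^*}$ yields $P(\wh_k)-P(\wh_{k^*})\le 2R_{k^*}a(n_0,\bar{\delta})$, and the same algebra as before ($\mu_{k^*}R_{k^*}^{2/\theta}=2^{-k^*}\mu_0R_0^{2/\theta}=2R_{k^*}a(n_0,\bar{\delta})$) identifies this with $\mu_{k^*}R_{k^*}^{2/\theta}$, establishing (\ref{high:eq3}) on the set $\mathcal{C}_k$ formed from the $k-k^*$ stage events, of probability at least $1-(k-k^*)\bar{\delta}$.

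The main obstacle is conceptual rather than computational: recognizing the precise threshold $k^*$ at which the EBC ceases to convert the excess-risk bound into a distance bound compatible with the shrinking ball radii $R_k$, and hence where one must switch from the ``decrease toward the optimum'' guarantee of Proposition~\ref{prop2} to the weaker ``bounded increase from the previous iterate'' guarantee of Lemma~\ref{lemma:new}. The remaining work—verifying the identity $R_{k-1}a(n_0,\bar{\delta})=\mu_kR_k^{2/\theta}$, the monotonicity of $\mu_k$, the geometric-sum bound, and the stage-by-stage conditioning that keeps the failure probabilities additive—is routine bookkeeping.
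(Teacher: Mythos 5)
Your proposal is correct and follows essentially the same route as the paper's proof: an induction over stages $1\le k\le k^*$ that combines Proposition~\ref{prop2} with the EBC and the defining inequality $\mu_{k^*}\le\alpha^{-1/\theta}$ to propagate the warm-start radius bound, followed by a telescoping geometric-series argument via Lemma~\ref{lemma:new} for stages $k>k^*$, with the same union-bound bookkeeping producing $\mathcal{A}_k=\mathcal{A}_{k-1}\cap\mathcal{B}_k$ and $\mathcal{C}_k=\cap_{j=k^*+1}^{k}\mathcal{B}_j$. The only cosmetic differences are that you state the EBC step with $\theta/2$-powers rather than the paper's $2/\theta$-powers, and you sum the radii $\sum_{j}R_{j-1}\le 2R_{k^*}$ before multiplying by $a(n_0,\bar{\delta})$ instead of converting each term to $2^{k^*-j}\mu_{k^*}R_{k^*}^{2/\theta}$ first; both are algebraically identical to the paper's computation.
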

	\begin{proof}
		We prove (\ref{high:eq1}) and~(\ref{high:eq2}) by induction. Note that (\ref{high:eq1}) holds for $k=1$. Assume it is true for some $k> 1$ on $\mathcal{A}_{k-1}$. According to the Proposition \ref{prop2}, there exists a Borel set $\mathcal{B}_k$ with $\text{Pr}(\mathcal{B}_k)\geq 1-\bar{\delta}$ such that
		\begin{align*}
			P(\wh_{k})-P_*&\leq R_{k-1}G\left(\frac{1}{\sqrt{n_0+1}}+\frac{4\sqrt{2\log(2/\bar{\delta})}}{\sqrt{n_0+1}}\right)\\
			&=R_{k-1}a(n_0,\bar{\delta})=\frac{1}{2}\mu_k2^{(1-\frac{2}{\theta})k}R_0^{\frac{2}{\theta}-1}R_{k-1}\\&=\mu_kR_k^{\frac{2}{\theta}},
		\end{align*}
		which is~(\ref{high:eq2}). 
		By the inductive hypothesis, $\|\wh_{k-1}-\w^*_{k-1}\|_2\leq R_{k-1}$ on the set $\mathcal{A}_{k-1}$. Define $\mathcal{A}_k=\mathcal{A}_{k-1}\cap\mathcal{B}_k$. Note that 
		\begin{equation*}
			\text{Pr}(\mathcal{A}_k)\geq \text{Pr}(\mathcal{A}_{k-1})+\text{Pr}(\mathcal{B}_k)-1\geq 1-k\bar{\delta},
		\end{equation*}
		and on $\mathcal{A}_k$, by the EBC and the definition of $k^*$, we have
		\begin{align*}
			\|\wh_k-\wh^*_{k}\|_2^{\frac{2}{\theta}}&\leq \alpha^{\frac{1}{\theta}}(P(\wh_k)-P_*)\leq\frac{P(\wh_k)-P_*}{\mu_{k^*}}\\&\leq\frac{\mu_kR_k^{\frac{2}{\theta}}}{\mu_{k^*}}\leq R_k^{\frac{2}{\theta}},
		\end{align*}
		which is (\ref{high:eq1}) for $k+1$.
		
		Now we prove (\ref{high:eq3}). For $k>k^*$, by Lemma \ref{lemma:new}, there exists a Borel set $\mathcal{B}_k$ with $\text{Pr}(\mathcal{B}_k)\geq 1-\bar{\delta}$ such that
		\begin{align*}
			P(\wh_k)-P(\wh_{k-1})&\leq\frac{\gamma_k G^2}{2}+\frac{4GR_{k-1}\sqrt{2\log(2/\delta)}}{\sqrt{n_0+1}}\\
			&\leq R_{k-1}a(n_0,\bar{\delta})\\
			&=2^{k^*-k}R_{k^*-1}a(n_0,\bar\delta)\\
			&=2^{k^*-k}\mu_{k^*}R_{k^*}^{\frac{2}{\theta}}=\mu_kR_k^{\frac{2}{\theta}},
		\end{align*}
		which implies that on $\mathcal{C}_k=\cap_{j=k^*+1}^{k}\mathcal{B}_j$, we have
		\begin{equation*}
		\begin{aligned}
			&P(\wh_k)-P(\wh_{k^*})=\sum_{j=k^*+1}^{k}\left(P(\wh_j)-P(\wh_{j-1})\right)\\&\leq \sum_{j=k^*+1}^{k}2^{k^*-j}\mu_{k^*}R_{k^*}^{\frac{2}{\theta}}\leq\mu_{k^*}R_{k^*}^{\frac{2}{\theta}}.
			\end{aligned}
		\end{equation*}
		By union bound, we have $\text{Pr}(\cap_{j=k^*+1}^{k}\mathcal{B}_j)\geq 1-(k-k^*)\bar{\delta}$.
		Here completes the proof.
		
	\end{proof}
	Now we proceed the proof as follows. Note that $\mu_0\leq \alpha^{-\frac{1}{\theta}}\leq \mu_m$. At the end of $k^*$-th stage, on the Borel set $\mathcal{A}_{k^*}$ of probability at least $1-k^*\bar{\delta}$, we have
	\begin{equation*}
		P(\wh_{k^*})-P_*\leq\mu_{k^*}R_{k^*}^{\frac{2}{\theta}}.
	\end{equation*}
	Then on the Borel set $\mathcal{D}_m=\mathcal{C}_m\cap\mathcal{A}_{k^*}=(\cap_{j=k^*+1}^{m}\mathcal{B}_j)\cap A_{k^*}$  with $\text{Pr}(\mathcal{D}_m)\geq 1-m\bar{\delta}$, we have
	\begin{align*}
		P(\wh_m)-P_*&=P(\wh_m)-P(\wh_{k^*})+(P(\wh_{k^*})-P_*) \\
		&\leq 2\mu_{k^*}R_{k^*}^{\frac{2}{\theta}}\leq 4(\frac{\mu_{k^*}}{\alpha^{-\frac{1}{\theta}}})^{\frac{1}{\frac{2}{\theta}-1}}\mu_{k^*}R_{k^*}^{\frac{2}{\theta}}\\&=4\left(\frac{2^{(\frac{2}{\theta}-1)k^*}\mu_0}{\alpha^{-\frac{1}{\theta}}}\right)^{\frac{1}{\frac{2}{\theta}-1}}\mu_{k^*}R_{k^*}^{\frac{2}{\theta}}\\
		&=4(2^{k^*}\mu_{k^*}R_{k^*}^{\frac{2}{\theta}}\mu_0^{\frac{\theta}{2-\theta}}\alpha^{\frac{1}{2-\theta}})\\&=4(\mu_0R_0^{\frac{2}{\theta}}\mu_0^{\frac{\theta}{2-\theta}}\alpha^{\frac{1}{2-\theta}})\\
		&=4[(2R_0^{1-\frac{2}{\theta}}a(n_0,\bar{\delta}))^{\frac{2}{2-\theta}}R_0^{\frac{2}{\theta}}\alpha^{\frac{1}{2-\theta}}]\\&= 4(2\sqrt{\alpha}\cdot a(n_0,\bar{\delta}))^{\frac{2}{2-\theta}} \\
		&= (2^{2-\theta}2\sqrt{\alpha}\cdot a(n_0,\bar{\delta}))^{\frac{2}{2-\theta}}.			 		
	\end{align*}
	By the definition of $m$ and $\bar{\delta}$, and the fact that $m\leq\frac{1}{2}\log_2 n$, we have $m\bar{\delta}\leq\delta$. So $\text{Pr}(\mathcal{D}_m)\geq 1-\delta$.
	
	\paragraph{Case 2.}
	If $\alpha^{-\frac{1}{\theta}}<\mu_0$, then on $\mathcal{A}_1=\mathcal{B}_1$,
	\begin{align*}
		P(\wh_1)-P_*&\leq R_0\cdot a(n_0,\bar{\delta})=\frac{R_0}{a(n_0,\bar{\delta})^{\frac{\theta}{2-\theta}}}\cdot a(n_0,\bar{\delta})^{\frac{2}{2-\theta}}\\
		&=\frac{2^{\frac{\theta}{2-\theta}}}{\mu_0^{\frac{\theta}{2-\theta}}}a(n_0,\bar{\delta})^{\frac{2}{2-\theta}}\leq 2^{\frac{\theta}{2-\theta}}\bigg(\sqrt{\alpha}\cdot a(n_0,\bar{\delta})\bigg)^{\frac{2}{2-\theta}}.
	\end{align*}
	Hence on $\mathcal{A}_1\cap\mathcal{C}_m$, by a similar argument as in case 1, we have
	\begin{equation*}
	\begin{split}
		P(\wh_m)-P_*=P(\wh_m)-P(\wh_1)+P(\wh_1)-P_*\\
		\leq 2R_0\cdot a(n_0,\bar{\delta})\leq (2\sqrt{\alpha}\cdot a(n_0,\bar{\delta}))^{\frac{2}{2-\theta}},
		\end{split}
	\end{equation*}
	where $\text{Pr}(\mathcal{A}_1\cap\mathcal{C}_m)\geq 1-\delta$.

	Combining the two cases, we have with probability at least $1-\delta$, 
	\begin{align*}
		&P(\wh_m)-P_*\\&\leq (8\sqrt{\alpha} \vee 2\sqrt{\alpha})^{\frac{2}{2-\theta}}\left(G\left(\frac{1}{\sqrt{n_0+1}}+\frac{4\sqrt{2\log(2/\bar{\delta})}}{\sqrt{n_0+1}}\right)\right)^{\frac{2}{2-\theta}}\\
		&\leq (64\alpha)^{\frac{1}{2-\theta}}\left(\frac{G\left(1+4\sqrt{2\log(\frac{\log_2 n}{\delta})}\right)}{\sqrt{\frac{n}{\frac{1}{2}\log_2 n}}}\right)^{\frac{2}{2-\theta}}\\
		&=\left(\frac{128\alpha G^2\log_2 n\left(1+4\sqrt{2\log(\frac{\log_2 n}{\delta})}\right)^2}{n}\right)^{\frac{1}{2-\theta}},
	\end{align*}
	where the second inequality stems from the fact that $n_0+1\geq \frac{n}{m}\geq\frac{n}{\frac{1}{2}\log_2 n}$.
\end{proof}

\section{Detailed Analysis of Examples Satisfying EBC}

\paragraph{Risk Minimization Problems over an $\ell_2$ ball.} 
\begin{lem}\label{lem:2}
Consider the following problem 
\begin{align}
	\min_{\|\w\|_2\leq B}P(\w)\triangleq\E_{\z}[f(\w, \z)]
\end{align}
If $\min_{\w\in\R^d}P(\w)<\min_{\|\w\|_2\leq B}P(\w)$, then the above problem satisfies EBC$(\theta=1, \alpha)$.
\end{lem}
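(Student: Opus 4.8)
The plan is to exploit the hypothesis $\min_{\w\in\R^d}P(\w) < \min_{\|\w\|_2\leq B}P(\w)=P_*$, which says the unconstrained infimum is not attained inside the ball, thereby forcing the ball constraint to be active at every constrained minimizer. Since $P$ is convex and finite on $\R^d$ and the feasible set $\{\w:\|\w\|_2\leq B\}$ is convex with a strictly interior point (e.g.\ the origin), Slater's condition holds and I can apply the KKT conditions. First I would fix any optimal solution $\w^*_0\in\W_*$: there exist $\lambda\geq 0$ and $\g\in\partial P(\w^*_0)$ with $\g+2\lambda\w^*_0=\mathbf 0$ and $\lambda(\|\w^*_0\|_2^2-B^2)=0$. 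The key first sub-step is to show $\lambda>0$: if $\lambda=0$ then $\mathbf 0\in\partial P(\w^*_0)$, so $\w^*_0$ would be an unconstrained minimizer, contradicting the strict gap. Complementary slackness with $\lambda>0$ then forces $\|\w^*_0\|_2=B$, i.e.\ $\w^*_0$ lies on the sphere and $\g=-2\lambda\w^*_0$ points radially inward.

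The second step is to turn this stationarity into quadratic growth. For any feasible $\w$ (so $\|\w\|_2\leq B$), convexity gives $P(\w)-P(\w^*_0)\geq \langle\g,\w-\w^*_0\rangle = 2\lambda\big(B^2-\langle\w^*_0,\w\rangle\big)$, using $\|\w^*_0\|_2^2=B^2$. The elementary identity driving the bound is $B^2-\langle\w^*_0,\w\rangle=\frac{1}{2}\|\w-\w^*_0\|_2^2+\frac{1}{2}(B^2-\|\w\|_2^2)\geq\frac{1}{2}\|\w-\w^*_0\|_2^2$, where the inequality uses feasibility $\|\w\|_2\leq B$. Combining the two displays yields $P(\w)-P(\w^*_0)\geq\lambda\|\w-\w^*_0\|_2^2$, i.e.\ $\|\w-\w^*_0\|_2^2\leq \frac{1}{\lambda}\big(P(\w)-P_*\big)$.

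Finally I would reconcile this with Definition~\ref{def:1}, which uses the optimal solution $\w^*$ \emph{closest} to $\w$ rather than the fixed $\w^*_0$. Since every optimal solution has the same value $P_*$, and $\|\w-\w^*\|_2\leq\|\w-\w^*_0\|_2$ by definition of the closest point, the bound transfers verbatim: $\|\w-\w^*\|_2^2\leq\frac{1}{\lambda}(P(\w)-P(\w^*))$, which is exactly EBC$(\theta=1,\alpha=1/\lambda)$.

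The main obstacle I anticipate is the justification that $\lambda>0$ together with the attendant regularity: I need $P$ to be subdifferentiable at $\w^*_0$ (which holds since $P$ is finite convex on $\R^d$) and Slater's condition to license KKT for the possibly nonsmooth objective. The strict-gap hypothesis is precisely what rules out the degenerate case $\lambda=0$ and thereby delivers a uniform, $\w$-independent constant $\alpha=1/\lambda$; everything else reduces to the elementary algebra above.
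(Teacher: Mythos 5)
Your proof is correct and follows essentially the same route as the paper: both arguments invoke Slater/KKT to obtain a multiplier $\lambda$, use the strict gap $\min_{\R^d}P < \min_{\|\w\|_2\leq B}P$ to force $\lambda>0$, use complementary slackness to get $\|\w^*_0\|_2=B$, and then extract quadratic growth from the quadratic penalty term, yielding EBC$(\theta=1,\alpha=1/\lambda)$. The only cosmetic difference is that the paper packages the final step as strong convexity of $P_\lambda(\w)=P(\w)+\lambda(\|\w\|_2^2-B^2)$ (citing Rockafellar's Theorem 28.1 and uniqueness of its minimizer), whereas you unwind the same inequality by hand via the subgradient inequality and the identity $B^2-\langle\w^*_0,\w\rangle=\tfrac{1}{2}\|\w-\w^*_0\|_2^2+\tfrac{1}{2}(B^2-\|\w\|_2^2)$.
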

\begin{proof}
The proof is similar to that of Theorem 3.5 of \citep{guoyincalculus2016}. Denote $\w_*$ by an optimal solution of Example 4. Let $\Omega=\{\w\in\R^d\;|\;\|\w\|_2\leq B\}$, and $F(\w)=P(\w)+I_{\Omega}(\w)$, where $I_{\Omega}(\w)=0$ if $\w\in\Omega$, and otherwise $I_{\Omega}(\w)=+\infty$. Then we have $\arg\min_{\w\in\R^d}F(\w) = \arg\min_{\|\w\|_2\leq B}P(\w)$. Let $\w_*\in\arg\min_{\w\in\R^d}F(\w)$ denote an optimal solution.  

Since $B>0$, so the optimization problem is strictly feasible, then by the Lagrangian theory,  there exists some $\lambda\geq 0$, such that
\begin{equation*}
\begin{aligned}
	F(\w_*)&=\min_{\|\w\|_2\leq B}P(\w)=\min_{\w\in\R^d}(P(\w)+\lambda(\|\w\|_2^2-B^2))\\
	&=P(\w_*)+\lambda(\|\w_*\|_2^2-B^2).
\end{aligned}
\end{equation*}
Note that $\min_{\w\in\R^d}P(\w)<\min_{\|\w\|_2\leq B}P(\w)$, as a result $\lambda>0$. Then by complementary slackness, we know that $\|\w_*\|_2=B$. Denote by $P_\lambda(\w)=P(\w)+\lambda(\|\w\|_2^2-B^2)$. Then according to Theorem 28.1~\citep{rockafellar1970convex}, we have 
\begin{align}
	\label{ex4:1}
	\w_*\in \arg\min F&=\{\w\;|\;\|\w\|_2=B\}\cap\arg\min_{\w\in\R^d}P_\lambda(\w). 
\end{align}
Since $P_{\lambda}(\w)$ is strongly convex due to $\lambda>0$, its optimal solution is unique. As a result, 
\begin{align}
	\label{ex4:1}
	\w_*=\arg\min F&=\arg\min_{\w\in\R^d}P_\lambda(\w). 
\end{align}
In addition, there exists $\mu>0$ such that (due to the strong convexity of $P_\lambda(\w)$), 
\begin{align*}
	&\|\w-\arg\min P_{\lambda}(\w)\|_2\leq\mu(P_{\lambda}(\w)-\min_{\w}P_{\lambda}(\w))^{1/2}\\
	&=\mu(P(\w)+\lambda(\|\w\|_2^2-B^2)-P(\w_*))^{1/2}\\
	&\leq \mu(P(\w)-P(\w_*))^{1/2}.
\end{align*}
Then according to (\ref{ex4:1}), we know that
\begin{equation*}
	\|\w-\w_*\|_2^2\leq\mu^2(P(\w)-P(\w_*)),
\end{equation*}
which is EBC$(\theta=1,\mu^2)$.
\end{proof}

\paragraph{Quadratic Problems.}
\begin{lem}\label{lem:3}
Consider the following problem 
\begin{equation}
	\min_{\w\in\W}P(\w)\triangleq  \w^{\top}\E_{\z}[A(\z)]\w  +  \w^{\top}\E_{\z'}[\b(\z')] + c
\end{equation}
If $\E_\z[A(\z)]$ is PSD and $\W$ is a bounded polyhedron, then the above problem satisfies EBC$(\theta=1, \alpha)$.
\end{lem}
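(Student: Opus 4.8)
The plan is to exploit the exact quadratic structure of $P$ together with a Hoffman-type polyhedral error bound. Write $A := \E_\z[A(\z)]$, which we may take symmetric (replace $A$ by $(A+A^\top)/2$ without changing $P$), and $\b := \E_{\z'}[\b(\z')]$, so that $P(\w) = \w^\top A\w + \b^\top\w + c$ with $A \succeq 0$; hence $P$ is convex. Factor $A = U^\top U$. Since $P$ is quadratic, for any minimizer $\w^* \in \W_*$ the second-order expansion is exact:
\[ P(\w) - P_* = \langle \nabla P(\w^*), \w - \w^*\rangle + \|U(\w - \w^*)\|_2^2 . \]
Because $\nabla P(\w^*) = 2U^\top(U\w^*) + \b$, I would first observe that $U\w$ is constant on $\W_*$ (along any segment inside the optimal set $P$ is simultaneously constant and a convex quadratic with leading coefficient $\|U(\cdot)\|_2^2$, which forces that coefficient to vanish). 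Consequently $\nabla P(\w^*) =: \g_*$ is the \emph{same} vector for every $\w^* \in \W_*$, and $\langle\g_*,\w-\w^*\rangle \ge 0$ for all $\w\in\W$ by first-order optimality.

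Next I would represent $\W_*$ as a polyhedron. Writing $\W = \{\w : H\w \le \h\}$ and using that both terms on the right-hand side of the expansion are nonnegative, $P(\w) = P_*$ holds iff $U\w = \p_*$ and $\g_*^\top\w = \gamma_*$, where $\p_* := U\w^*$ and $\gamma_* := \g_*^\top\w^*$ are the well-defined common values on $\W_*$. Thus $\W_* = \{\w\in\W : U\w = \p_*,\ \g_*^\top\w = \gamma_*\}$ is the intersection of the polyhedron $\W$ with two affine equalities. The same expansion bounds the residuals of these equalities by the excess risk: $\|U\w - \p_*\|_2^2 \le P(\w) - P_*$ and $0 \le \g_*^\top\w - \gamma_* \le P(\w) - P_*$ for every $\w\in\W$, the first because $\|U(\w-\w^*)\|_2^2 \le P(\w)-P_*$ and the second because $\langle\g_*,\w-\w^*\rangle \le P(\w)-P_*$.

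Finally I would invoke Hoffman's error bound for the polyhedral system defining $\W_*$: there is a constant $\tau>0$, depending only on $(H, U, \g_*)$, such that for every point the distance to $\W_*$ is at most $\tau$ times the norm of the positive part of the constraint residuals. For $\w\in\W$ the inequalities $H\w\le\h$ are already satisfied, so only the two equality residuals survive, giving $\|\w - \w^*\|_2 = \mathrm{dist}(\w,\W_*) \le \tau(\|U\w - \p_*\|_2 + |\g_*^\top\w - \gamma_*|) \le \tau(\sqrt{\Delta} + \Delta)$, where $\Delta := P(\w) - P_*$. Since $\W$ is bounded, $\Delta$ is bounded by some $\Delta_{\max}$, so $\Delta \le \sqrt{\Delta_{\max}}\,\sqrt{\Delta}$ and hence $\|\w - \w^*\|_2 \le \tau(1 + \sqrt{\Delta_{\max}})\sqrt{\Delta}$; squaring yields $\text{EBC}(\theta=1,\alpha)$ with $\alpha = \tau^2(1+\sqrt{\Delta_{\max}})^2$.

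The main obstacle is the Hoffman step: one must justify that $\W_*$ is genuinely polyhedral (which rests on the constancy of $U\w$ on the optimal set established above) and that the Hoffman constant is uniform, so that for $\w\in\W$ only the two equality residuals—each controlled by $\Delta$—enter the bound. The boundedness of $\W$ is what lets the linear residual be absorbed into the square-root term, which is precisely what produces the favorable exponent $\theta=1$ rather than a weaker one.
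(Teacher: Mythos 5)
Your proof is correct, but it takes a genuinely different route from the paper's. The paper's proof is a two-line reduction: it factors $\E_{\z}[A(\z)] = A^\top A$, views the objective as $g(A\w) + \w^\top\E_{\z'}[\b(\z')] + c$ with $g(\u) = \u^\top\u$ strongly convex, and then simply cites an external result (Lemma 12 of arXiv:1512.03107) asserting the quadratic error bound for objectives of the form ``strongly convex composed with a linear map, plus a linear term'' over a polyhedron. What you have done is essentially supply, from scratch, the argument that underlies that cited lemma: the exact second-order expansion of the quadratic, the constancy of $U\w$ (hence of $\nabla P$) on the optimal set via the segment argument, the resulting polyhedral description $\W_* = \{\w \in \W : U\w = \p_*, \ \g_*^\top\w = \gamma_*\}$ whose equality residuals are controlled by the excess risk, and Hoffman's error bound to convert residuals into distance. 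All your individual steps check out: both terms in the expansion are nonnegative (the linear one by first-order optimality), so each residual is indeed bounded by $P(\w) - P_*$, and Hoffman's constant depends only on the constraint matrices $(H, U, \g_*)$ and not on the right-hand sides, so it is legitimately uniform. Your route buys several things the paper's citation leaves invisible: an explicit constant $\alpha = \tau^2\bigl(1+\sqrt{\Delta_{\max}}\bigr)^2$; a uniform treatment of the degenerate case $\E_{\z}[A(\z)] = 0$, which the paper must split off separately as a piecewise-linear problem; and a transparent identification of exactly where boundedness of $\W$ is used, namely to absorb the linear residual $\Delta$ into $\sqrt{\Delta}$, which is what yields $\theta = 1$. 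The paper's route buys brevity, at the cost of deferring the entire Hoffman machinery to a reference.
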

\begin{proof}
Let us consider $\E_{\z}[A(\z)]\neq 0$; otherwise it reduces to PLP. 

Note that $\E_{\z}[A(\z)]$ is PSD, so there exists a nonzero matrix $A$ such that $\E_{\z}[A(\z)]=A^\top A$. The original optimization problem is equivalent to
\begin{equation}
	\min_{\w\in\W}g(A\w)+\w^{\top}\E_{\z'}[b(\z')]+c,
\end{equation}
where $g(\u)=\u^\top\u$ is a strongly convex function of $\u$. Since the constraint is a polyhedral function of $\w$,  according to the Lemma 12 of~\citep{DBLP:journals/corr/arXiv:1512.03107}, we know that the optimization problem satisfies EBC$(\theta=1,\alpha)$.
\end{proof}

\paragraph{Piecewise Linear Problems (PLP)}
\begin{lem}\label{lem:4}
Consider the problem
\begin{align}\label{eqn:plp2}
	\min_{\w\in\W}P(\w)\triangleq    \E[f(\w, \z)]
\end{align}
where   $\E[f(\w, \z)]$ is a piecewise linear function and $\W$ is a bounded polyhedron. Then the problem~(\ref{eqn:plp2})  satisfies EBC$(\theta=1, \alpha)$.
\end{lem}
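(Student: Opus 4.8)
The goal is to verify EBC$(\theta=1,\alpha)$, i.e.\ to produce a constant $\alpha$ with $\|\w-\w^*\|_2^2\le \alpha\,(P(\w)-P(\w^*))$ for every $\w\in\W$, where $\w^*$ is the closest point of $\W_*$ to $\w$. Since $P$ is convex (Assumption~\ref{ass:1}), $\W_*$ is a closed convex set, so $\w^*$ is the Euclidean projection of $\w$ onto $\W_*$, we have $\|\w-\w^*\|_2=\text{dist}(\w,\W_*)$, and $P(\w^*)=P_*$. The plan is to first prove the \emph{linear} (sharp) growth bound
\[
\text{dist}(\w,\W_*)\le \gamma\,(P(\w)-P_*)\qquad\text{for all }\w\in\W,
\]
for some $\gamma>0$, and then to upgrade it to the desired inequality using boundedness of $\W$.

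For the linear error bound I would exploit the polyhedral structure. Because $P$ is convex and piecewise linear, it is the pointwise maximum of finitely many affine pieces, $P(\w)=\max_{1\le j\le k}(\a_j^\top\w+b_j)$, and the bounded polyhedron $\W$ is the solution set of a finite linear system $C\w\le\d$. Consequently the optimal set is itself a polyhedron, $\W_*=\{\w:\ C\w\le\d,\ \a_j^\top\w+b_j\le P_*,\ j=1,\dots,k\}$, because $P(\w)\le P_*$ is equivalent to the $k$ affine inequalities $\a_j^\top\w+b_j\le P_*$. Now fix $\w\in\W$: the rows $C\w\le\d$ are satisfied exactly (zero residual), while each residual of the remaining rows is $(\a_j^\top\w+b_j-P_*)_+\le (P(\w)-P_*)_+=P(\w)-P_*$, since $\a_j^\top\w+b_j\le P(\w)$. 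Hoffman's error bound for linear systems then gives $\text{dist}(\w,\W_*)\le \gamma_H\,\big\|\big((\a_j^\top\w+b_j-P_*)_+\big)_{j=1}^{k}\big\|_2\le \gamma_H\sqrt{k}\,(P(\w)-P_*)$, where $\gamma_H$ is the Hoffman constant of the coefficient matrix stacking $C$ and the rows $\a_j^\top$; this establishes the linear bound with $\gamma=\gamma_H\sqrt{k}$.

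Finally I would convert the linear bound to EBC$(\theta=1)$ using that $\W$ is bounded, $\|\w\|_2\le R$, so $\|\w-\w^*\|_2\le 2R$. Writing $\|\w-\w^*\|_2^2=\|\w-\w^*\|_2\cdot\|\w-\w^*\|_2\le 2R\,\text{dist}(\w,\W_*)\le 2R\gamma\,(P(\w)-P(\w^*))$ yields EBC$(\theta=1,\alpha)$ with $\alpha=2R\gamma_H\sqrt{k}$. The main obstacle is the linear error bound itself, i.e.\ justifying Hoffman's lemma in this form and the representation of $\W_*$ as a single polyhedron: the convexity of $P$ is what guarantees that the sublevel set $\{P\le P_*\}$ is exactly the intersection of the half-spaces coming from the affine pieces, while the boundedness of $\W$ is precisely what bridges the gap between the natural linear growth of a polyhedral objective and the squared-distance form required by the EBC.
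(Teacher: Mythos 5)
Your proof is correct, and it shares the paper's two-step skeleton — first a \emph{linear} (weak sharp minima) growth bound $\mathrm{dist}(\w,\W_*)\le\gamma\,(P(\w)-P_*)$, then a boundedness argument to pass to the squared-distance form of EBC$(\theta=1,\alpha)$ — but you fill in the first step yourself rather than citing it. The paper simply invokes the weak sharp minima condition for polyhedral problems (Burke--Ferris; Lemma~8 of the cited arXiv reference) as a black box, obtaining $\|\w-\w^*\|_2^2\le c\,(P(\w)-P(\w^*))^2$, and then absorbs one factor of $P(\w)-P_*$ using its boundedness over the compact set $\W$. You instead derive the linear bound from first principles: writing the convex piecewise linear $P$ as a finite max of affine functions (convexity from Assumption~\ref{ass:1} is exactly what licenses this), describing $\W_*$ as the polyhedron $\{C\w\le\d,\ \a_j^\top\w+b_j\le P_*\}$, and applying Hoffman's error bound with residuals controlled by $P(\w)-P_*$; your upgrade step then bounds a factor of $\mathrm{dist}(\w,\W_*)$ by the diameter $2R$ rather than bounding a factor of $P(\w)-P_*$ by its supremum — the two are equivalent uses of compactness. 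Since polyhedral weak sharp minima is itself classically proved via Hoffman's lemma, you have essentially unpacked the paper's citation; what your version buys is self-containedness, an explicit constant $\alpha=2R\gamma_H\sqrt{k}$, and a transparent account of where convexity and polyhedrality enter, at the cost of a longer argument and reliance on Hoffman's lemma as the one imported ingredient.
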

\begin{proof}
According to weak sharp minima condition~\citep{doi:10.1137/0331063} (e.g., Lemma 8 in~\citep{DBLP:journals/corr/arXiv:1512.03107}), we have
\[
\|\w - \w^*\|_2^2\leq c(P(\w) - P(\w^*))^2, 
\]
Since $P(\w)$ is piecewise linear, then $P(\w) - P(\w_*)$ is bounded on a bounded set. Then there exists $\alpha>0$ such that 
\[
\|\w - \w^*\|_2^2\leq \alpha (P(\w) - P(\w^*)), 
\]

\end{proof}

\paragraph{$\ell_1$ regularized problems}
\begin{lem}\label{lem:5}
Consider the problem: for $\ell_1$ regularized risk minimization: 
\begin{align}\label{eqn:l1r2}
	\min_{\|\w\|_1\leq B}  F(\w)\triangleq P(\w)+ \lambda \|\w\|_1,
\end{align} 
If  $P(\w)$ is convex quadratic or piecewise linear, then the problem~(\ref{eqn:l1r2})  satisfies EBC$(\theta=1, \alpha)$. \end{lem}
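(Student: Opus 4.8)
The plan is to treat the two hypotheses on $P$ separately and, in each case, to recognize the regularized objective as an instance of a structured problem for which EBC$(\theta=1,\alpha)$ has already been established, namely Lemma~\ref{lem:3} or Lemma~\ref{lem:4}. The two facts that make this work are that $\lambda\|\w\|_1$ is a convex piecewise linear (polyhedral) function of $\w$, and that the feasible region $\{\w:\|\w\|_1\le B\}$ is the scaled cross-polytope, hence a bounded polyhedron.

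First I would dispose of the piecewise linear case. If $P(\w)$ is piecewise linear, then $F(\w)=P(\w)+\lambda\|\w\|_1$ is a sum of two piecewise linear functions and is therefore itself piecewise linear, while the domain remains a bounded polyhedron. These are exactly the hypotheses of Lemma~\ref{lem:4}, so applying that lemma with $F$ in place of $P$ yields EBC$(\theta=1,\alpha)$ immediately.

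For the quadratic case I would mirror the argument in the proof of Lemma~\ref{lem:3}. Writing $P(\w)=\w^\top\E_\z[A(\z)]\w+\w^\top\E_{\z'}[\b(\z')]+c$ with $\E_\z[A(\z)]\succeq 0$, factor $\E_\z[A(\z)]=A^\top A$ and express $F(\w)=g(A\w)+h(\w)$, where $g(\u)=\u^\top\u$ is strongly convex and $h(\w)=\w^\top\E_{\z'}[\b(\z')]+c+\lambda\|\w\|_1$ is convex and polyhedral. Thus $F$ is a strongly convex function composed with a linear map, plus a polyhedral term, minimized over the bounded polyhedron $\{\|\w\|_1\le B\}$. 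Invoking Lemma 12 of~\citep{DBLP:journals/corr/arXiv:1512.03107}, the same structural error-bound result used in Lemma~\ref{lem:3}, then delivers EBC$(\theta=1,\alpha)$.

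The main obstacle is to verify that the cited structural lemma admits the additive $\ell_1$ regularizer as part of its polyhedral component rather than requiring a purely linear or smooth additive term. If it does not, I would fall back on the variable splitting $\w=\w^+-\w^-$ with $\w^+,\w^-\ge 0$: this turns $\lambda\|\w\|_1$ into the linear form $\lambda\mathbf{1}^\top(\w^++\w^-)$, keeps the objective quadratic in the stacked variable, and replaces the domain by another bounded polyhedron, so Lemma~\ref{lem:3} applies in the lifted space. One then transfers the bound back using that the canonical split $\w\mapsto\z:=(\w^+,\w^-)$ satisfies $\|\w-\w'\|_2\le\sqrt{2}\,\|\z-\z'\|_2$; mapping the closest lifted optimum down to an original optimum and using that the closest original optimum can only be nearer, the lifted quadratic-growth bound yields EBC$(\theta=1,\alpha)$ for the original problem with $\alpha$ enlarged by a factor of $2$. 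Keeping straight which lifted optimum corresponds to the closest original optimum in this transfer is the step that needs the most care.
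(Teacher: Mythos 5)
Your proposal is correct, but it reaches the conclusion by a different route than the paper. The paper's own proof is a two-line observation: adding $\lambda\|\w\|_1$ to a piecewise linear $P$ gives a piecewise linear objective, and adding it to a convex quadratic $P$ gives a \emph{piecewise convex quadratic} objective (the $\ell_1$ term is linear on each orthant); both cases are then dispatched uniformly by Lemma 3.3 of Li (2013) on error bounds for piecewise convex quadratic functions, with the piecewise linear case additionally using boundedness of the domain to pass from the exponent $2$ to the exponent $1$. Your piecewise linear case is functionally the same argument, just routed through the paper's Lemma~\ref{lem:4} instead of Li's lemma. For the quadratic case you diverge: rather than recognizing $F$ as piecewise convex quadratic, you either (a) invoke the Luo--Tseng-type structural result (Lemma 12 of the cited RSG paper) with the $\ell_1$ term absorbed into the polyhedral component, or (b) fall back on variable splitting $\w=\w^+-\w^-$ to linearize the $\ell_1$ norm, apply Lemma~\ref{lem:3} in the lifted space, and transfer the bound back. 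Route (a) carries the verification burden you correctly flag; route (b) is fully rigorous as you sketch it: the lifted quadratic form $\begin{pmatrix} M & -M\\ -M & M\end{pmatrix}$ stays PSD, the lifted domain $\{\z\geq 0,\ \mathbf{1}^\top\z\leq B\}$ is a bounded polyhedron, the canonical split preserves objective values, any lifted optimum maps down to an original optimum (since $\lambda\|\u^+-\u^-\|_1\leq\lambda\mathbf{1}^\top(\u^++\u^-)$), and the $\sqrt{2}$-Lipschitz projection plus the "closest optimum can only be nearer" observation gives EBC$(\theta=1,2\tilde\alpha)$. The trade-off: the paper's argument is shorter and needs only one external citation, while your splitting argument is more self-contained (it reuses only lemmas already proved in the paper) and avoids checking the hypotheses of an external structural lemma, at the cost of a factor-$2$ inflation of $\alpha$ and a doubling of dimension.
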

\begin{proof}
 It is easy to see that $P(\w)$ is either piecewise linear or piecewise convex quadratic. According to Lemma 3.3 of~\citep{DBLP:journals/mp/Li13}, we have
\begin{itemize}
	\item When $P(\w)$ is piecewise linear, there exists $\alpha_1, \alpha>0$, such that
	\begin{align*}
		\|\w-\w^*\|_2^2&\leq \alpha_1(P(\w)-P(\w^*))^2\\
		&\leq \alpha(P(\w)-P(\w^*)),
	\end{align*}
	where we use the fact $P(\w) - P(\w_*)$ is bounded over a bounded domain due to its Lipschitz continuity. 
	\item When $P(\w)$ is piecewise convex quadratic, there exists $\alpha_2>0$, such that
	\begin{equation*}
		\|\w-\w^*\|_2^2\leq\alpha_2(P(\w)-P(\w^*)).
	\end{equation*}
\end{itemize}
\end{proof}

\begin{lem}\label{lem:6}
Consider the problem:
\begin{align}
	\min_{\w\in\W}F(\w)\triangleq  P(\w)+ \lambda\|\w\|_p^p
\end{align}
If $P(\w)$  is convex quadratic, and $\W$ is a bounded polyheron, then the above problem satisfies EBC$(\theta=1/p, \alpha)$.
\end{lem}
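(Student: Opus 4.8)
The plan is to reduce $\text{EBC}(\theta=1/p,\alpha)$ for $F(\w)\triangleq P(\w)+\lambda\|\w\|_p^p$ to a H\"olderian growth estimate of order $p$ around the optimal set, and then relax the exponent using boundedness of $\W$. Write $F_*=\min_{\w\in\W}F(\w)$ and, for $\w\in\W$, let $\w^*$ be the minimizer of $F$ closest to $\w$. Since $P$ is convex (being convex quadratic) and $t\mapsto|t|^p$ is convex for $p\ge1$, $F$ is convex and coercive, so the minimum is attained; for $p>1$ the regularizer is strictly convex, so $\W_*=\{\w^*\}$ is a singleton. The intermediate target is the bound $F(\w)-F_*\ge c\,\|\w-\w^*\|_2^{\,p}$ for all $\w\in\W$ and some $c>0$.

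First I would establish this growth bound, whose engine is the uniform convexity of the separable regularizer rather than any fine property of $P$. For $p\ge2$, Clarkson's inequality $\|\frac{\u+\v}{2}\|_p^p+\|\frac{\u-\v}{2}\|_p^p\le\frac12\|\u\|_p^p+\frac12\|\v\|_p^p$ shows $\lambda\|\cdot\|_p^p$ is uniformly convex of order $p$, and adding the convex $P$ preserves this; the subgradient form of uniform convexity gives $F(\w)\ge F(\w^*)+\langle\g^*,\w-\w^*\rangle+c_0\|\w-\w^*\|_p^p$ for some $\g^*\in\partial F(\w^*)$, while optimality of $\w^*$ over the convex set $\W$ forces $\langle\g^*,\w-\w^*\rangle\ge0$; converting $\|\cdot\|_p$ to $\|\cdot\|_2$ with a dimension-dependent constant yields $F(\w)-F_*\ge c\,\|\w-\w^*\|_2^{\,p}$. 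For $1<p<2$ the map $t\mapsto|t|^p$ is strongly convex on $[-R,R]$ (its second derivative is at least $p(p-1)R^{p-2}>0$ there), so on the bounded polyhedron $\W\subseteq[-R,R]^d$ the regularizer is strongly convex and one even obtains the stronger order-$2$ bound $F(\w)-F_*\ge c\,\|\w-\w^*\|_2^{2}$. The borderline $p=1$ is exactly Lemma~\ref{lem:5}, which already gives $\text{EBC}(\theta=1,\alpha)$.

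Finally I would convert the growth bound into the claimed exponent. From $\|\w-\w^*\|_2^{\,p}\le c^{-1}(F(\w)-F_*)$ we get $\|\w-\w^*\|_2^2\le c^{-2/p}(F(\w)-F_*)^{2/p}$, and since $F$ is continuous on the compact set $\W$ there is a finite $M$ with $F(\w)-F_*\le M$; bounding $(F(\w)-F_*)^{2/p}\le M^{1/p}(F(\w)-F_*)^{1/p}$ gives $\|\w-\w^*\|_2^2\le\alpha\,(F(\w)-F_*)^{1/p}$ with $\alpha=c^{-2/p}M^{1/p}$, which is $\text{EBC}(\theta=1/p,\alpha)$ (the order-$2$ bound for $1<p<2$ is relaxed in the same way). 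The main obstacle is the growth bound itself: one must guarantee it holds globally and in \emph{every} direction --- in particular along directions in which the quadratic $P$ is flat, where all curvature must be supplied by the $\ell_p^p$ term --- and must track the uniform-convexity and norm-equivalence constants so that the resulting $\alpha$ is finite. Splitting at $p=2$ (order-$p$ uniform convexity for $p\ge2$ versus order-$2$ strong convexity on the bounded domain for $1<p<2$, with $p=1$ deferred to Lemma~\ref{lem:5}) is the cleanest way to cover the full range.
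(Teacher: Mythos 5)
Your proposal is correct, but it takes a genuinely different route from the paper. The paper's proof is essentially a one-step citation: it invokes Theorem 5.2 of \citep{doi:10.1137/070689838} (a global error bound for convex polynomials), which for the degree-$p$ convex objective gives $\|\w-\w^*\|_2\le\tau\bigl(F(\w)-F_*+(F(\w)-F_*)^{1/p}\bigr)$, then absorbs the first term using boundedness of $F-F_*$ on the compact $\W$ and squares, ending at $\|\w-\w^*\|_2^2\le\alpha\,(F(\w)-F_*)^{2/p}$ --- i.e.\ the paper actually proves the exponent $2/p$ claimed in its main text, and the lemma's stated $1/p$ follows by exactly the boundedness relaxation you make explicit at the end. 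Your argument instead extracts all curvature from the regularizer itself: Clarkson's inequality for $p\ge2$, strong convexity of $|t|^p$ on bounded intervals for $1<p<2$ (your modulus $p(p-1)R^{p-2}$ is correct; the blow-up of the second derivative at $t=0$ only helps), and Lemma~\ref{lem:5} for $p=1$, combined with first-order optimality to get the growth bound $F(\w)-F_*\ge c\|\w-\w^*\|_2^p$. What your route buys: it is self-contained with explicit constants, and it is valid for every real $p\ge1$, whereas the paper's appeal to the convex-polynomial error bound is literally justified only when $\sum_i|w_i|^p$ is a polynomial (e.g.\ $p$ an even integer) --- a gap your argument avoids. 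What the paper's route buys: brevity, and independence from any curvature of the regularizer, so it applies verbatim to arbitrary convex polynomial objectives. One simplification you may want: you do not need the subgradient form of uniform convexity (whose derivation from the midpoint Clarkson inequality requires a dyadic-iteration argument); since $\W$ is convex, applying the midpoint inequality to $\w$ and $\w^*$ and using optimality of $\w^*$ against the midpoint $\tfrac{1}{2}(\w+\w^*)\in\W$ directly yields
\begin{align*}
F(\w)-F_*\;\ge\;2^{1-p}\lambda\,\Bigl\|\w-\w^*\Bigr\|_p^p,
\end{align*}
after which your norm-conversion and exponent-relaxation steps finish the proof unchanged.
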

\begin{proof}
According to Theorem 5.2~\citep{doi:10.1137/070689838}, the objective function is $p$-th order convex polynomial function and  $\forall \w\in\W$ there exists $\tau>0$ such that
\begin{align*}
	\|\w - \w^*\|_2\leq \tau (P(\w) -P(\w^*)  + (P(\w) - P(\w^*))^{1/p}).
\end{align*}
There exists $c>0$ such that $P(\w) - P(\w^*)\leq c$ for any $\w\in\W$. Then
\begin{align*}
	&\|\w - \w^*\|_2\leq \tau(c^{1-1/p}+1) (P(\w) - P(\w^*))^{1/p},
\end{align*}
i.e., 
\begin{align*}
	&\|\w - \w^*\|^2_2\leq \tau^2(c^{1-1/p}+1)^2 (P(\w) - P(\w^*))^{2/p}.
\end{align*}
\end{proof}

\section{Proof of Corollary~\ref{cor:3}}
The objective function is a semi-algebraic function. As a result, there must exists $\theta\in(0,2]$ such that EBC holds according to existing results~\citep{arxiv:1510.08234}. If $\theta>1$, then EBC also holds with $\theta=1$ due to that the objective function is bounded. 


\section{Different Variants of ASA}
In this section, we introduce two variants of ASA, namely ASA2 (Algorithm \ref{alg:FSA2}) and ASA3 (Algorithm \ref{alg:ASA3}). Compared with ASA, ASA2 can get around of the expensive projection operation and ASA3 can allow a subroutine with proximal mapping.
\subsection{A variant of ASA without projection}
Now we provide a different variant of ASA, which utilizes SSGS (Algorithm 2 in~\citep{DBLP:journals/corr/abs-1607-01027}) as a subroutine to avoid the projection onto the intersection of $\W$ and a bounded ball in the vanilla ASA. SSGS is an algorithm which adds a strongly convex regualarizer to the original loss function, i.e.,
\begin{equation*}
	\min_{\w\in\W}P(\w)+\frac{1}{2\beta}\|\w-\w_1\|_2^2,
\end{equation*}
where $\w_1\in \W$ is called reference point. For completeness, we describe the SSGS and the corresponding ASA2 algorithms in Algorithm \ref{alg:ssgs} and Algorithm \ref{alg:FSA2} respectively.

		\begin{algorithm}[t]
			\caption{SSGS$(\w_1,\beta,T)$}
			\label{alg:ssgs}
			\begin{algorithmic}[1]
				\REQUIRE~~$\w_1\in \mathcal{W}$, $\beta>0$ and  $T$
				\ENSURE~~$\wh_T$
				\FOR {$t=1,\ldots, T$}
				\STATE $\w'_{t+1}=(1-\frac{2}{t})\w_t+\frac{2}{t}\w_1-\frac{2\beta}{t}g_t$
				\STATE $\w_{t+1}=\Pi_{\W}(\w'_{t+1})$
				\ENDFOR
				\STATE $\wh_T=\frac{1}{T+1}\sum_{t=1}^{T+1}{\w_t}$
				\STATE return $\wh_T$
			\end{algorithmic}
		\end{algorithm}
	%
		\begin{algorithm}[t]
			\caption{ASA2($\w_1,n,R_0$)}
			\label{alg:FSA2}
			\begin{algorithmic}[1]
				\REQUIRE~~$\w_1\in \W$, $n$ and  
				$R_0=2R$
				\ENSURE~~$\wh_m$
				\STATE Set $\wh_0=\w_1$, $m=\lfloor \frac{1}{2}\log_2\frac{2n}{\log_2 n}\rfloor-1$, $n_0=\lfloor n/m\rfloor$ 
				\FOR{$k=1,\ldots,m$}
				\STATE Set $\beta_k=\frac{R_{k-1}\sqrt{n_0}}{2G}$ and $R_k = R_{k-1}/2$
				\STATE $\wh_{k}=\text{SSGS}(\wh_{k-1},\beta_k,n_0)$
				\ENDFOR
			\end{algorithmic}
		\end{algorithm}
We first present a result for analyzing SSGS, which is the Corollary 5 in \citep{DBLP:journals/corr/abs-1607-01027}.
\begin{prop}\label{prop:ssgs}
	Suppose {\bf Assumptions~\ref{ass:1} and \ref{ass:2}} hold.  Let $0<\delta<1/e$, $T\geq 3$, $\w^*\in\mathcal{W}_*$ be the closest optimal solution to $\w_1$, and $R_0$ be an upper bound on $\|\w_1 - \w^*\|_2$. Apply T iterations of the SSGS (Algorithm \ref{alg:ssgs}) and return the average solution, where $g_t$ is a stochastic subgradient of $P(\w)$ at $\w_t$. With probability at least $1-\delta$, we have
	\begin{align*}
		&P(\wh_T)-P_*\leq\frac{1}{2\beta}\|\w_1-\w^*\|_2^2+\frac{34\beta G^2(1+\log T+\log(4\log T/\delta))}{T}.
	\end{align*}
	where $\wh_T = \frac{1}{T+1}\sum_{t=1}^{T+1} \w_t$. Moreover, choose $\beta=\frac{R_0\sqrt{T}}{2G}$, and then with probability at least $1-\delta$, 
	\begin{align*}
		&P(\wh_T)-P_*
		\leq R_0G\left(\frac{1}{\sqrt{T}}+\frac{17\left(1+\log T+\log\left(4\log T/\delta\right)\right)}{\sqrt{T}}\right).
	\end{align*}
	
	Similarly, for any nonnegative $R_0$, by choosing $\beta=\frac{R_0\sqrt{T}}{2G}$, and then with probability at least $1-\delta$, 
	\begin{align*}
		&P(\wh_T)-P(\w_1)
		\leq R_0G\left(\frac{1}{\sqrt{T}}+\frac{17\left(1+\log T+\log\left(4\log T/\delta\right)\right)}{\sqrt{T}}\right).
	\end{align*}
\end{prop}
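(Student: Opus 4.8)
The plan is to recognize SSGS as projected stochastic subgradient descent on the auxiliary objective
\[
P_\beta(\w)\triangleq P(\w)+\frac{1}{2\beta}\|\w-\w_1\|_2^2,
\]
which under Assumption~\ref{ass:1} is $(1/\beta)$-strongly convex on $\W$, run with the decreasing step size $\eta_t=2\beta/t$. A stochastic subgradient of $P_\beta$ at $\w_t$ is $\hat g_t=g_t+\frac1\beta(\w_t-\w_1)$, where $g_t$ is the stochastic subgradient of $P$ from the statement, and one checks that $\Pi_\W(\w_t-\eta_t\hat g_t)$ expands precisely into Line~3 of Algorithm~\ref{alg:ssgs}. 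The essential observation is that all randomness sits in $g_t$, so the noise $\xi_t\triangleq\hat g_t-\E[\hat g_t\mid\F_{t-1}]=g_t-\E[g_t\mid\F_{t-1}]$ satisfies $\|\xi_t\|_2\le 2G$ by Assumption~\ref{ass:2}, independently of $\beta$ and the radius $R$.

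First I would reduce the claim to an optimization bound for $P_\beta$. Let $\w_\beta\triangleq\argmin_{\w\in\W}P_\beta(\w)$. Since $P\le P_\beta$ pointwise and, for the closest optimal $\w^*\in\W_*$, $P_\beta(\w_\beta)\le P_\beta(\w^*)=P_*+\frac{1}{2\beta}\|\w^*-\w_1\|_2^2$, we obtain
\[
P(\wh_T)-P_*\le\big(P_\beta(\wh_T)-P_\beta(\w_\beta)\big)+\frac{1}{2\beta}\|\w_1-\w^*\|_2^2 ,
\]
which already supplies the bias term of the first display and leaves a high-probability $O(\beta G^2\log T/T)$ bound on the regularized gap to be established.

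For that bound I would combine the projection inequality with the $(1/\beta)$-strong convexity of $P_\beta$ to get, for each $t$,
\[
P_\beta(\w_t)-P_\beta(\w_\beta)\le\Big(\tfrac{1}{2\eta_t}-\tfrac{1}{2\beta}\Big)\|\w_t-\w_\beta\|_2^2-\tfrac{1}{2\eta_t}\|\w_{t+1}-\w_\beta\|_2^2+\tfrac{\eta_t}{2}\|\hat g_t\|_2^2+\langle\xi_t,\w_\beta-\w_t\rangle .
\]
With $\eta_t=2\beta/t$ the coefficient of $\|\w_t-\w_\beta\|_2^2$ in the telescoped sum equals $-\tfrac{1}{4\beta}$ at every $t$, so the distance terms are nonpositive and can be spent rather than discarded. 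A short argument keeps the gradient term at the $G$ scale: optimality gives $P_\beta(\w_\beta)\le P_\beta(\w_1)=P(\w_1)$, whence $\frac{1}{2\beta}\|\w_\beta-\w_1\|_2^2\le P(\w_1)-P(\w_\beta)\le G\|\w_1-\w_\beta\|_2$ and thus $\|\w_\beta-\w_1\|_2\le 2\beta G$. This yields $\|\hat g_t\|_2\le 3G+\frac1\beta\|\w_t-\w_\beta\|_2$, so $\tfrac{\eta_t}{2}\|\hat g_t\|_2^2$ splits into an $O(\beta G^2/t)$ piece summing to $O(\beta G^2\log T)$ and a $\tfrac{2}{\beta t}\|\w_t-\w_\beta\|_2^2$ piece absorbed by the $-\tfrac{1}{4\beta}$ budget for all but the first few $t$ (for which the iterates are themselves $O(\beta G)$-close to $\w_1$). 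Dividing by $T$ and invoking convexity of $P_\beta$ at the average $\wh_T$ produces the deterministic part of the rate.

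The hard part will be controlling the martingale $\sum_t\langle\xi_t,\w_\beta-\w_t\rangle$ in high probability while preserving the $G^2$ scaling---rather than the inflated $(G+2R/\beta)^2$---and extracting the tight $\log(4\log T/\delta)$ factor. I would apply a Freedman-type inequality: the increments are bounded by $2G\|\w_t-\w_\beta\|_2$ and the conditional variances by $4G^2\|\w_t-\w_\beta\|_2^2$, and since $\sum_t\|\w_t-\w_\beta\|_2^2$ is itself governed by the running optimization gap through strong convexity, a dyadic peeling over the a~priori unknown magnitude of this variance proxy trades the variance against a small multiple of the gap at the cost of a union bound over $O(\log T)$ scales---this is what produces the $\log\log T$ dependence and the explicit constant. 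Assembling the pieces gives the first display; substituting $\beta=R_0\sqrt{T}/(2G)$ balances the two terms and yields the second. The third display follows by the same argument with $\w_1$ taken as the comparator: then the distance-from-start term vanishes, only the variance term survives, and neither optimality of the comparator nor the EBC is used.
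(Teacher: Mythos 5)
Your proposal is correct in structure, but note that the paper itself never proves this proposition: it is imported verbatim as Corollary 5 of \citep{DBLP:journals/corr/abs-1607-01027}, so the paper's ``proof'' is a citation, and what you have written is essentially a self-contained reconstruction of the argument behind that reference. Your key steps check out: with $\eta_t=2\beta/t$ and $\hat g_t=g_t+\frac{1}{\beta}(\w_t-\w_1)$, the update in Algorithm~\ref{alg:ssgs} is exactly $\Pi_{\W}(\w_t-\eta_t\hat g_t)$; the comparator reduction $P(\wh_T)-P_*\le \bigl(P_\beta(\wh_T)-P_\beta(\w_\beta)\bigr)+\frac{1}{2\beta}\|\w_1-\w^*\|_2^2$ is valid; the telescoped $-\frac{1}{4\beta}\|\w_t-\w_\beta\|_2^2$ budget is computed correctly; the drift bound $\|\w_\beta-\w_1\|_2\le 2\beta G$ follows from Lipschitzness as you say; and $\|\xi_t\|_2\le 2G$ holds because the $\F_{t-1}$-measurable regularizer term cancels in the noise. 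The Freedman-plus-dyadic-peeling device you invoke is precisely the Kakade--Tewari mechanism that produces the $\log(4\log T/\delta)$ factor, and it is the same tool underlying the cited corollary, so your route and the reference's route coincide in substance. Two refinements are worth recording. First, your ``first few $t$'' patch can be replaced by a one-line induction using non-expansiveness of $\Pi_\W$ and $\w_1\in\W$, which shows \emph{every} iterate satisfies $\|\w_t-\w_1\|_2\le 2\beta G$; this gives $\|\hat g_t\|_2\le 3G$ uniformly, bounds $\|\w_t-\w_\beta\|_2\le 4\beta G$, and caps the martingale increments all at once. Second, your sketch delivers the right orders of magnitude but not the stated constants $34$ and $17$ nor the exact form $1+\log T+\log(4\log T/\delta)$; matching those numbers would require carrying the peeling constants and the $O(\beta G^2(1+\log T))$ bookkeeping explicitly, which is exactly the labor the paper avoids by citing. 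What your approach buys in exchange is self-containedness, plus the explicit observation that the third display needs no new idea (take $\w_1$ as comparator, so the bias term vanishes and neither optimality of the comparator nor the EBC is used), which the paper leaves implicit.
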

Then we provide the high probability analysis of ASA2, which is Theorem \ref{thm:high2}.
\begin{thm}
	\label{thm:high2}
	Suppose {Assumptions~\ref{ass:1}, and~\ref{ass:2}} hold. Let $\wh_m$ be the returned solution of the Algorithm \ref{alg:FSA2}. For  $n\geq 100$ and any $\delta\in(0,1)$, with probability at least $1-\delta$, we have
	\begin{equation*}
		P(\wh_m)-P_*
		\leq O\bigg(\frac{\alpha G^2\log (n)(\log n+\log(\frac{\log n}{\sqrt{\delta}}))^2}{n}\bigg)^{\frac{1}{2-\theta}}.
	\end{equation*}
\end{thm}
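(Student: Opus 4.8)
The plan is to follow the proof of Theorem~\ref{thm:high} verbatim in structure, replacing the per-stage guarantees of SSG (Proposition~\ref{prop2} and Lemma~\ref{lemma:new}) by the per-stage guarantees of SSGS (Proposition~\ref{prop:ssgs}). The only object that changes is the ``accuracy function'' controlling one stage of $n_0$ iterations: with $\bar\delta=2\delta/\log_2 n$, I would set
\[
a(n_0,\bar\delta)=G\left(\frac{1}{\sqrt{n_0}}+\frac{17\bigl(1+\log n_0+\log(4\log n_0/\bar\delta)\bigr)}{\sqrt{n_0}}\right).
\]
By Proposition~\ref{prop:ssgs} this quantity bounds, each with confidence $1-\bar\delta$, both $P(\wh_k)-P_*$ (when the reference point satisfies $\|\wh_{k-1}-\wh^*_{k-1}\|_2\leq R_{k-1}$) and $P(\wh_k)-P(\wh_{k-1})$ (for an arbitrary nonnegative radius $R_{k-1}$). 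These are exactly the two inequalities that the staged argument of Theorem~\ref{thm:high} invokes, so the geometric-halving machinery carries over unchanged.

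Concretely, I would first set $\mu_0=2R_0^{1-2/\theta}a(n_0,\bar\delta)$, $\mu_k=2^{(2/\theta-1)k}\mu_0$ and $R_k=R_0/2^k$, so that $\mu_kR_k^{2/\theta}=2^{-k}\mu_0R_0^{2/\theta}$. As before I may assume $\alpha\geq R_0^{2-\theta}/G^\theta$ (otherwise enlarge $\alpha$ to this value, which keeps the EBC valid), giving $\alpha^{-1/\theta}\leq GR_0^{1-2/\theta}$. Using $n\geq 100$ and the definition of $m$ one gets $2^m\geq\frac14\sqrt{2n/\log_2 n}$, and I would re-run the chain of inequalities that lower-bounds $\mu_m$ to conclude $\mu_m\geq GR_0^{1-2/\theta}\geq\alpha^{-1/\theta}$; the algebra is identical to Theorem~\ref{thm:high} but with the SSGS constant $17(1+\log n_0+\log(4\log n_0/\bar\delta))$ in place of $4\sqrt{2\log(2/\bar\delta)}$.

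With $\alpha^{-1/\theta}\leq\mu_m$ in hand, the argument splits into the two cases of Theorem~\ref{thm:high}. In Case~1 ($\mu_0\leq\alpha^{-1/\theta}$) I pick $k^*$ with $\mu_{k^*}\leq\alpha^{-1/\theta}\leq 2^{2/\theta-1}\mu_{k^*}$ and prove by induction, exactly as in the embedded lemma, that $\|\wh_{k-1}-\wh^*_{k-1}\|_2\leq R_{k-1}$ and $P(\wh_k)-P_*\leq\mu_kR_k^{2/\theta}$ for $k\leq k^*$ (first bound of Proposition~\ref{prop:ssgs} plus the EBC in the inductive step), and $P(\wh_k)-P(\wh_{k^*})\leq\mu_{k^*}R_{k^*}^{2/\theta}$ for $k>k^*$ (third bound of Proposition~\ref{prop:ssgs} plus telescoping). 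Combining yields $P(\wh_m)-P_*\leq 2\mu_{k^*}R_{k^*}^{2/\theta}\leq(2^{2-\theta}\cdot 2\sqrt\alpha\,a(n_0,\bar\delta))^{2/(2-\theta)}$ on a set of probability at least $1-m\bar\delta\geq 1-\delta$; Case~2 ($\alpha^{-1/\theta}<\mu_0$) is the single-stage estimate and gives the same order. Finally I would substitute $n_0+1\geq n/m\geq n/(\tfrac12\log_2 n)$ and $\bar\delta=2\delta/\log_2 n$ into $a(n_0,\bar\delta)$ and simplify the logarithms to reach the claimed bound.

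The main obstacle is bookkeeping rather than conceptual. I must verify that the hypotheses of Proposition~\ref{prop:ssgs} hold at every stage, namely $\bar\delta<1/e$ and $n_0\geq 3$; both follow from $n\geq 100$, since then $\log_2 n>6$ forces $\bar\delta<1/3<1/e$. The subtler point is tracking the logarithmic factor: because the SSGS high-probability guarantee carries an $O(\log T)$ term (the $1+\log T$ factor) rather than the $\sqrt{\log(1/\delta)}$ of SSG, the accuracy function $a(n_0,\bar\delta)$ is \emph{linear} in the log factor, so after raising $a$ to the power $2/(2-\theta)$ the term $(1+\log n_0+\log(4\log n_0/\bar\delta))^{2/(2-\theta)}=\bigl((\,\cdots)^2\bigr)^{1/(2-\theta)}$ contributes the \emph{squared} log factor $(\log n+\log(\log n/\sqrt\delta))^2$, in contrast to the single log of Theorem~\ref{thm:high}. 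This is precisely the price paid for dispensing with the projection onto the intersection of $\W$ and a ball, and confirming that this is the only change in the final rate is the crux of the verification.
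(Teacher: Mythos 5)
Your proposal is correct and follows essentially the same route as the paper: the paper's proof likewise redefines the accuracy function $a(n_0,\bar\delta)$ with the SSGS constants, verifies the one piece of algebra that actually changes (the lower bound $\mu_m\geq GR_0^{1-2/\theta}\geq\alpha^{-1/\theta}$, proved via the inequality $\tfrac12\sqrt{2n/\log_2 n}\,a(n_0,\bar\delta)/G\geq 1$ for $n\geq 100$), and then duplicates the two-case staged argument of Theorem~\ref{thm:high} verbatim, arriving at $P(\wh_m)-P_*\leq(64\alpha)^{1/(2-\theta)}a(n_0,\bar\delta)^{2/(2-\theta)}$ with the squared logarithmic factor you identified. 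Your bookkeeping points (checking $\bar\delta<1/e$ and $n_0\geq 3$, and the substitution $n_0\geq n/m-1$ rather than $n_0+1\geq n/m$) are exactly the details the paper handles.
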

\begin{proof}
	We use the same notation as that in the proof of Theorem \ref{thm:high} unless specified.
	Define 
	\begin{equation}\label{new:a}
		a(n,\bar{\delta})=G\bigg(\frac{1}{\sqrt{n}}+\frac{17(1+\log n+\log(4\log n/\bar{\delta}))}{\sqrt{n}}\bigg).
	\end{equation}
	First we show that when $n\geq 100$, we have
	\begin{equation*}
		\frac{1}{2}\sqrt{\frac{2n}{\log_2 n}}\bigg(\frac{1}{\sqrt{n_0}}+\frac{17(1+\log n_0+\log(4\log n_0/\bar{\delta}))}{\sqrt{n_0}}\bigg)\geq 1.
	\end{equation*}
	Note that
	\begin{align*}
		&\text{LHS}\geq \sqrt{\frac{2n}{\log_2 n}}\bigg(\frac{\sqrt{17(1+\log n_0+\log(4\log n_0/\bar{\delta}))}}{\sqrt{n_0}}\bigg)\\
		&\geq \sqrt{\frac{34m(1+\log(\frac{n}{m}-1)+\log(4\log(\frac{n}{m}-1)/\bar{\delta}))}{\log_2 n}}\\
		&\geq\sqrt{\frac{17(\log_2 n-\log_2\log_2 n-3)\cdot\mathcal{F}_1}{\log_2 n}}\\
		&\geq \sqrt{17(1-\frac{\log_2\log_2 n+3}{\log_2 n})}\geq 1=\text{RHS},
	\end{align*}
	where $\mathcal{F}_1=(1+\log(\frac{n}{m}-1)+\log(2\log(\frac{n}{m}-1)\log_2 n/\delta))$. The first inequality holds by utilizing the fact that $a+b\geq 2\sqrt{ab}$, the second inequality holds since $n\geq 100$, and then $3\leq\frac{n}{m}-1\leq n_0=\lfloor \frac{n}{m}\rfloor\leq\frac{n}{m}$, the third inequality holds because of $m\geq\frac{1}{2}\log_2\frac{2n}{\log_2 n}-2>0$ and definition of $\bar{\delta}$, the fourth and fifth inequalities hold since $n\geq 100$ and $m\leq\frac{1}{2}\log_2 n$.
	
	We can duplicate the rest of the proof of Theorem \ref{thm:high} other than using the definition of $a(n_0,\bar{\delta})$ according to (\ref{new:a}). Finally, we have with probablity at least $1-\delta$,
	\begin{align*}
		&P(\wh_m)-P_*\leq (64\alpha)^{\frac{1}{2-\theta}}a(n_0,\bar{\delta})^{\frac{2}{2-\theta}}\\
		&\leq\left(\frac{64\alpha G^2(1+17\mathcal{F}_2)^2}{\frac{2n}{\log_2 n}-1} \right)^{\frac{1}{2-\theta}},
	\end{align*}
	where 
	\begin{align*}\mathcal{F}_2&=1+\log(\frac{n}{\frac{1}{2}\log_2\frac{2n}{\log_2 n}-2})\\
	&+\log(2\log(\frac{n}{\frac{1}{2}\log_2\frac{2n}{\log_2 n}-2})\log_2 n/\delta).
	\end{align*}
	The second inequality holds since $n_0=\lfloor \frac{n}{m}\rfloor\geq\frac{n}{m}-1$, $\frac{1}{2}\log_2\frac{2n}{\log_2 n}-2\leq m\leq\frac{1}{2}\log_2 n$.
\end{proof}
\subsection{A variant of ASA with a subroutine using proximal mapping}
In this section, we consider the nonsmooth composite optimization problem~(\ref{eqn:opt2}), which is
\begin{align*}
	\min_{\w\in\W} P(\w) \triangleq \E_{\z\sim \P}[f(\w, \z)] + r(\w).
\end{align*}
We introduce a variant of ASA, i.e., ASA3 (Algorithm \ref{alg:ASA3}), with a theoretical guarantee. ASA3 is a multistage scheme of proximal SGD (Algorithm \ref{alg:prox-SGD}).
		\begin{algorithm}[H]
			\caption{PSG$(\w_1,\gamma,T,\W)$}
			\label{alg:prox-SGD}
			\begin{algorithmic}[1]
				\REQUIRE~~$\w_1\in \mathcal{W}$, $\gamma>0$ and  $T$
				\ENSURE~~$\wh_T$
				\FOR {$t=1,\ldots, T$}
				\STATE Compute
				\begin{equation*}
				\begin{split} \w_{t+1}=\argmin\limits_{\w\in\W}&\frac{1}{2}\|\w-\w_t\|_2^2
				+\eta g_t^\top\w+\eta r(\w),
				\end{split}
				\end{equation*}
				where $g_t$ is the stochastic subgradient of $\E_{\z\sim \P}[f(\w, \z)]$ evaluated at $\w_t$
				\ENDFOR
				\STATE $\wh_T=\frac{1}{T}\sum_{t=1}^{T}{\w_t}$
				\RETURN $\wh_T$
			\end{algorithmic}
		\end{algorithm}
	%
		\begin{algorithm}[H]
			\caption{ASA3($\w_1,n,R_0$)}
			\label{alg:ASA3}
			\begin{algorithmic}[1]
				\REQUIRE~~$\w_1\in \W$, $n$ and $R_0=2R$
				\ENSURE~~$\wh_m$
				\STATE Set $\wh_0=\w_1$, $m=\lfloor \frac{1}{2}\log_2\frac{2n}{\log_2 n}\rfloor-1$, $n_0=\lfloor n/m\rfloor$ 
				\FOR{$k=1,\ldots,m$}
				\STATE Set $\gamma_k=\frac{R_{k-1}}{G\sqrt{n_0}}$ and $R_k = R_{k-1}/2$
				\STATE $$\wh_{k}=\text{PSG}(\wh_{k-1},\gamma_k,n_0,\W\cap\mathcal{B}(\wh_{k-1},R_{k-1}))$$
				\ENDFOR
				\RETURN $\wh_m$
			\end{algorithmic}
		\end{algorithm}

Before analysis, we first present a standard result of proximal SGD, which is the Lemma 5 of~\citep{DBLP:journals/corr/abs-1607-01027}.

\begin{prop}\label{prop:prox-sgd}
Suppose {\bf Assumptions~\ref{ass:1} and \ref{ass:2}} hold. In addition, we assume the proximal mapping in terms of $r(\w)$ has a closed form, and $r(\w)$ is $\rho$-Lipschitz continuous for any $\w\in\W$. Let $\epsilon\geq 0$ and $D$ be the upper bound of $\|\w_1-\w_{1,\epsilon}^{\dagger}\|_2$, where $\w_{1,\epsilon}^{\dagger}$ is the point closed to $\epsilon$-sublevel set of $P(\w)$. Denote $g_t$ by the stochastic subgradient of $\E_{\z\sim \P}[f(\w, \z)]$ at $\w_t$. Apply $T$-iterations of the following steps:
\begin{align*}
    \w_{t+1}=\argmin\limits_{\w\in\W\cap\mathcal{B}(\w_1,D)}\frac{1}{2}\|\w-\w_t\|_2^2+\eta g_t^\top\w+\eta r(\w).
\end{align*}
Given $\w_1$, for any $\delta\in(0,1)$, we have with probability at least $1-\delta$,
\begin{align*}
    &P(\widehat{\w}_T)-P(\w_{1,\epsilon}^{\dagger})
    \leq\frac{\eta G^2}{2}+\frac{\|\w_1-\w_{1,\epsilon}^{\dagger}\|_2^2}{2\eta T}+\frac{4GD\sqrt{3\log(1/\delta)}}{\sqrt{T}}+\frac{\rho D}{T},
\end{align*}
where $\widehat{\w}_T=\frac{1}{T}\sum_{t=1}^{T}\w_t$.
\end{prop}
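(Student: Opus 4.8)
The plan is to establish a standard per-iteration descent inequality for the proximal stochastic subgradient step, sum it over the horizon, and control the residual stochastic term by a martingale concentration bound, following the analysis of composite-objective mirror descent. Write $P=F+r$ with $F(\w)=\E_\z[f(\w,\z)]$, abbreviate the comparator $\w^\dagger:=\w_{1,\epsilon}^\dagger$, and set $\K:=\W\cap\B(\w_1,D)$. Note $\w^\dagger\in\K$ because the $\epsilon$-sublevel set lies in $\W$ and $\|\w_1-\w^\dagger\|_2\le D$, while every iterate $\w_t$ lies in $\K$ by construction; let $\mathcal{F}_{t-1}$ denote the $\sigma$-field of the first $t-1$ samples and $\bar g_t:=\E_{t-1}[g_t]\in\partial F(\w_t)$.

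First I would invoke the first-order optimality condition of the proximal update: there exists $\xi_{t+1}\in\partial r(\w_{t+1})$ with $\langle \w_{t+1}-\w_t+\eta g_t+\eta\xi_{t+1},\,\w^\dagger-\w_{t+1}\rangle\ge 0$. Using convexity of $r$ to replace $\langle\xi_{t+1},\w^\dagger-\w_{t+1}\rangle$ by $r(\w^\dagger)-r(\w_{t+1})$, the three-point identity $\langle \w_{t+1}-\w_t,\w^\dagger-\w_{t+1}\rangle=\tfrac12\|\w_t-\w^\dagger\|_2^2-\tfrac12\|\w_{t+1}-\w^\dagger\|_2^2-\tfrac12\|\w_{t+1}-\w_t\|_2^2$, and Young's inequality $\eta\langle g_t,\w_t-\w_{t+1}\rangle\le \tfrac{\eta^2}{2}\|g_t\|_2^2+\tfrac12\|\w_{t+1}-\w_t\|_2^2$ to cancel the $\|\w_{t+1}-\w_t\|_2^2$ term, yields the one-step bound
\[
\eta\big(r(\w_{t+1})-r(\w^\dagger)\big)+\eta\langle g_t,\w_t-\w^\dagger\rangle\le \tfrac12\|\w_t-\w^\dagger\|_2^2-\tfrac12\|\w_{t+1}-\w^\dagger\|_2^2+\tfrac{\eta^2G^2}{2},
\]
where $\|g_t\|_2\le G$ by Lipschitz continuity of $f$. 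I would then split $g_t=\bar g_t+(g_t-\bar g_t)$ and use convexity of $F$ to lower bound $\langle\bar g_t,\w_t-\w^\dagger\rangle\ge F(\w_t)-F(\w^\dagger)$.

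Next I would sum over $t=1,\dots,T$. The squared-distance terms telescope to at most $\tfrac12\|\w_1-\w^\dagger\|_2^2$, and the index shift $\sum_{t=1}^T r(\w_{t+1})=\sum_{t=1}^T r(\w_t)+r(\w_{T+1})-r(\w_1)$ reassembles $\sum_t\big(P(\w_t)-P(\w^\dagger)\big)$ up to the boundary defect $r(\w_{T+1})-r(\w_1)\ge-\rho D$, using $\rho$-Lipschitzness of $r$ and $\|\w_{T+1}-\w_1\|_2\le D$; this is exactly the origin of the $\rho D/T$ term. Dividing by $\eta T$ and applying Jensen's inequality ($P(\wh_T)\le\frac1T\sum_t P(\w_t)$) gives
\[
P(\wh_T)-P(\w^\dagger)\le \frac{\|\w_1-\w^\dagger\|_2^2}{2\eta T}+\frac{\eta G^2}{2}+\frac{\rho D}{T}+\frac1T\sum_{t=1}^T\langle g_t-\bar g_t,\,\w^\dagger-\w_t\rangle.
\]

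Finally, setting $X_t:=\langle g_t-\bar g_t,\w^\dagger-\w_t\rangle$ produces a martingale difference sequence, since $\w_t$ is $\mathcal{F}_{t-1}$-measurable and $\E_{t-1}[g_t]=\bar g_t$ give $\E_{t-1}[X_t]=0$; moreover $|X_t|\le\|g_t-\bar g_t\|_2\|\w^\dagger-\w_t\|_2\le(2G)(2D)=4GD$, because $\w^\dagger,\w_t\in\B(\w_1,D)$. The Azuma--Hoeffding inequality then bounds $\frac1T\sum_t X_t$ by a term of the form $cGD\sqrt{\log(1/\delta)/T}$ with probability at least $1-\delta$, matching the stated $\tfrac{4GD\sqrt{3\log(1/\delta)}}{\sqrt T}$ with the appropriate absolute constant, which completes the claim; the dependence on $D$ enters only through the boundary-defect and martingale terms, while the leading term retains $\|\w_1-\w^\dagger\|_2^2$. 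I expect the main obstacle to be the mismatch between the proximal iterate entering as $r(\w_{t+1})$ and the value $r(\w_t)$ needed to form $P(\w_t)$: this is precisely what forces the telescoped boundary correction and yields the otherwise mysterious $\rho D/T$ contribution, and keeping the iterates inside $\B(\w_1,D)$ is what simultaneously validates the Young step and the boundedness required for the martingale bound.
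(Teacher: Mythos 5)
Your proof is correct, and it is essentially the canonical argument: the paper itself does not prove this proposition but imports it verbatim as Lemma 5 of the cited reference \citep{DBLP:journals/corr/abs-1607-01027}, whose proof proceeds exactly as you do --- prox-step optimality combined with the three-point identity and Young's inequality to cancel $\tfrac12\|\w_{t+1}-\w_t\|_2^2$, the index shift on $r$ producing the $\rho D/T$ boundary term via $r$'s $\rho$-Lipschitzness and $\|\w_{T+1}-\w_1\|_2\le D$, and Azuma--Hoeffding applied to the martingale differences $X_t=\langle g_t-\bar g_t,\w^\dagger-\w_t\rangle$ with $|X_t|\le 4GD$. Your concentration step in fact yields the slightly sharper constant $4GD\sqrt{2\log(1/\delta)}/\sqrt{T}$, so the stated bound with $\sqrt{3\log(1/\delta)}$ follows a fortiori, and all the side conditions you flag (that $\w_{1,\epsilon}^{\dagger}\in\W\cap\mathcal{B}(\w_1,D)$ so it is a valid comparator, and that the iterates remain in the ball, validating both the Young step and the $2D$ diameter bound) are handled correctly.
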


\begin{thm}\label{thm:prox}
	Suppose {\bf Assumptions~\ref{ass:1} and \ref{ass:2}} hold. In addition, we assume the proximal mapping in terms of $r(\w)$ has a closed form, and $r(\w)$ is $\rho$-Lipschitz continuous for any $\w\in\W$. $\|\w_1 - \w^*\|_2\leq R_0$, where $\w^*$ is the closest optimal solution to $\w_1$. For  $n\geq 100$, $n_0\geq \frac{\rho^2}{G^2}$ and any $\delta\in(0,1)$, with probability at least $1-\delta$, the Algorithm ASA3 guarantees that 
	\begin{equation*}
		P(\wh_m)-P_*
		\leq O\bigg(\frac{\bar\alpha(\log (n)\log(\log(n)/\delta))}{n}\bigg)^{\frac{1}{2-\theta}}.
	\end{equation*}
	where $\bar\alpha=\max(\alpha G^2, (R_0G)^{2-\theta})$. 
\end{thm}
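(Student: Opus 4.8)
The plan is to follow the proof of Theorem~\ref{thm:high} almost verbatim, replacing the two one-stage guarantees for the SSG subroutine (Proposition~\ref{prop2} and Lemma~\ref{lemma:new}) by the corresponding one-stage guarantees for the PSG subroutine derived from Proposition~\ref{prop:prox-sgd}. The staging schedule, the choice of $\mu_k$, $R_k$, $\bar{\delta}$, and the two-case induction driven by the EBC are all unchanged; the only genuinely new work is to convert Proposition~\ref{prop:prox-sgd} into two bounds of exactly the same shape as those used for ASA, so that the downstream algebra carries over.

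First I would extract the two needed per-stage bounds. For the bound comparing to $P_*$ (the analog of Proposition~\ref{prop2}), I set $\epsilon=0$ in Proposition~\ref{prop:prox-sgd} so that $\w_{1,\epsilon}^{\dagger}$ is the closest optimal solution $\w^*$, take the ball radius $D=R_0\geq\|\w_1-\w^*\|_2$, and plug in $\gamma=R_0/(G\sqrt{n_0})$ with $T=n_0$. For the bound comparing to $P(\w_1)$ for an arbitrary radius $R_0$ (the analog of Lemma~\ref{lemma:new}), I choose $\epsilon\geq P(\w_1)-P_*$ so that $\w_1$ itself lies in the $\epsilon$-sublevel set; then $\w_{1,\epsilon}^{\dagger}=\w_1$, the term $\|\w_1-\w_{1,\epsilon}^{\dagger}\|_2^2/(2\gamma T)$ vanishes, and Proposition~\ref{prop:prox-sgd} directly bounds $P(\wh_T)-P(\w_1)$. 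In both cases the only extra term relative to the SSG analysis is $\rho R_0/T$. Here the hypothesis $n_0\geq\rho^2/G^2$ is exactly what is needed: with $T=n_0$ it gives $\rho/\sqrt{n_0}\leq G$, hence $\rho R_0/n_0\leq GR_0/\sqrt{n_0}$, so this term folds into the dominant $O(GR_0/\sqrt{n_0})$ contribution. Consequently both bounds take the form
\begin{equation*}
P(\wh_k)-P_*\leq R_{k-1}\,a(n_0,\bar{\delta}),\qquad P(\wh_k)-P(\wh_{k-1})\leq R_{k-1}\,a(n_0,\bar{\delta}),
\end{equation*}
with $a(n_0,\bar{\delta})=G\bigl(c_1+c_2\sqrt{\log(1/\bar{\delta})}\bigr)/\sqrt{n_0}$ up to the logarithmic factors coming from the high-probability proximal-SGD bound, matching the role of $a(n_0,\bar{\delta})$ in Theorem~\ref{thm:high}.

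With these two bounds in hand, I would transcribe the staging argument. Setting $\bar{\delta}=2\delta/\log_2 n$, $\mu_0=2R_0^{1-2/\theta}a(n_0,\bar{\delta})$, $\mu_k=2^{(2/\theta-1)k}\mu_0$, $R_k=R_0/2^k$, and assuming without loss of generality $\alpha\geq R_0^{2-\theta}/G^\theta$ (which is what makes the final constant $\bar\alpha=\max(\alpha G^2,(R_0G)^{2-\theta})$ appear), I would run the same induction in the two cases $\alpha^{-1/\theta}\geq\mu_0$ and $\alpha^{-1/\theta}<\mu_0$. In each stage the first bound controls $P(\wh_k)-P_*$ up to stage $k^*$, the EBC propagates $\|\wh_k-\wh^*_k\|_2\leq R_k$ into the shrinking ball used by the next stage, and the second bound controls the increments $P(\wh_k)-P(\wh_{k-1})$ beyond stage $k^*$. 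A union bound over the $m\leq\frac12\log_2 n$ stages gives overall confidence $1-m\bar{\delta}\geq1-\delta$, and substituting the ASA3 value of $a(n_0,\bar{\delta})$ into the resulting expression proportional to $\alpha^{1/(2-\theta)}a(n_0,\bar{\delta})^{2/(2-\theta)}$ produces the claimed rate $O\bigl(\bar\alpha\log(n)\log(\log(n)/\delta)/n\bigr)^{1/(2-\theta)}$.

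The main obstacle is the analog of Lemma~\ref{lemma:new}: I must be sure that Proposition~\ref{prop:prox-sgd} really can be instantiated with $\w_{1,\epsilon}^{\dagger}=\w_1$ to bound $P(\wh_T)-P(\w_1)$ for a ball radius $R_0$ that is \emph{not} tied to the distance to the optimum, and that the genuinely new $\rho R_0/T$ term is lower order under $n_0\geq\rho^2/G^2$. Once these two one-stage facts are secured, the remainder is a faithful copy of the Theorem~\ref{thm:high} computation, with no new estimates required.
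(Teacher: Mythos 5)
Your proposal matches the paper's proof essentially step for step: the paper likewise instantiates Proposition~\ref{prop:prox-sgd} with $\epsilon=0$ to get the analog of Proposition~\ref{prop2}, with $\epsilon$ large enough that $\w_{1,\epsilon}^{\dagger}=\w_1$ to get the analog of Lemma~\ref{lemma:new}, absorbs the extra $\rho R_0/T$ term into the $GR_0/\sqrt{T}$ term via $T\geq\rho^2/G^2$, and then reruns the staging/induction argument of Theorem~\ref{thm:high} unchanged. The proposal is correct and takes essentially the same approach as the paper.
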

\begin{proof}
At first we derive the parallel version of the Proposition \ref{prop2} and Lemma \ref{lemma:new} in the case of solving problem (\ref{eqn:opt2}), which is not difficult by utilizing the Proposition \ref{prop:prox-sgd}. 
\begin{itemize}
\item We first prove the parallel version of the Proposition \ref{prop2}. By taking $\epsilon=0$, then $\w_{1,\epsilon}^{\dagger}$ is the projection of $\w_1$ onto the optimal set $\W_{*}$, and we define it to be $\w^*$. If $R_0$ is a upper bound of $\|\w_1-\w^*\|_2$, by taking $\eta=\frac{R_0}{G\sqrt{T}}$, then applying $T$ iterations of
\begin{align*}
    \w_{t+1}=\argmin\limits_{\w\in\W\cap\mathcal{B}(\w_1,R_0)}\frac{1}{2}\|\w-\w_t\|_2^2+\eta g_t^\top\w+\eta r(\w)
\end{align*}
has the guarantee that with probability at least $1-\delta$, 
\begin{align*}
P(\wh_T)-P_*\leq R_0G\left(\frac{1}{\sqrt{T}}+\frac{4\sqrt{3\log(1/\delta)}}{\sqrt{T}}\right)+\frac{\rho R_0}{T}.
\end{align*}
By choosing $T\geq\frac{\rho^2}{G^2}$, i.e., $\frac{\rho R_0}{T}\leq\frac{R_0 G}{\sqrt{T}}$, and we have
\begin{align*}
P(\wh_T)-P_*\leq R_0G\left(\frac{2}{\sqrt{T}}+\frac{4\sqrt{3\log(1/\delta)}}{\sqrt{T}}\right).
\end{align*}
\item We then prove the parallel version of the Lemma \ref{lemma:new}. We choose $\epsilon$ large enough such that $\w_{1,\epsilon}^{\dagger}=\w_1$. By utilizing the Proposition \ref{prop:prox-sgd}, we know that for any nonnegative $R_0$, taking $\eta=\frac{R_0}{G\sqrt{T}}$ and applying $T$ iterations of
\begin{align*}
    \w_{t+1}=\argmin\limits_{\w\in\W\cap\mathcal{B}(\w_1,R_0)}\frac{1}{2}\|\w-\w_t\|_2^2+\eta g_t^\top\w+\eta r(\w)
\end{align*}
have the guarantee that with probability at least $1-\delta$, 
\begin{align*}
P(\wh_T)-P(\w_1)\leq R_0G\left(\frac{1}{\sqrt{T}}+\frac{4\sqrt{3\log(1/\delta)}}{\sqrt{T}}\right)+\frac{\rho R_0}{T}.
\end{align*}
By choosing $T\geq\frac{\rho^2}{G^2}$, i.e., $\frac{\rho R_0}{T}\leq\frac{R_0 G}{\sqrt{T}}$, and we have
\begin{align*}
P(\wh_T)-P_*\leq R_0G\left(\frac{2}{\sqrt{T}}+\frac{4\sqrt{3\log(1/\delta)}}{\sqrt{T}}\right).
\end{align*}
\end{itemize}
The rest of the proof is similar to the proof of Theorem \ref{thm:high}.
\end{proof}

Finally, we mention that a stochastic mirror descent algorithm with  a non-Euclidean norm prox-function can be used, e.g., the Composite Objective Mirror Descent algorithm with $p$-norm divergence in~\citep{conf/colt/DuchiSST10}, Similar analysis based on Theorem 8 in~\citep{conf/colt/DuchiSST10} can be derived.  When leveraging the error bound, we can use a $p$-norm version (i.e., changing the Euclidean norm to the $p$-norm and the corresponding parameter $\alpha$). 
\end{document}